\let\mypdfximage\pdfximage
\def\pdfximage{\immediate\mypdfximage}
\let\classAND\AND
\let\AND\relax
\let\AND\classAND
\newtheorem{prop}{Proposition}
\renewenvironment{proof}{{\bfseries Proof sketch.}}{\qedsymbol}
\crefname{section}{Sec.}{Secs.}
\crefname{proposition}{Prop.}{Props.}
\crefname{lemma}{Lem.}{Lems.}
\crefname{model}{Mod.}{Mods.}
\crefname{appendix}{App.}{Apps.}
\crefname{algorithm}{Alg.}{Algs.}
\crefname{prop}{Prop.}{Props.}
\newcommand{\eg}{\textit{e.g.}\xspace}
\newcommand{\ie}{\textit{i.e.}\xspace}
\newcommand{\cf}{\textit{cf.}\xspace}
\pgfplotsset{compat=newest} 
\newlength{\figurewidth}
\newlength{\figureheight}
\definecolor{primarycolor}{HTML}{2C6EBA}
\definecolor{secondarycolor}{HTML}{7CAC56}
\renewcommand{\mid}[0]{\,|\,}
\renewcommand{\paragraph}[1]{\textbf{#1}~~}
\def\eg{\textit{e.g.}\xspace}
\def\ie{\textit{i.e.}\xspace}
\def\cf{\textit{cf.}\xspace}
\newcommand{\mathbold}[1]{\bm{#1}}
\newcommand{\mbf}[1]{\mathbf{#1}}
\newcommand{\T}{\top}    %
\newcommand{\dd}{\,\mathrm{d}} %
\newcommand{\R}{\mathbb{R}}    %
\newcommand{\N}{\mathrm{N}}   %
\DeclareMathOperator{\E}{\mathbb{E}}
\DeclareMathOperator{\Var}{Var}
\DeclareMathOperator*{\argmin}{arg\,min}
\newcommand{\KL}[2]{\mathrm{D}_\text{KL}\left[#1\,\|\,#2\right]}
\newcommand{\vzero}{\bm{0}}
\DeclareMathOperator{\diam}{diam}
\newcommand{\vbeta}[0]{\mathbold{\beta}}
\newcommand{\vxi}[0]{\mathbold{\xi}}
\renewcommand{\mid}[0]{\,|\,}
\newcommand{\vp}{\mbf{p}}
\newcommand{\vx}{\mbf{x}}
\newcommand{\vy}{\mbf{y}}
\newcommand{\vz}{\mbf{z}}
\newcommand{\MC}{\mbf{C}}
\newcommand{\MI}{\mbf{I}}
\newcommand{\MP}{\mbf{P}}
\newcommand{\MT}{\mbf{T}}
\newcommand{\MX}{\mbf{X}}
\def\pmeasure{\mathbb{P}}
\def\qmeasure{\mathbb{Q}}
\def\probmeasures{\mathscr{P}}
\def\data{\mathcal{D}}
\def\ths{\textsuperscript{th}\xspace}
\tikzset{cross/.style={cross out, draw=black, minimum size=2*(#1-\pgflinewidth), inner sep=0pt, outer sep=0pt, line width=1pt}, cross/.default={3pt}}
\def\Benes{Bene\v{s}\xspace}
\newcommand\uglyepsilon\epsilon
\renewcommand\epsilon\varepsilon
\newcommand*\circled[1]{\tikz[baseline=(char.base)]{\node[shape=circle,fill=black!10,inner sep=1.2pt] (char) {\textcolor{black} #1};}}
\renewcommand{\algorithmiccomment}[1]{$\textcolor{gray}{/\ast}$ \hfill \textcolor{gray}{#1} \hfill $\textcolor{gray}{\ast/}$}
\title{Transport with Support: Data-Conditional Diffusion Bridges}
\author{\name Ella Tamir \email ella.tamir@aalto.fi \\
      \addr Department of Computer Science\\
      Aalto University
      \AND
      \name Martin Trapp \email martin.trapp@aalto.fi \\
      \addr Department of Computer Science\\
      Aalto University
      \AND
      \name Arno Solin \email arno.solin@aalto.fi\\
      \addr Department of Computer Science\\
      Aalto University}
\begin{document}

\maketitle

\begin{abstract}
The dynamic Schrödinger bridge problem provides an appealing setting for solving constrained time-series data generation tasks posed as optimal transport problems. It consists of learning non-linear diffusion processes using efficient iterative solvers.
Recent works have demonstrated state-of-the-art results (\eg, in modelling single-cell embryo RNA sequences or sampling from complex posteriors) but are limited to learning bridges with only initial and terminal constraints.
Our work extends this paradigm by proposing the Iterative Smoothing Bridge (ISB). 
We integrate Bayesian filtering and optimal control into learning the diffusion process, enabling the generation of constrained stochastic processes governed by sparse observations at intermediate stages and terminal constraints. 
We assess the effectiveness of our method on synthetic and real-world data generation tasks and we show that the ISB generalises well to high-dimensional data, is computationally efficient, and provides accurate estimates of the marginals at intermediate and terminal times. 
\end{abstract}

\section{Introduction}

Generative diffusion models have gained increasing popularity and achieved impressive results in a variety of challenging application domains, such as computer vision \citep[\eg, ][]{ho2020denoise,song2021maximum,dhariwal2021diffusion}, reinforcement learning \citep[\eg, ][]{Janner2022planning}, and time series modelling \citep[\eg, ][]{Rasul2021timeseries,vargas2021solving,tashiro2021csdi,Park2022sde}.
Recent works have explored connections between denoising diffusion models and the dynamic Schrödinger bridge problem \citep[SBP,~\eg,][]{vargas2021solving, bortoli2021diffusion, Shi2022condSchroedinger} to adopt iterative schemes for solving the dynamic optimal transport problem more efficiently.
The solution of the SBP then acts as a denoising diffusion model in finite time and is the closest in Kullback--Leibler (KL) divergence to the forward noising process of the noising model under marginal constraints.
Data may then be generated by time reversal of the process, \ie, through the denoising process.

In many applications, the interest is not purely in modelling transport between an initial and terminal state distribution.
For example, in naturally occurring generative processes, we typically observe snapshots of realizations {\em along intermediate stages} of individual sample trajectories (see \cref{fig:teaser}). Such problems arise in medical diagnosis (\eg, tissue changes and cell growth), demographic modelling, environmental dynamics, and animal movement modelling---see \cref{fig:birds} for modelling bird migration and wintering patterns. %
Recently, constrained optimal control problems have been explored by adding additional fixed path constraints~\citep{maoutsa2020interacting, maoutsa2021deterministic} or modifying the prior processes~\citep{fernandes2021shooting}. However, defining meaningful fixed path constraints or prior processes for the optimal control problems can be challenging, while sparse observational data are accessible in many real-world applications.

\begin{figure*}[t]
  \centering\scriptsize
  \begin{subfigure}[b]{.48\linewidth}
  \centering
  \begin{tikzpicture}
    \node[anchor=south west,inner sep=0] (image) at (0,0) {\includegraphics[width=\linewidth,trim=0 150 0 100,clip]{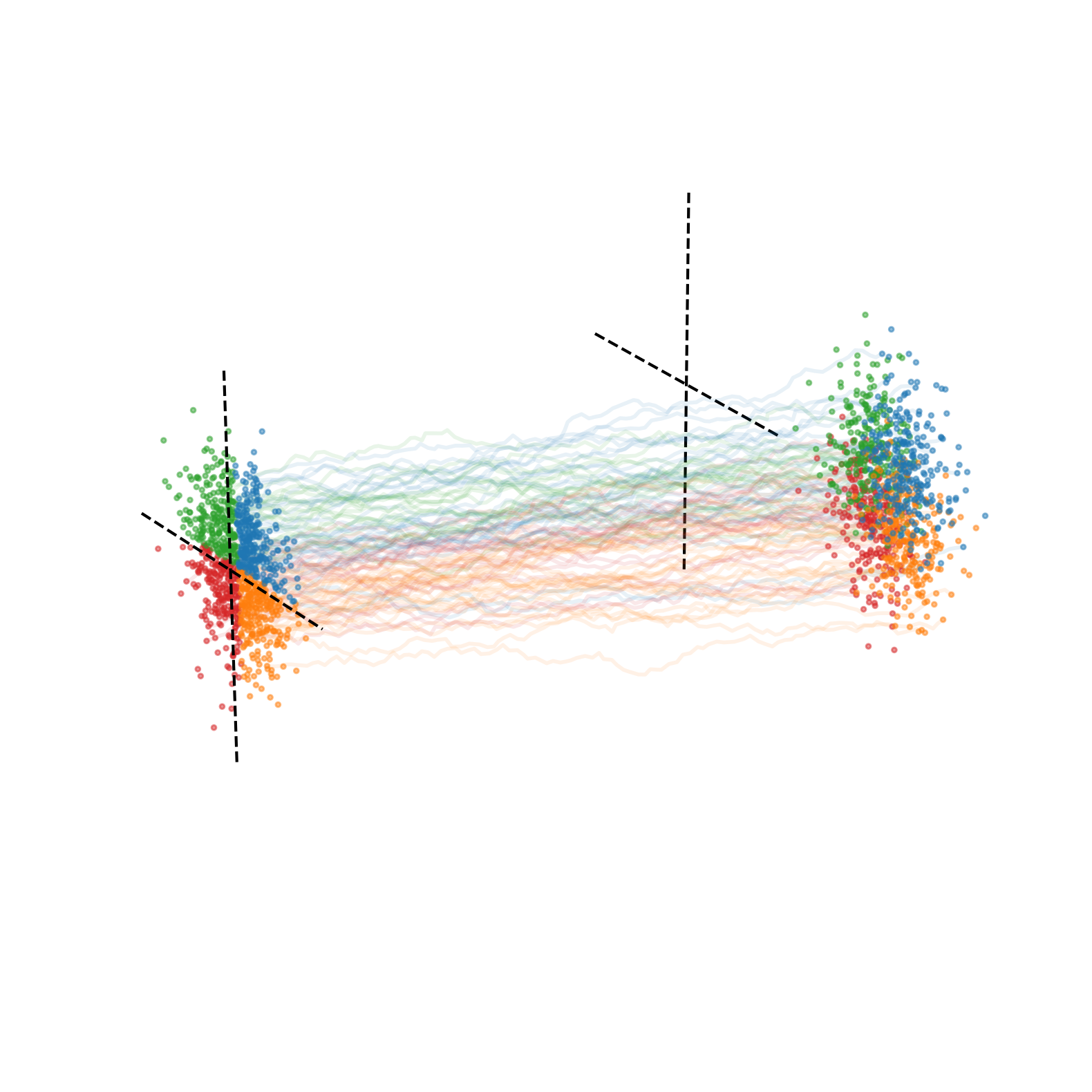}};
    \begin{scope}[x={(image.south east)},y={(image.north west)}]
        \draw[black,-latex'] (.22,.7) to[bend left=10] node[rotate=25,yshift=1em] {Time, $t$} (.62,1);      
        \node at (.22,0) {$\pi_0 \sim \mathrm{N}(\vzero,\MI)$};
        \node at (.77,.1) {$\pi_T \sim \mathrm{N}((10,0)^\T,\MI)$};
    \end{scope}
  \end{tikzpicture}
  \caption{Unconstrained transport \\ (Schrödinger bridge)}
  \label{fig:gaussplain}
  \end{subfigure}
  \hfill
  \begin{subfigure}[b]{.48\linewidth}
  \centering
  \captionsetup{justification=centering}
  \begin{tikzpicture}
    \node[anchor=south west,inner sep=0] (image) at (0,0) {\includegraphics[width=\linewidth,trim=0 150 0 100,clip]{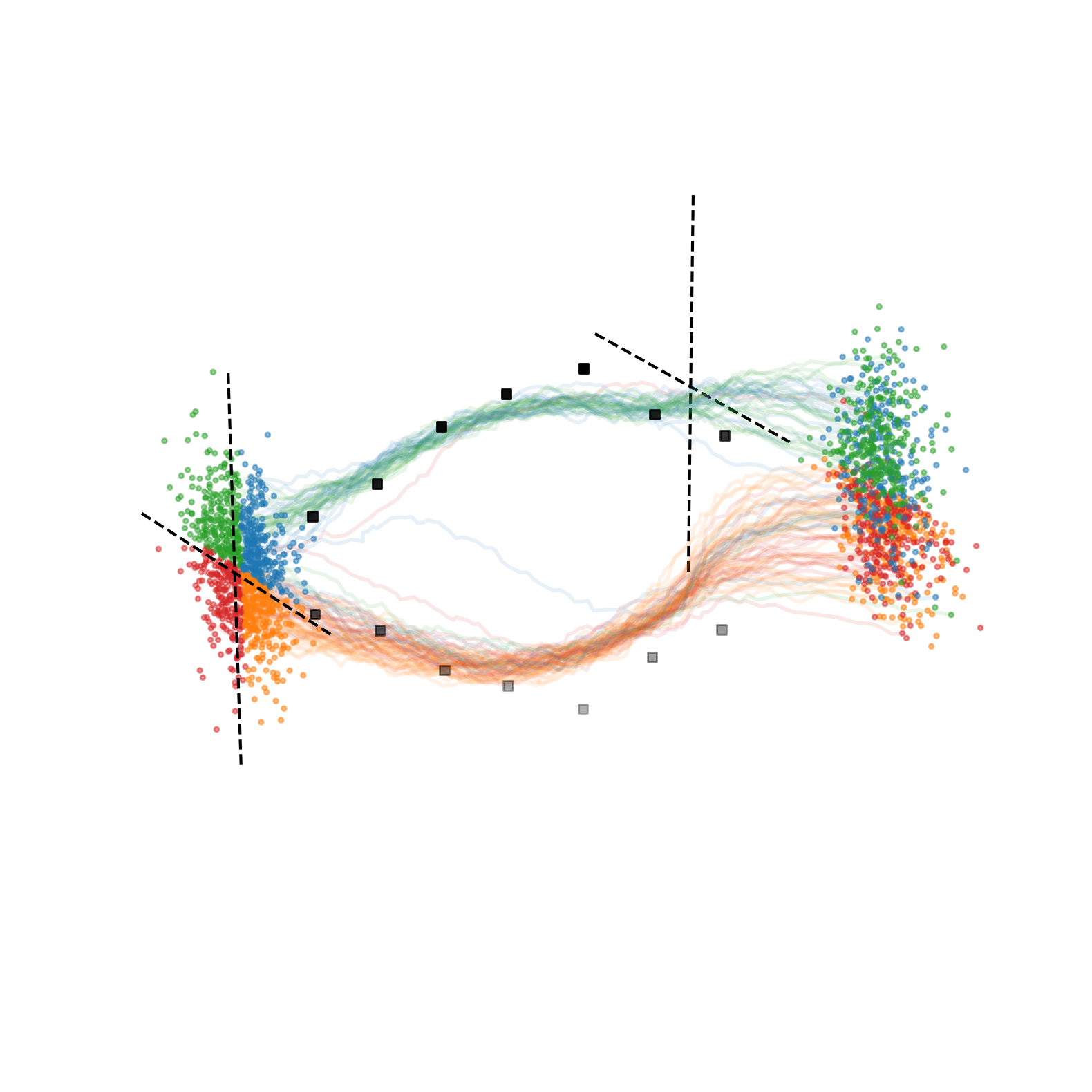}};
    \begin{scope}[x={(image.south east)},y={(image.north west)}]
        \node (a) at (.2,.9) {Sparse observations};
        \draw (a) -| (.405,.66);
        \node at (.22,0) {$\pi_0 \sim \mathrm{N}(\vzero,\MI)$};
        \node at (.77,.1) {$\pi_T \sim \mathrm{N}((10,0)^\T,\MI)$};
    \end{scope}
  \end{tikzpicture}
  \caption{Constrained transport \\ 
  (Iterative smoothing bridge)}
  \label{fig:gaussobs}
  \end{subfigure}
  \caption{Illustrative example transport between an initial unit Gaussian and a shifted unit Gaussian at the terminal time $T$. Unconstrained transport on the left and the solution constrained by sparse observations (\protect\tikz\protect\node[inner sep=2pt,fill=black!80]{};) on the right. Colour coding of the initial points is only for distinguishing the paths.}
  \label{fig:teaser}
\end{figure*}

In this work, we propose the {\em Iterative Smoothing Bridge} (ISB), an iterative method for learning a dynamical system for time-series data generation under constraints on both the initial and terminal distribution and sparse observational data constraints. The sparse observational constraints act as a way to encourage the paths sampled from the transport process to lie close to the observed data points. We perform the conditioning by leveraging the iterative pass idea from the Iterative Proportional Fitting procedure  \citep[IPFP, see][]{kullback1968probability,bortoli2021diffusion} and applying differentiable particle filtering \citep{reich2013nonparametric, corenflos2021diffpf} within the outer loop. 
Integrating sequential Monte Carlo methods \citep[\eg, ][]{doucet2001sequential,chopin2020introduction} into the IPFP framework in such a way is non-trivial and can be understood as a novel iterative version of the algorithm by \citet{maoutsa2021deterministic} but with more general marginal constraints and additional path constraints defined by data.\looseness-1

We summarize the contributions as follows.
{\em (i)}~We propose a novel method for learning a dynamical model where the terminal constraints match a bridge problem and additional constraints are placed in form of sparse observations, inspired by optimal transport approaches.
{\em (ii)}~Thereof, we utilize the strong connections between the constrained bridging problem and particle filtering in sequential Monte Carlo, extending those links from pure inference to learning.
Additionally, {\em (iii)}~we demonstrate practical efficiency and show that the iterative smoothing bridge approach scales to high-dimensional data. \looseness-1

\subsection{Related Work}
\paragraph{Schrödinger bridges}
The problem of learning a stochastic process moving samples from one distribution to another can be posed as a type of transport problem known as a dynamic Schrödinger bridge problem \citep[SBP, \eg,][]{schrod1932surla, leonard2014schrodi}, where the marginal densities of the stochastic process are desired to resemble a given reference measure. In machine learning literature, the problem has been studied through learning the drift function of the dynamical system \citep{bortoli2021diffusion,wang2021DeepGL,vargas2021solving, bunne2022recovering, shi2023diffusion}. When an SDE system also defines the reference measure, the bridge problem becomes a constrained optimal control problem \citep[\eg,][]{caluya2021wasserstein, caluya2021constraint, chen2021stochastic, liu2022deep}, which has been leveraged in learning Schrödinger bridges by \citet{chen2022likelihood} through forward--backward SDEs. Moreover, neural stochastic control has been studied in \citet{zhang2022neural}. An optimal control problem with both initial and terminal distribution constraints and a fixed path constraint has been studied in \citet{maoutsa2020interacting} and \citet{maoutsa2021deterministic}, where particle filtering is applied to continuous path constraints but the boundary constraints are defined by a single point. \citet{maoutsa2023geometric} studies a problem setting where stochastic dynamics are inferred based on sparse observation, but with geometric constraints rather than in the form of a Schrödinger bridge problem. 

\paragraph{Diffusion models in machine learning}
The recent advances in diffusion models in machine learning literature have been focused on generating samples from complex distributions defined by data through transforming samples from an easy-to-sample distribution by a dynamical system \citep[\eg,][]{ho2020denoise, song2020sdegen, song2021maximum, nichol2021ImprovedDD}.
The concept of reversing SDE trajectories via score-based learning \citep{hyvarinen2005estimation,vincent2011connection} has allowed for models scalable enough to be applied to high-dimensional data sets directly in the data space.
In earlier work, score-based diffusion models have been applied to problems where the dynamical system itself is of interest, for example, for the problem of time series amputation in \citet{tashiro2021csdi}, inverse problems in imaging in \citet{song2022solving} and for importance sampling \citet{doucet2022scorebased}. Interpreting the diffusion modelling problem as optimal control has recently been studied in \citet{berner2022an}. Other dynamical models parametrized by neural networks have been applied to modelling latent time-series based on observed snapshots of dynamics \citep{rubanova2019latentode, li2020sdeadjoint}, but without further constraints on the initial or terminal distributions. \looseness-1

\paragraph{State-space models}
In their general form, state-space models combine a latent space dynamical system with an observation (likelihood) model. Evaluating the latent state distribution based on observational data can be performed by applying particle filtering and smoothing \citep{doucet2000smc} or by approximations of the underlying state distribution of a non-linear state-space model by a specific model family, for instance, a Gaussian \citep[see][for an overview]{sarkka2013bayesian}. Speeding up parameter inference and learning in state-space models has been widely studied \citep[\eg,][]{schon2011system,svensson2017flexible,kokkala2014em}. 
Particle smoothing can be connected to Schr\"odinger bridges via the two-filter smoother \citep[\eg,][]{bresler1986twofilter,  briers2009smoothing, hostettler2015twofilter}, where the smoothing distribution is estimated by performing filtering both forward from the initial constraint and backwards from the terminal constraint. We refer to \citet{mitter1996filtering} and \citet{todorov2008general} for a more detailed discussion on the connection of stochastic control and filtering and to \citet{chopin2020introduction} for an introduction to particle filters. 

\paragraph{Data Assimilation methods}
Data assimilation (DA) methods leverage techniques from state-space literature to `assimilate' observations into a mechanistic model in order to inform the model dynamics based on measurements \citep[\eg,][]{asch2016data,wang2000data}.
Approaches based on DA have found wide-spread use in scientific applications and have been extended to incorporate sparse observational data, for example, in numerical weather predictions \citep{whitaker2009comparison}, modelling cell state evolution in epithelial-mesenchymal transitions \citep{mendez2020cell}, or in oceanographic scenarios \citep{beiser2023comparison}.
A crucial difference to our work is that DA relies on a precise mechanistic model while our approach is data-driven, providing additional flexibility in modelling scenarios where formulating precise model dynamics is not possible.

\section{Background}
\label{sec:background}
Let $\mathcal{C} = C([0,T], \R^d)$
denote the space of continuous functions from $[0,T]$ to $\R^d$ and let $\mathcal{B}(\mathcal{C})$ denote the Borel $\sigma$-algebra on $\mathcal{C}$.
Let $\probmeasures(\pi_0, \pi_T)$ denote the space of probability measures on $(\mathcal{C}, \mathcal{B}(\mathcal{C}))$ such that the marginals at $0,T$ coincide with probability densities $\pi_0$ and $\pi_T$, respectively. The KL divergence from measure $\qmeasure$ to measure $\pmeasure$ is written as $\KL{\qmeasure}{\pmeasure}$, where we assume that $\qmeasure \ll \pmeasure$.
For modelling the time dynamics, we assume a (continuous-time) state-space model consisting of a non-linear latent It\^o SDE \citep[see, \eg,][]{Oksendal:2003,Sarkka+Solin:2019} in $[0,T] \times \R^d$ with drift function $f_\theta(\cdot)$ and diffusion function $g(\cdot)$, and a Gaussian observation model, \ie,
\begin{equation}\label{eq:solvefamily}
\vx_0 \sim \pi_0, \quad 
\dd\vx_t = f_{\theta}(\vx_t, t)\dd t + g(t) \dd \vbeta_t, %
\end{equation}
and $\vy_{k} \sim \N(\vy_{k}  \mid \vx_{t}, \sigma^2\, \MI_d) \, \big|_{t=t_k}$
where the drift function $f_{\theta}: \mathbb{R}^d \times [0,T] \to \mathbb{R}^d$ is a mapping modelled by a neural network (NN) parameterized by $\theta \in \Theta$, diffusion $g: [0, T] \to \mathbb{R}$ and $\vbeta_t$ denotes standard $d$-dimensional Brownian motion.  
$\vx_t$ denotes the latent stochastic process and $\vy_t$ denotes the observation-space process. In practice, we consider the continuous-discrete time setting, where the process is observed at discrete time instances $t_k$ such that observational data can be given in terms of a collection of input--output pairs $\{(t_j,\vy_{j})\}_{j=1}^M$.\looseness-1

\subsection{Schrödinger Bridges and Optimal Control}
\label{sec:schrodi}
The Schrödinger bridge problem \citep[SBP,][]{schrod1932surla, leonard2014schrodi} is 
an entropy-regularized optimal transport problem where the optimality is measured through the KL divergence from a reference measure $\pmeasure$ to the posterior $\qmeasure$, with fixed initial and final densities $\pi_0$ and $\pi_T$, \ie, 
\begin{equation}\label{eq:schrod}
    \min_{\qmeasure \in \probmeasures(\pi_0, \pi_T)} \KL{\qmeasure}{\pmeasure} .
\end{equation}

In this work, we consider only the case where the measures $\pmeasure$ and $\qmeasure$ are constructed as the marginals of an SDE, \ie, 
$\mathbb{Q}_t$ is the probability measure of the marginal of the SDE in \cref{eq:solvefamily}
at time $t$, whereas $\mathbb{P}_t$ corresponds to the probability measure of the marginal of a reference SDE
$\dd \vx_t = f(\vx_t, t)\dd t + g(t)\dd\vbeta_t$,
at time $t$, where we call $f$ the reference drift.
Under the optimal control formulation of the SBP \citep{caluya2021constraint} the KL divergence in \cref{eq:schrod} reduces to
\begin{equation}
\E \bigg [ \int_{0}^{T}\frac{1}{2g(t)^2} \|f_{\theta}(\vx_t, t) - f(\vx_t, t) \|^2 \dd t \bigg ],
\end{equation}
where the expectation is over paths from \cref{eq:solvefamily}.
\citet{ruschendorf1993note} and \citet{ruschendorf1995convergence} showed that a solution to the SBP can be obtained by iteratively solving two half-bridge problems using the Iterative Proportional Fitting procedure (IPFP) for $l=0, 1, \ldots, L$ steps,
\begin{align}
    \mathbb{Q}_{2l+1} &= \argmin_{\qmeasure \in \probmeasures(\cdot, \pi_T)} \KL{\qmeasure}{\mathbb{Q}_{2l}} \quad \text{and} & \quad 
    \mathbb{Q}_{2l+2} &= \argmin_{\qmeasure \in \probmeasures(\pi_0, \cdot)} \KL{\qmeasure}{\mathbb{Q}_{2l+1}},
\end{align}
where $\mathbb{Q}_0$ is set as the reference measure, and $\probmeasures(\pi_0, \cdot)$ and $\probmeasures(\cdot, \pi_T)$ denote the sets of probability measures with only either the marginal at time $0$ or time $T$ coinciding with $\pi_0$ or $\pi_T$, respectively. {Recently, the IPFP to solving Schrödinger bridges has been adapted as a machine learning problem \citep{bernton2019schrodinger, vargas2021solving, bortoli2021diffusion}.}
In practice, the interval $[0, T]$ is discretized and the forward drift $f_{\theta}$ and the backward drift $b_{\phi}$ of the corresponding reverse-time process \citep{haussmann1986time, follmer1988random} are modelled by NNs. Under the Gaussian transition approximations, each step in the discrete-time diffusion model can be reversed by applying an objective based on mean-matching.

\section{Methods}
\label{sec:methods}
Given an initial and terminal distribution $\pi_0$ and $\pi_T$, we are interested in learning a data-conditional bridge between $\pi_0$ and $\pi_T$. 
Let $\data = \{(t_j, \vy_j) \}_{j=1}^{M}$ be a set of $M$ sparsely observed values, \ie, only a few or no observations are made at each point in time and let the state-space model of interest be given by \cref{eq:solvefamily}.
Note that we deliberately use $(t_j, \vy_j)$ (instead of $(t_k, \vy_k)$) to highlight that we allow for multiple observations at the same time point $t_k$.
Our aim is to find a parameterization of the drift function $f_\theta$ such that evolving $N$ particles $\vx^i_t$, with $\vx^i_0 \sim \pi_0$ (with $i = 1,2, \ldots, N$),  according to \cref{eq:solvefamily} will result in samples $\vx_T^i$ from the terminal distribution $\pi_T$.
Inspired by the IPFP by \citet{bortoli2021diffusion}, which decomposes the SBP into finding two half-bridges, we propose to iteratively solve two modified half-bridge problems where the additional sparse observations are accounted for simultaneously. For this, let \looseness-1
\begin{align}
  \dd \vx_t &= f_{l, \theta}(\vx_{t}, t)\dd t + g(t) \dd \vbeta_t, \quad &&\vx_0 \sim \pi_0, \label{eq:forwardSDE} \\
  \dd \vz_t &= b_{l, \phi}(\vz_{t}, t)\dd t + g(t) \dd \hat{\vbeta}_t, \quad &&\vz_0 \sim \pi_T, \label{eq:reverseSDE}
\end{align}
denote the forward and backward SDE at iteration $l = 1,2,\dots,L$, where $\hat{\vbeta}_t$ is the reverse-time Brownian motion. For simplicity, we denote $\vbeta_t = \hat{\vbeta}_t$ when the direction of the SDE is clear.

To learn the Iterative Smoothing Bridge dynamics, we iteratively employ the following steps:
\circled{1} evolve \emph{forward} particle trajectories according to \cref{eq:forwardSDE} with drift $f_{l-1,\theta}$ and filter w.r.t.\ the observations $\{(t_j,\vy_{j})\}_{j=1}^M$, \circled{2} learn the drift function $b_{l, \phi}$ for the reverse-time SDE, \circled{3} evolve   \emph{backward} particle trajectories according to \cref{eq:reverseSDE} with the drift $b_{l,\phi}$ learned in step \circled{2} and filter w.r.t.\ the observations $\{(t_j,\vy_{j})\}_{j=1}^M$, and \circled{4} learn the drift function $f_{l,\theta}$ for the forward SDE based on the backward particles. \cref{fig:fwdbwd} illustrates the forward and backward process of our iterative scheme for a data-conditioned denoising diffusion bridge. 
Next, we will go through steps \circled{1}--\circled{4} in detail and introduce the Iterative Smoothing Bridge method for data-conditional diffusion bridges.

\begin{figure*}[t!]
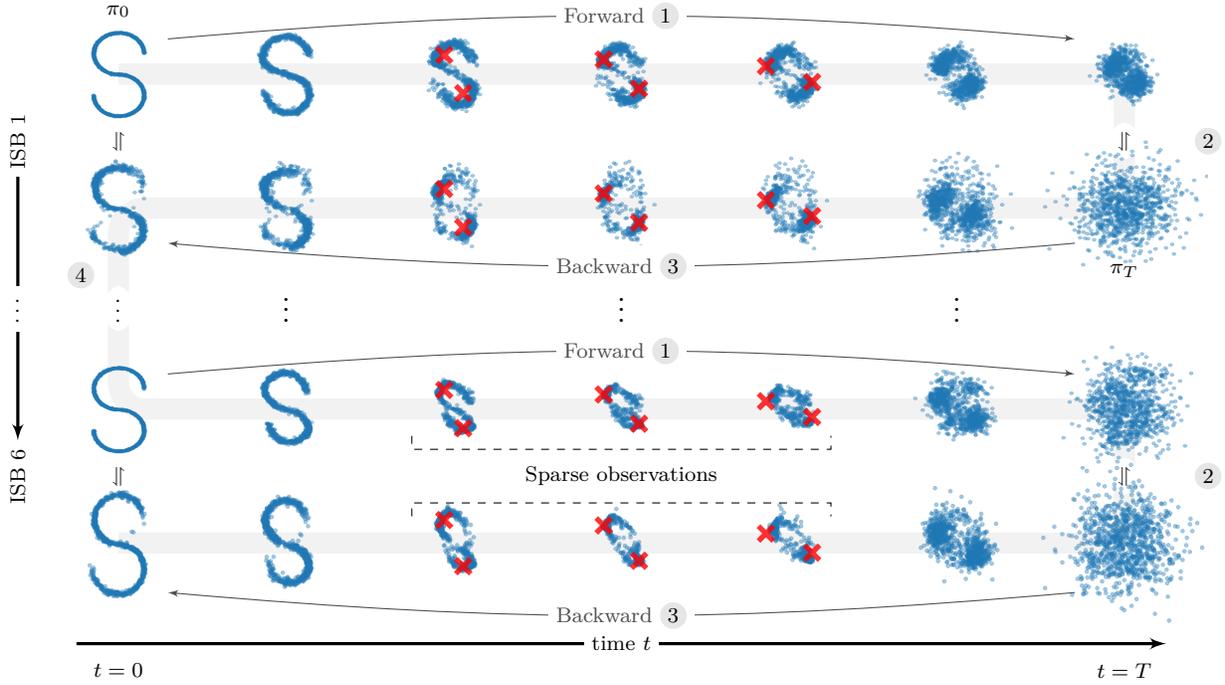

  \centering\footnotesize
  \begin{tikzpicture}
    \tikzstyle{block} = [draw=none]
    \tikzstyle{myarrow} = [black!70,-latex']

    \newlength{\blockwidth}
    \newlength{\blockheight}    
    \newlength{\arrownudge}    
    \setlength{\figurewidth}{0.23\textwidth}
    \setlength{\figureheight}{.9\figurewidth}
    \setlength{\blockwidth}{0.135\textwidth}
    \setlength{\blockheight}{.8\blockwidth}
    \setlength{\arrownudge}{0.25\blockwidth}    

    \draw[line width=8pt,draw=black!05, rounded corners=10pt] 
      (0,0) -- (6\blockwidth,0) --
        node[fill=white,rounded corners=6pt,inner sep=3,rotate=90] {$\large\leftrightharpoons$}
      (6\blockwidth,-\blockheight) -- (0,-\blockheight) -- 
        node[fill=white,rounded corners=6pt,inner sep=3,rotate=90] {$\large\cdots$}
      (0,-2.5\blockheight) -- (6\blockwidth,-2.5\blockheight) --
        node[fill=white,rounded corners=6pt,inner sep=3,rotate=90] {$\large\leftrightharpoons$}
      (6\blockwidth,-3.5\blockheight) -- (0,-3.5\blockheight);

    \node[] at (6.5\blockwidth,-.5\blockheight) {\circled{2}};
    \node[] at (6.5\blockwidth,-3\blockheight) {\circled{2}};

    \node[inner sep=3,rotate=90] at (0,-.5\blockheight) {$\large\leftrightharpoons$};
    \node[inner sep=3,rotate=90] at (0,-3\blockheight) {$\large\leftrightharpoons$};
    
    \node[] at (-.5,-1.5\blockheight) {\circled{4}};

    \foreach \img in {0,1,...,6}
      \node[block] (\img_forward_1) at (\img*\blockwidth,0) {\input{fig/s_shape/particles_fwd_0_\img.tex}};

    \draw[myarrow] ($(0_forward_1.north east) + (-\arrownudge,-\arrownudge)$) to[bend left=5] node[fill=white] {Forward \circled{1} } ($(6_forward_1.north west) + (\arrownudge,-\arrownudge)$);

    \foreach \img in {0,1,...,6}
      \node[block] (\img_backward_1) at (6*\blockwidth - \img*\blockwidth,-\blockheight) {\input{fig/s_shape/particles_bwd_0_\img.tex}};

    \draw[myarrow] ($(0_backward_1.south west) + (\arrownudge,\arrownudge)$) to[bend left=5] node[fill=white] {Backward \circled{3}} ($(6_backward_1.south east) + (-\arrownudge,\arrownudge)$);

    \node at (\blockwidth,-1.7\blockheight) {\large\vdots};
    \node at (3\blockwidth,-1.7\blockheight) {\large\vdots};
    \node at (5\blockwidth,-1.7\blockheight) {\large\vdots};

    \foreach \img in {0,1,...,6}
      \node[block] (\img_forward_5) at (\img*\blockwidth,-2.5\blockheight) {\input{fig/s_shape/particles_fwd_5_\img.tex}};

    \draw[myarrow] ($(0_forward_5.north east) + (-\arrownudge,-\arrownudge)$) to[bend left=5] node[fill=white] {Forward \circled{1} } ($(6_forward_5.north west) + (\arrownudge,-\arrownudge)$);

    \setlength{\figurewidth}{0.23\textwidth}
    \setlength{\figureheight}{\figurewidth}

    \foreach \img in {0,1,...,6}
      \node[block] (\img_backward_5) at (6*\blockwidth - \img*\blockwidth,-3.5\blockheight) {\input{fig/s_shape/particles_bwd_5_\img.tex}};

    \draw[myarrow] ($(0_backward_5.south west) + (\arrownudge,\arrownudge)$) to[bend left=5] node[fill=white] {Backward \circled{3}} ($(6_backward_5.south east) + (-\arrownudge,\arrownudge)$);    

    \node[rotate=90] (ISB1) at (-.6\blockwidth,-.5\blockheight) {\footnotesize ISB 1};
    \node[rotate=90] (ISB6) at (-.6\blockwidth,-3\blockheight) {\footnotesize ISB 6};
    \draw[black,-latex',very thick] (ISB1) to node[fill=white,rotate=90] {$\cdots$} (ISB6);

    \draw[black,-latex',very thick] (-0.25\blockwidth,-4.25\blockheight) to node[fill=white] {time $t$} (6.25\blockwidth,-4.25\blockheight);
    \node at (0,-4.45\blockheight) {$t=0$};
    \node at (6\blockwidth,-4.45\blockheight) {$t=T$};

    \draw[dashed] (1.75\blockwidth,-3.3\blockheight) -- (1.75\blockwidth,-3.2\blockheight) -- (4.25\blockwidth,-3.2\blockheight) -- (4.25\blockwidth,-3.3\blockheight);
    \node at (3\blockwidth,-3\blockheight) {Sparse observations};
    \draw[dashed] (1.75\blockwidth,-2.7\blockheight) -- (1.75\blockwidth,-2.8\blockheight) -- (4.25\blockwidth,-2.8\blockheight) -- (4.25\blockwidth,-2.7\blockheight);

    \node[anchor=north] at (0_forward_1.north) {$\pi_0$};
    \node[anchor=south] at (0_backward_1.south) {$\pi_T$};

  \end{tikzpicture}%
  \vspace*{-6pt}
  \caption{Sketch of a diffusion bridge between a 2D data distribution ($\pi_0$) and an isotropic Gaussian ($\pi_T$) constrained by sparse observations (\protect\tikz[baseline=-.5ex]\protect\node[cross=5.5pt,line width=2.5pt,red]{};). The forward diffusion at the first iteration (ISB~1) learns to account for the sparse observations but does not converge to the correct terminal distribution ($t=T$), and the backward diffusion {\it vice versa}. After iterating (ISB~6), the forward and backward diffusions converge to the correct targets and are able to account for the sparse observational data.
  }
  \label{fig:fwdbwd}
\end{figure*}

\subsection{The Iterative Smoothing Bridge}\label{sec:steps}
The Iterative Smoothing Bridge (ISB) method iteratively generates particle filtering trajectories (steps \circled{1} and \circled{3} in \cref{fig:fwdbwd}) and learns the parameterizations of the forward and backward drift functions $f_{l, \theta}$ and $b_{l, \phi}$ (steps \circled{2} and \circled{4}) by minimizing a modified version of the mean-matching objective presented by \citet{bortoli2021diffusion}. {Note that steps \circled{2} and \circled{4} are dependent on applying differential resampling in the particle filtering steps \circled{1} and \circled{3} for reversing the generated trajectories.} 
We will now describe the forward trajectory generating step \circled{1} and the backward drift learning step \circled{2} in detail. Steps \circled{3} and \circled{4} are given by application of \circled{1} and \circled{2} on their reverse-time counterparts.

\paragraph{Step \circled{1} (and \circled{3}):}
Given a fixed discretization of the time interval $[0, T]$ denoted as $\{ t_k\}_{k=1}^K$ with $t_1 = 0$ and $t_K = T$, denote the time step lengths as $\Delta_k = t_{k+1} - t_k$. By truncating the It\^o--Taylor series of the SDE, we can consider an Euler--Maruyama \citep[\eg, Ch.~8 in][]{Sarkka+Solin:2019} type of discretization for the continuous-time problem. 
We give the time-update of the $i$\ths particle at time $t_k$ evolved according to \cref{eq:forwardSDE}, \ie, 
\begin{equation}
  \tilde{\vx}_{t_k}^i = \vx_{t_{k-1}} + f_{l-1, \theta}(\vx_{t_{k-1}, t_{k-1}}) \Delta_k  + g(t_{k-1})\sqrt{\Delta_k}\,\vxi_k^i,
\end{equation}
where $\vxi_k^i \sim \N(\vzero, \MI)$. Notice that we have not yet conditioned on the observational data. In step \circled{3}, the particles $\tilde{\vz}_{t_k}^i$ of the backward SDE \cref{eq:reverseSDE} are similarly obtained. {The SDE dynamics sampled in steps \circled{1} and \circled{3} apply the learned drift functions $f_{l-1, \theta}$ and $b_{l, \phi}$ from the previous step and do not require sampling from the underlying SDE model.}
For times $t_k$ at which no observations are available, we set $\vx_t^i = \tilde{\vx}_t^i$ (and $\vz_{t_k}^i = \tilde{\vz}_{t_k}^i$ respectively) and otherwise compute the particle filtering weights $w_{t_k}^i$ based on the observations $\{(t_j, \vy_j) \in \data \mid t_j = t_k\}$ for resampling. See \cref{sec:compmethods} for details on the particle filtering. %

For resampling, we employ a {\em differentiable resampling} procedure, where the particles and weights $(\tilde{\vx}_{t_k}^i, w_{t_k}^i)$ are transported to uniformly weighted particles $(\vx_{t_k}^i, \frac{1}{N})$ by solving an entropy-regularized optimal transport problem \citep{cuturi2013sinkhorn, peyre2017computational, corenflos2021diffpf}  (see \cref{app:diff}).
Through application of the $\epsilon$-regularized optimal transport map $\MT_{(\epsilon)} \in \mathbb{R}^{N \times N}$ \citep[see][]{corenflos2021diffpf} the particles are resampled via the map to 
$\vx_{t_k}^i = \tilde{\MX}^{\T}_{t_k} \, \MT_{(\epsilon),i}$,
where $\tilde{\MX}_{t_k} \in \R^{N \times d}$ denotes the stacked particles $\{\tilde{\vx}_{t_k}^i \}_{i=1}^N$ at time $t_k$ before resampling. 

\paragraph{Step \circled{2} (and \circled{4}): }
Given the particles $\{\vx_{t_{k}}^i \}_{k=1, i=1}^{K, N}$, we now aim to learn the drift function for the respective reverse-time process. The purpose of this step is to find a mean-matching reversal of the trajectories, in other words we aim to find $f_{l, \theta}$ such that it best explains the change we observe from $\{\vx_{t_{k}}^i \}$ to $\{\vx_{t_{k+1}}^i \}$ for each trajectory $i=1, 2, \ldots, N$ and particle $k=1,2,\ldots, N$. We will review the loss functions used for this optimization step, where the loss outside observation times will match the approach in \citet{bortoli2021diffusion} and the loss at observation times is motivated by a smoothing of trajectories (see \cref{app:hamiltonjacobi} for a discussion).
 
In case no observation is available at time $t_k$, we apply the mean-matching loss based on a Gaussian transition approximation proposed in \citet{bortoli2021diffusion}:\looseness-1
\begin{equation}\label{eq:basicloss}
    \ell_{k+1,\text{nobs}}^i = \| b_{l,  \phi}(\vx_{t_{k+1}}^i, t_{k+1})\Delta_k - \vx_{t_{k+1}}^i 
 - f_{l-1, \theta}(\vx_{t_{k+1}}^i, t_k)\Delta_k + \vx_{t_k}^i +  f_{l-1, \theta}(\vx_{t_k}^i, t_k)\Delta_k \|^2. 
\end{equation}
In case an observation is available at time $t_k$ the particle values $\tilde{\MX}_{t_k}$ will be coupled through the optimal transport map. Therefore, the transition density is a sum of Gaussian variables (see \cref{app:maths} for details and a derivation), and the mean-matching loss is therefore given by:
\begin{equation}\label{eq:obsloss}
\ell_{k+1,\text{{obs}}}^i = \|  b_{l,  \phi}(\vx_{t_{k+1}}^i, t_{k+1})\Delta_k - \vx_{t_{k+1}}^i - f_{l-1, \theta}(\vx_{t_{k+1}}^i, t_k)\Delta_k 
  + \textstyle\sum_{n=1}^N T_{(\epsilon),i, n} \left( \vx_{t_k}^n + f_{l-1, \theta}(\vx_{t_k}^n, t_k)\Delta_k \right)\|^2 .
\end{equation}
The derivation of \cref{eq:obsloss} relies on mean-matching the trajectories as in \citet{bortoli2021diffusion} combined with applying the differentiable resampling optimal transport map $\MT_{(\epsilon)}\tilde{\vx}_{t_k}^i=\vx_{t_k}^i$ on all the particles to obtain the transition density $p_{\vx_{t_{k}} \mid \vx_{t_{k-1}}^i}(\vx_{t_k})$ at observation times, resulting in a Gaussian distribution dependent on all the particles. In addition, we apply the property that the reverse drift should satisfy
\begin{equation}\label{eq:smoothreverse1}
b_{l, \phi} (\vx_{t_{k+1}}, t_{k+1}) = f_{l-1, \theta}(\vx_{t_{k+1}}, t_k )- g(t_{k+1})^2 \nabla \ln p_{t_{k+1}},
\end{equation}
where $p_{t_{k+1}}$ is the particle filtering density after differential resampling at time $t_{k+1}$. Thus the impact of observations to the loss function is two-fold, the observations define the value of the transport matrix $\MT_{(\epsilon)}$ and the marginal score $\nabla \ln p_{t_{k+1}}$. The use of the reverse drift \cref{eq:smoothreverse1} is further motivated by the smoothing reverse presented in \cref{app:hamiltonjacobi}, where we discuss how \cref{eq:smoothreverse1} matches a backwards controlled drift.

The overall objective function is a combination of both loss functions, with the respective mean-matching loss depending on whether $t_k$ is an observation time. The final loss function is written as:
\begin{equation}\label{eq:fullloss}
\ell(\phi) = \sum_{i=1}^N \left[ \sum_{k=1}^{K }\ell_{k, \text{obs}}^i (\phi) \mathbb{I}_{y_{t_k} \not = \emptyset} +\ell_{k, \text{nobs}}^i (\phi) \mathbb{I}_{y_{t_k} = \emptyset} \right],
\end{equation}
where $\mathbb{I}_{\text{cond.}}$ denotes an indicator function that returns `1' iff the condition is true, and `0' otherwise. Consequently, the parameters $\phi$ of $b_{l,\phi}$ are learned by minimizing \cref{eq:fullloss} through gradient descent. In practice, a cache of trajectories $\{\vx_{t_{k}}^i \}_{k=1, i=1}^{K, N}$ is maintained through training of the drift functions, and refreshed at a fixed number of inner loop iterations, as in \citet{bortoli2021diffusion}, avoiding differentiation over the SDE generation computational graph. The calculations for step \circled{4} follow similarly. We present a high-level description of the ISB steps in \cref{alg:isb}. \looseness-1 

The learned backward drift $b_{l, \phi}$ can be interpreted as an analogy of the backward drift in \citet{maoutsa2021deterministic}, connecting our approach to solving optimal control problems through Hamilton--Jacobi equations, see \cref{app:hamiltonjacobi} for an analysis of the backwards SDE and the control objective.
While we are generally considering problem settings where the number of observations is low, we propose that letting $M \to \infty$ yields the underlying marginal distribution, see \cref{prop:density} in \cref{app:infinity}.\looseness-1

\begin{algorithm}[tb]
   \caption{The Iterative Smoothing Bridge}
   \label{alg:isb}
   \renewcommand{\algorithmiccomment}[1]{\hfill\textcolor{gray}{\(\triangleright\) #1}}
\begin{algorithmic}
   \REQUIRE Marginal constraints $(\pi_0, \pi_T)$, observations $\data = \{(t_j,\vy_{j})\}_{j=1}^M$, initial drift function $f_{0,\theta}$, iterations $L$, discretization steps $K$, number of particles $N$, observation noise schedule $\kappa (l)$
   \ENSURE Learned forward and backward drift $(f_{\theta}, b_{\phi})$
  \FOR{$l = 1$ {\bfseries to} $L$}

   \STATE %
   \tikz[overlay]{\draw[gray](-1em,0.5em)--(-1em,-8.5em);\node[rotate=90,color=darkgray,fill=white] at (-1em,-4em) {\scriptsize Forward process};}%
   Initialize forward particles $\{\vx_{0}^i\}_{i=1}^N \sim \pi_0$
   \FOR{$k = 1$ {\bfseries to} $K$} 
	\STATE Generate $\{\vx_{k}^i\}_{i=1}^N$ using $\{\vx_{k-1}^i\}_{i=1}^N$ \COMMENT{\cref{eq:forwardSDE}} 
   	\IF{Observations at $t_k$}
   	\STATE $\{\vx_{k}^i\}_{i=1}^N \gets \textbf{DiffResample}(\{\vx_{k}^i\}_{i=1}^N, \kappa(l))$ 
   	\ENDIF
  \ENDFOR
  \STATE Optimize the forward loss function w.r.t.\ $\phi$ \COMMENT{\cref{eq:fullloss}}
  \STATE %
  \tikz[overlay]{\draw[gray](-1em,0.5em)--(-1em,-8.5em);\node[rotate=90,color=darkgray,fill=white] at (-1em,-4em) {\scriptsize Backward process};}%
  Initialize backward particles $\{\vz_{K}^i\}_{i=1}^N \sim \pi_T$
    \FOR{$k=K$ {\bfseries to} $1$} 
        \STATE Generate $\{\vz_{k-1}^i\}_{i=1}^N$ using $\{\vz_{k}^i\}_{i=1}^N$  \COMMENT{\cref{eq:reverseSDE}}
   \IF{Observations at $t_k$}
   	\STATE $\{\vz_{k-1}^i\}^N_{i=1} \gets \textbf{DiffResample}(\{\vz_{k-1}^i\}^N_{i=1}, \kappa (l))$ 
   	\ENDIF
  \ENDFOR
  \STATE Optimize the backwards loss function w.r.t.\ $\theta$ \COMMENT{\cref{eq:appfulllossbwd}}
   \ENDFOR
\end{algorithmic}
\end{algorithm}

\subsection{Computational Considerations}\label{sec:compmethods}
The ISB algorithm is a generic approach to learn data-conditional diffusion bridges under various choices of, \eg, the particle filter proposal density or the reference drift. Next, we cover practical considerations for the implementation of the method and highlight the model choices in the experiments.\looseness-1

\paragraph{Multiple observations per time step}
Naturally, we can make more than one observation at a single point in time $t_k$, denoted as $\data_{t_k} = \{(t_j, \vy_j) \in \data \mid t_j = t_k\}$. To compute particle weights $w^i_{t_k}$ for the $i$\ths particle we consider only the $H$-nearest neighbours of $\vx_{t_k}^i$ in $\data_{t_k}$ instead of all observations in $\data_{t_k}$. By restricting to the $H$-nearest neighbours, denoted as $\data^H_{t_k}$, we introduce an additional locality to the proposal density computation, which can be helpful in the case of multimodality. On the other hand, letting $H > 1$ results in weights which take into account the local density of the observations, not only the distance to the nearest neighbour. In experiments with few observations, we set $H=1$, the choice of $H$ is discussed when we have set the value higher.\looseness-1

\paragraph{Particle filtering proposal}
The proposal density chosen for the ISB is the bootstrap filter, where the proposal matches the Gaussian transition density $p(\vx_{t_k} \mid \vx_{t_{k-1}})$. Assuming a Gaussian noise model $\N(\vzero, \sigma^2 \MI)$, the unnormalized log-weights for the $i$\ths particle at time $t_k$ are given by $\log w_{t_k}^i = -\nicefrac{1}{2\sigma^2} \sum_{\vy_j \in \data^H_{t_k}} \| \vx_{t_k}^i - \vy_j\|^2$. While we restrict our approach in practice to the bootstrap filter, applying more sophisticated proposals such as in the auxiliary particle filter \citep{pitt1999filtering} could improve the results in some cases, although restricting the problem to a linear observation model.

\paragraph{Observational noise schedule}
In practice, using a constant observation noise variance $\sigma^2$ can result in an iterative scheme which does not have a stationary point as $L \to \infty$. Even if the learned drift function $f_{l, \theta}$ was optimal, the filtering steps \circled{1} and \circled{3} would alter the trajectories unless all particles would have uniform weights. Thus, we introduce a noise schedule $\kappa(l)$ which ensures that the observation noise increases in the number of ISB iterations, causing ISB to converge to the IPFP \citep{bortoli2021diffusion} as $L \to \infty$. We found that letting the observation noise first decrease and then increase (in the spirit of simulated annealing) often outperformed a strictly increasing observation noise schedule.
The noise schedule is studied in \cref{app:ablation}, where we derive the property that letting $L \to \infty$ yields IPFP.\looseness-1

\paragraph{Drift initialization}
Depending on the application, one may choose to incorporate additional information by selecting an appropriate initial drift.
A possible choice includes a pre-trained neural network drift learned to transport $\pi_0$ to $\pi_T$ without accounting for observations. However, starting from a drift for the unconstrained SBP can be problematic in cases where the observations are far away from the unconstrained bridge. To encourage exploration, one may choose $f_0=0$ for the initial drift. In various problem settings, we found a zero drift and starting from the SBP to be successful in the experiments. See \cref{app:ablation} for discussion.

\section{Experiments}
\label{sec:experiments}
To assess the properties and performance of the ISB, we present a range of experiments that demonstrate how the iterative learning procedure can incorporate both observational data and terminal constraints. We start with simple examples that build intuition (\cf \cref{fig:teaser} and \cref{fig:fwdbwd}) and show standard ML benchmark tasks. For quantitative assessment, we design an experiment with a non-linear SDE for which the marginal distributions are available in closed-form. Finally, we demonstrate our model both in a highly multimodal bird migration task, conditioned image generation, and in a single-cell embryo RNA modelling problem. Ablation studies are found in \cref{app:ablation}.

\begin{figure}[t!]
  \centering
  \setlength{\figurewidth}{0.18\linewidth}
  \setlength{\figureheight}{\figurewidth}
  \setlength{\blockwidth}{.18\linewidth}
  \begin{subfigure}[b]{\blockwidth}
    \centering
    % This file was created with tikzplotlib v0.10.1.
\begin{tikzpicture}

\begin{axis}[
height=\figureheight,
hide x axis,
hide y axis,
tick pos=left,
width=\figurewidth,
xmin=-10.8168037133601, xmax=12.6231202032664,
ymin=-13.0521161056362, ymax=10.3878078109904
]
\addplot graphics [includegraphics cmd=\pgfimage,xmin=-22.9636517032675, xmax=23.6057335615534, ymin=-16.9328982110379, ymax=14.113358632176] {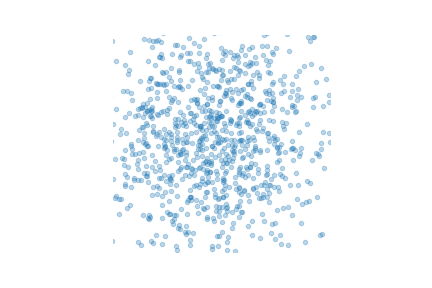};
\end{axis}

\end{tikzpicture}
  \end{subfigure}
  \hfill
  \begin{subfigure}[b]{\blockwidth}
    \centering
    % This file was created with tikzplotlib v0.10.1.
\begin{tikzpicture}

\begin{axis}[
height=\figureheight,
hide x axis,
hide y axis,
tick pos=left,
width=\figurewidth,
xmin=-10.8168037133601, xmax=12.6231202032664,
ymin=-13.0521161056362, ymax=10.3878078109904
]
\addplot graphics [includegraphics cmd=\pgfimage,xmin=-22.9636517032675, xmax=23.6057335615534, ymin=-16.9328982110379, ymax=14.113358632176] {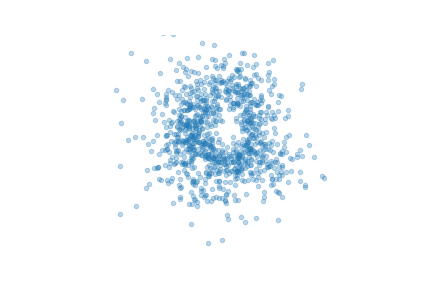};
\end{axis}

\end{tikzpicture}
  \end{subfigure}
  \begin{subfigure}[b]{\blockwidth}
    \centering
    % This file was created with tikzplotlib v0.10.1.
\begin{tikzpicture}

\begin{axis}[
height=\figureheight,
hide x axis,
hide y axis,
tick pos=left,
width=\figurewidth,
xmin=-10.8168037133601, xmax=12.6231202032664,
ymin=-13.0521161056362, ymax=10.3878078109904
]
\addplot graphics [includegraphics cmd=\pgfimage,xmin=-22.9636517032675, xmax=23.6057335615534, ymin=-16.9328982110379, ymax=14.113358632176] {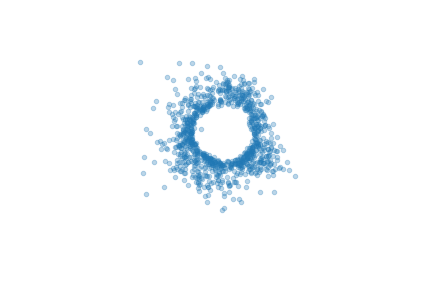};
\addplot [draw=red, fill=red!50, mark=*, only marks, mark options={solid}, mark size=2pt, opacity=0.8]
table{%
x  y
1.5 3
3.26335573196411 2.42705106735229
4.35316944122314 0.92705100774765
4.35316944122314 -0.92705100774765
3.26335573196411 -2.42705106735229
1.5 -3
-0.263355761766434 -2.42705106735229
-1.35316956043243 -0.92705100774765
-1.35316956043243 0.92705100774765
-0.263355761766434 2.42705106735229
};
\end{axis}

\end{tikzpicture}
  \end{subfigure}  
  \hfill
  \begin{subfigure}[b]{\blockwidth}
    \centering
    % This file was created with tikzplotlib v0.10.1.
\begin{tikzpicture}

\begin{axis}[
height=\figureheight,
hide x axis,
hide y axis,
tick pos=left,
width=\figurewidth,
xmin=-10.8168037133601, xmax=12.6231202032664,
ymin=-13.0521161056362, ymax=10.3878078109904
]
\addplot graphics [includegraphics cmd=\pgfimage,xmin=-22.9636517032675, xmax=23.6057335615534, ymin=-16.9328982110379, ymax=14.113358632176] {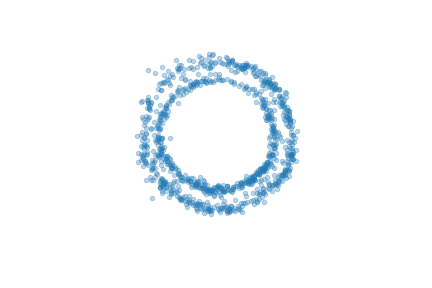};
\end{axis}

\end{tikzpicture}
  \end{subfigure}
  \hfill  
  \begin{subfigure}[b]{\blockwidth}
    \centering
    % This file was created with tikzplotlib v0.10.1.
\begin{tikzpicture}

\begin{axis}[
height=\figureheight,
hide x axis,
hide y axis,
tick pos=left,
width=\figurewidth,
xmin=-10.8168037133601, xmax=12.6231202032664,
ymin=-13.0521161056362, ymax=10.3878078109904
]
\addplot graphics [includegraphics cmd=\pgfimage,xmin=-22.9636517032675, xmax=23.6057335615534, ymin=-16.9328982110379, ymax=14.113358632176] {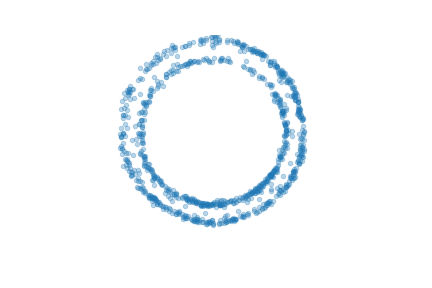};
\end{axis}

\end{tikzpicture}
  \end{subfigure}\\[-.5em]
  \begin{subfigure}[b]{\blockwidth}
    \centering
    % This file was created with tikzplotlib v0.10.1.
\begin{tikzpicture}

\begin{axis}[
height=\figureheight,
hide x axis,
hide y axis,
tick pos=left,
width=\figurewidth,
xmin=-7.96473726945852, xmax=16.5689135381996,
ymin=-10.1051134125469, ymax=14.4285373951112
]
\addplot graphics [includegraphics cmd=\pgfimage,xmin=-20.6783675721025, xmax=28.0639850258937, ymin=-14.1669761290465, ymax=18.3279256029509] {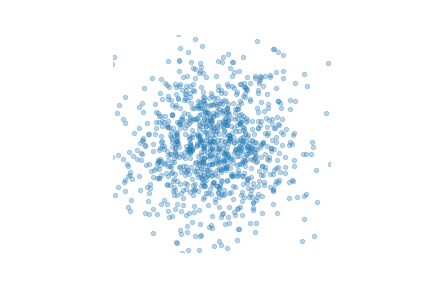};
\end{axis}

\end{tikzpicture}\vspace{-.5em}
    \caption*{\small $t=0$}
  \end{subfigure}
  \hfill
  \begin{subfigure}[b]{\blockwidth}
    \centering
    % This file was created with tikzplotlib v0.10.1.
\begin{tikzpicture}

\begin{axis}[
height=\figureheight,
hide x axis,
hide y axis,
tick pos=left,
width=\figurewidth,
xmin=-7.96473726945852, xmax=16.5689135381996,
ymin=-10.1051134125469, ymax=14.4285373951112
]
\addplot graphics [includegraphics cmd=\pgfimage,xmin=-20.6783675721025, xmax=28.0639850258937, ymin=-14.1669761290465, ymax=18.3279256029509] {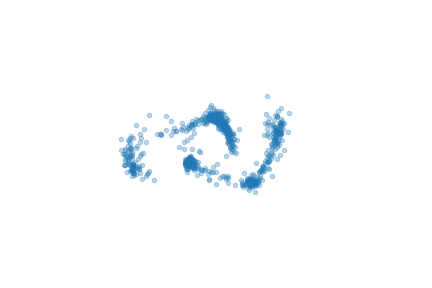};
\addplot [draw=red, fill=red!50, mark=*, only marks, mark options={solid}, mark size=2pt, opacity=0.8]
table{%
x  y
-5.74272012710571 -1.2900493144989
10.282054901123 6.23231601715088
11.1886224746704 5.12994289398193
-7.97766780853271 1.20648860931396
8.09059429168701 -2.23533320426941
4.45591497421265 4.15110778808594
3.3600537776947 3.95255351066589
3.7745156288147 6.05395746231079
-0.357202261686325 -0.101248815655708
-5.85373735427856 -3.47627544403076
};
\end{axis}

\end{tikzpicture}\vspace{-.5em}
    \caption*{\small $t=\nicefrac{1}{4}$}
  \end{subfigure}
  \hfill  
  \begin{subfigure}[b]{\blockwidth}
    \centering
    % This file was created with tikzplotlib v0.10.1.
\begin{tikzpicture}

\begin{axis}[
height=\figureheight,
hide x axis,
hide y axis,
tick pos=left,
width=\figurewidth,
xmin=-7.96473726945852, xmax=16.5689135381996,
ymin=-10.1051134125469, ymax=14.4285373951112
]
\addplot graphics [includegraphics cmd=\pgfimage,xmin=-20.6783675721025, xmax=28.0639850258937, ymin=-14.1669761290465, ymax=18.3279256029509] {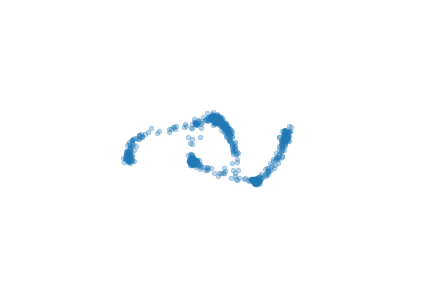};
\addplot [draw=red, fill=red!50, mark=*, only marks, mark options={solid}, mark size=2pt, opacity=0.8]
table{%
x  y
-5.85671424865723 -0.975227236747742
11.3966102600098 4.97208642959595
11.7787132263184 3.69037508964539
-7.75934648513794 1.58934283256531
8.92796421051025 -2.40417170524597
5.42121362686157 4.35129737854004
3.58317613601685 4.98470258712769
3.04773569107056 5.66898202896118
0.561857163906097 -0.249238520860672
-6.35246849060059 -1.97278606891632
};
\end{axis}

\end{tikzpicture}\vspace{-.5em}
    \caption*{\small $t=\nicefrac{1}{2}$}
  \end{subfigure}  
  \hfill
  \begin{subfigure}[b]{\blockwidth}
    \centering
    % This file was created with tikzplotlib v0.10.1.
\begin{tikzpicture}

\begin{axis}[
height=\figureheight,
hide x axis,
hide y axis,
tick pos=left,
width=\figurewidth,
xmin=-7.96473726945852, xmax=16.5689135381996,
ymin=-10.1051134125469, ymax=14.4285373951112
]
\addplot graphics [includegraphics cmd=\pgfimage,xmin=-20.6783675721025, xmax=28.0639850258937, ymin=-14.1669761290465, ymax=18.3279256029509] {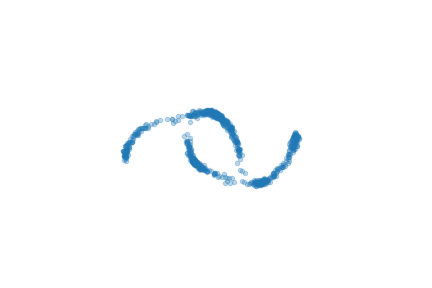};
\addplot [draw=red, fill=red!50, mark=*, only marks, mark options={solid}, mark size=2pt, opacity=0.8]
table{%
x  y
12.9552555084229 4.29890060424805
13.3716583251953 3.17004346847534
-7.17733383178711 2.52120113372803
9.50479507446289 -2.82684803009033
5.19372701644897 4.41709136962891
4.38437175750732 5.42100954055786
3.66518402099609 5.84365081787109
0.788959980010986 0.159530133008957
-6.95842123031616 -0.97951078414917
};
\end{axis}

\end{tikzpicture}\vspace{-.5em}
    \caption*{\small $t=\nicefrac{3}{4}$}
  \end{subfigure}
  \begin{subfigure}[b]{\blockwidth}
    \centering
    % This file was created with tikzplotlib v0.10.1.
\begin{tikzpicture}

\begin{axis}[
height=\figureheight,
hide x axis,
hide y axis,
tick pos=left,
width=\figurewidth,
xmin=-7.96473726945852, xmax=16.5689135381996,
ymin=-10.1051134125469, ymax=14.4285373951112
]
\addplot graphics [includegraphics cmd=\pgfimage,xmin=-20.6783675721025, xmax=28.0639850258937, ymin=-14.1669761290465, ymax=18.3279256029509] {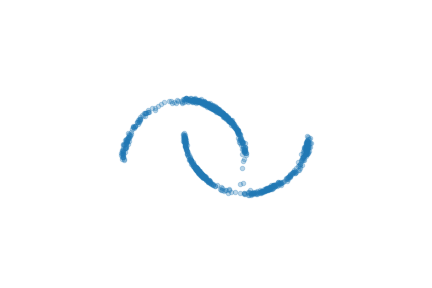};
\end{axis}

\end{tikzpicture}\vspace{-.5em}
    \caption*{\small $t=T$}
  \end{subfigure}\\[-6pt]
  \caption{2D toy experiments from scikit-learn with both cases starting from a Gaussian: The {\sc two circles} (top) and {\sc two moons} (bottom) data sets, with observations (red markers) constraining the problem. For the circles, the $10$ circular observations at $t=0.5$ first force the method to create a circle that then splits into two; in the lower plot the observations at $t\in [0.25, 0.5, 0.75] $ split the data into clusters before joining them into two moons. See \cref{fig:2dtoyipfp} in the Appendix for the IPFP result. \looseness-1}%
  \label{fig:2dtoy}
\end{figure}

\paragraph{Experiment setup}
In all experiments, the forward and backward drift functions $f_{\theta}$ and $b_{\phi}$ are parametrized as neural networks. For low-dimensional experiments, we apply the MLP block design as in \citet{bortoli2021diffusion}, and for the image experiment an U-Net as in \citet{nichol2021ImprovedDD}. The latent state SDE was simulated by Euler--Maruyama with a fixed time-step of $0.01$ over $100$ steps and $1000$ particles if not otherwise stated.
All low-dimensional (at most $d=5$) experiments were run on a MacBook Pro laptop CPU, whereas the image experiments used a single NVIDIA A100 GPU and ran for 5~h 10~min. Notice that since ISB only performs particle filtering outside the stochastic gradient training loop, the training runtime is in the same order as in the earlier Schrödinger bridge image generation experiments of \citet{bortoli2021diffusion}. Thus we omit any wall-clock timings. Full details for all the experiments are included in \cref{app:experiment}.

All experiment settings include a number of hyperparameter choices, some typical to all diffusion problems and some specific to particle filtering and smoothing. The diffusion $g(t)$ is a pre-determined function not optimized during training. We divide the experiments into two main subsets: problems of `sharpening to achieve a data distribution' and `optimal transport problems'. In the former, the initial distribution has a support overlapping with the terminal distribution and the process noise level $g(t)$ goes from high to low as time progresses. Conversely, in the latter setting, the particles sampled from the initial distribution must travel to reach the support of the terminal distribution, and we chose to use a constant process noise level. Perhaps the most significant choice of hyperparameter is the observational noise level, as it imposes a preference on how closely should the observational points be followed, see \cref{app:discussobs} for details.\looseness-1

\paragraph{2D toy examples}
We show illustrative results for the {\sc two moons} and {\sc circles} from scikit-learn. We add artificial observation data to bias the processes.  %
For the circles, the observational data consists of $10$ points, spaced evenly on the circle. %
The points are all observed simultaneously, at halfway through the process, forcing the marginal density of the generating SDE to collapse to the small circle, and then to expand.
For the {\sc two moons}, the observational data is collected from $10$ trajectories of a diffusion model, which generates the two moons from noise, and these $10$ trajectories are then observed at three points in time. Results are visualized in \cref{fig:2dtoy} (see videos in supplement). For reference, we have included plots of the IPFP dynamics in the supplement, see \cref{fig:2dtoyipfp}. \looseness-1

\begin{figure*}[t!]
  \centering\scriptsize
  \begin{tikzpicture}

      \setlength{\figurewidth}{.196\linewidth}
      \newcommand{\birds}[1]{\includegraphics[width=.95\figurewidth]{./fig/birds#1}}
      \newcommand{\dove}{\includegraphics[width=1em]{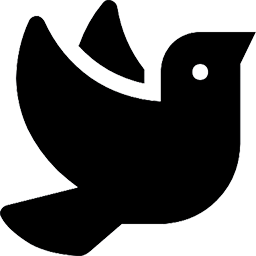}}

      \tikzset{cross/.style={cross out, draw=black, minimum size=2*(#1-\pgflinewidth), inner sep=0pt, outer sep=0pt, line width=1pt}, cross/.default={3pt}}

      \node (a) at (0\figurewidth,0) {\birds{-july}};
      \node (b) at (1\figurewidth,0) {\birds{}};
      \node (c) at (2\figurewidth,0) {\birds{}};
      \node (d) at (3\figurewidth,0) {\birds{}};
      \node (e) at (4\figurewidth,0) {\birds{-january}};

      \tikzstyle{label} = [anchor=south west,inner sep=2pt,outer sep=2pt,draw=black,rounded corners=2pt,fill=white,minimum width=1cm,align=center]
      \node[label] (la) at (a.north west) {Summer};
      \node[label,anchor=south east] (lc) at (e.north east) {Winter};
      \node[label,anchor=south] (lb) at (c.north) {Example sightings during migration};
      \draw[black,-latex',line width=1.2pt] (la) -- node[fill=white]{\tiny \dove} (lb) -- node[fill=white]{\tiny \dove} (lc);

      \node[rotate=90] at (-.52\figurewidth,0) {Bird observations};
      \node[rotate=90] at (-.52\figurewidth,-\figurewidth) {ISB result};
      \foreach \x/\y [count=\i] in {.65/.5,.55/.45} { %
        \node[cross] at ({.5\figurewidth+.95*\x*\figurewidth},{-.5\figurewidth+\y\figurewidth}) {\tiny};%
.       }

 \foreach \x/\y [count=\i] in {.5/.35} { %
        \node[cross] at ({1.5\figurewidth+.95*\x*\figurewidth},{-.5\figurewidth+\y\figurewidth}) {\tiny};%
.       }

 \foreach \x/\y [count=\i] in {.35/.3,.3/.2} { %
        \node[cross] at ({2.5\figurewidth+.95*\x*\figurewidth},{-.5\figurewidth+\y\figurewidth}) {\tiny};%
.       }

      \newcommand{\result}[1]{\includegraphics[width=.95\figurewidth]{./fig/birds/fit_model_europe_#1}}      
      \foreach \x [count=\i from 0] in {start,25,50,75,end}
        \node at (\i\figurewidth,-\figurewidth) {\result{\x}};
      
  \end{tikzpicture}
  \vspace*{-6pt}
  \caption{Bird migration example. The top row describes nesting and wintering areas and example sightings during migration. The bottom shows the marginal densities of the ISB model from the initial to terminal distribution, matching bird sightings along the migration.}
  \label{fig:birds}
\end{figure*}

\paragraph{Quantitative comparison on the \Benes SDE}
In order to quantify how observing a process in between its initial and terminal states steers the ISB model to areas with higher likelihood, we test its performance on a \Benes SDE model \citep[see, \eg][]{Sarkka+Solin:2019}. The \Benes SDE is a non-linear one-dimensional SDE of form $\dd x_t = \tanh(x_t)\dd t + \dd\beta_t$ with $x_0 = 0$, but its marginal density is available in closed-form, allowing for negative log-likelihood evaluation.
We simulate trajectories from the \Benes SDE and from the reverse drift and stack the reversed trajectories. The terminal distribution is shifted and scaled so that the \Benes SDE itself does not solve the transport problem from $\pi_0$ to $\pi_T$, see \cref{app:benes} for details and visualizations of the processes.

We fit a Schrödinger bridge model with no observational data as a baseline, using the \Benes SDE drift as the reference model. The ISB model is initialized with a zero-drift model (not with the \Benes as reference), thus making learning more challenging. We compare the models in terms of negative log predictive density in \cref{tbl:benes}, where we see that the ISB model captures the process well on average (over the entire time-horizon) and at selected marginal times.

\paragraph{Bird migration}
Bird migration can be seen as a regular seasonal transport problem, where birds move (typically North--South) along a flyway, between breeding and wintering grounds. We take this as a motivating example of constrained optimal transport, where the geographical and constraints and preferred routes are accounted for by bird sighting data (see \cref{fig:birds} top). By adapting data from \citet{ambrosini2014} and \citet{pellegrino2015lack}, we propose a simplified data set for geese migration in Europe (OIBMD: ornithologically implausible bird migration data; available in the supplement). 
We applied the ISB for $12$ iterations, with a linear observation noise schedule from $1$ to $0.2$, and constant diffusion noise $0.05$. The drift function was initialized as a zero-function, and thus the method did not rely on a separately fit model optimized for generating the wintering distribution based on the breeding distribution. For comparison, we include the Schrödinger bridge results in \cref{app:birds}. \looseness-1

\paragraph{Constraining an image generation process}
We demonstrate that the ISB approach scales well to high-dimensional inputs by studying a proof-of-concept image generation task. We modify the diffusion generative process of the MNIST~\citep{MNIST} digit $8$ by artificial observations steering the dynamical system in the middle of the generation process. While the concept of observations in case of image generation is somewhat unnatural, it showcases the scalability of the method to high-dimensional data spaces. Here, the drift is initialized using a pre-trained neural network obtained by first running a Schrödinger bridge model for image generation. The process is then given an observation in the form of a bottom-half of a MNIST digit $8$ in the middle of the dynamical process. As the learned model uses information from the observation both before and after the observation time, the lower half of the image is sharper than the upper half.  We provide further details on this experiment and sampled trajectories in \cref{app:mnist}, and an ablation of multi-modal MNIST image generation in \cref{app:mnist-multi}

\begin{table}[t]
\begin{minipage}[t]{0.47\textwidth}
  \footnotesize
  \caption{Results for the \Benes experiment. We report the negative log predictive density (NLPD, lower better) of the \Benes marginal likelihood over generated particles at the initial and terminal distributions and at the middle of the transport process.\label{tbl:benes}}
  \renewcommand{\tabcolsep}{4pt}
  \begin{tabularx}{\linewidth}{l c c c}
    \toprule
    & \multicolumn{3}{c}{NLPD} \\
    \sc Method & \sc Average & \sc Middle & \sc End \\
    \midrule
    Schrödinger B  & $4.787$ & $3.565$ & $0.1919$ \\
    Iterative smoothing B & $\bf 3.557$ & $\bf 2.985$ & $\bf 0.1567$ \\
    \bottomrule
  \end{tabularx}  
\end{minipage}
\hspace*{\fill}
\begin{minipage}[t]{0.50\textwidth}
  \footnotesize
  \caption{Results for the single-cell embryo RNA experiment. We compare ISB to TrajectoryNet, IPML, and our implementation of IPFP. Unlike the other methods, our model is able to utilize the intermediate data distributions while training. \label{tbl:singlecell}}
  \begin{tabularx}{\linewidth}{l c c c c c}
    \toprule
    & \multicolumn{5}{c}{Earth mover's distance} \\
    \sc Method & \sc $t{=}0$ & \sc $t{=}1$ & \sc $t{=}2$ & \sc $t{=}3$  & \sc $t{=}T$ \\
    \midrule
    TrajectoryNet & $0.62$ & $1.15$ & $1.49$  & $1.26$  & $0.99$\\
    IPML & $\mathbf{0.34}$ & $1.13$ & $ 1.35$ & $1.01$ & $\mathbf{0.49}$\\
    IPFP (no obs) & $0.57$ & $1.53$ & $1.86$ & $1.32$ & $0.85$ \\
    ISB (single-cell obs) & $ 0.57$ &  $\mathbf{1.04}$ & $\mathbf{1.24}$ &  $\mathbf{0.94}$ & $0.83$ \\
    \bottomrule
  \end{tabularx}
\end{minipage}
\end{table}

\paragraph{Single-cell embryo RNA-seq}
Lastly, we evaluated our approach on an Embryoid body scRNA-seq time course \citep{tong2020trajectory}.
The data consists of RNA measurements collected over five time ranges from a developing human embryo system. 
No trajectory information is available, instead we only have access to snapshots of RNA data. This leads to a data set over $5$ time ranges, the first from days 0--3 and the last from days 15--18. 
In the experiment, we followed the protocol by \citet{tong2020trajectory}, reduced the data dimensionality to $d=5$ using PCA, and used the first and last time ranges as the initial and terminal constraints. All other time ranges are considered observational data.  Contrary to the other experiments, intermediate data are imprecise (only a time range of multiple days is known) but abundant.\looseness-1

We learned the ISB using a zero drift and compared it against an unconditional bridge obtained through the IPFP \citep{bortoli2021diffusion}---see \cref{fig:singlecell}. The ISB learns to generate trajectories with marginals closer to the observed data while performing comparably to the IPFP at the initial and terminal stages. This improvement is also verified numerically in \cref{tbl:singlecell}, showing that the ISB obtains a lower Earth mover's distance between the generated marginals and the observational data than IPFP. Additionally, \cref{tbl:singlecell} lists the performance of previous works that do not use the intermediate data during training \citep{tong2020trajectory} or only use it to construct an informative reference drift \citep{vargas2021solving}, see \cref{app:singlecell} for details.
In both cases, ISB outperforms the other approaches w.r.t.\ the intermediate marginal distributions ($t=1,2,3$), while IPML \citep{vargas2021solving} outperforms ISB at the initial and terminal stages due to its data-driven reference drift. Notice that while we reduced the dimensionality via PCA to $5$ for fair comparisons to \citet{vargas2021solving}, the ISB model would also allow modelling the full state-space model, with observations in the high-dimensional gene space and a latent SDE.

\section{Discussion and Conclusion}
\label{sec:discussion}
The dynamic Schrödinger bridge problem provides an appealing setting for posing optimal transport problems as learning non-linear diffusion processes and enables efficient iterative solvers.
However, while recent works have state-of-the-art performance in many complex application domains, they are typically limited to learning bridges with only initial and terminal constraints dependent on observed data.
In this work, we have extended this paradigm and introduced the Iterative Smoothing Bridge (ISB), an iterative algorithm for generating data-conditional smoothing bridges.
For this, we leveraged the strong connections between the constrained bridging problem and particle filtering in sequential Monte Carlo, extending them from pure inference to learning. 
We thoroughly assessed the applicability and flexibility of our approach in various experimental settings, including synthetic data sets and complex real-world scenarios (\eg, bird migration, conditional image generation, and modelling single-cell RNA-sequencing time-series). 
Our experiments showed that ISB generalizes well to high-dimensional data, is computationally efficient, and provides accurate estimates of the marginals at initial, terminal, and intermediate times.

Accurately modelling the dynamics of complex systems under both path constraints induced by sparse observations and initial and terminal constraints is a key challenge in many application domains. 
These include biomedical applications, demographic modelling, and environmental dynamics, but also machine learning specific applications such as reinforcement learning, planning, and time-series modelling.
All these applications have in common that the dynamic nature of the problem is driven by the progression of time, and not only the progression of a generative process as often is the case in, \eg, generative image models. Thus, constraints over intermediate stages have a natural role and interpretation in this wider set of dynamic diffusion modelling applications.
We believe the proposed ISB algorithm opens up new avenues for diffusion models in relevant real-world modelling tasks and will be stimulating for future work.
Recent work suggest close connections between sequential DA, commonly applied in many real-world scientific domains, and the Schr\"odinger bridge problem \citep{reich2019data} further emphasising the potential for future scientific applications of our work by exploiting these links.
Moreover, in the future more sophisticated observational models, alternative strategies to account for multiple observations, and different noise schedules could be explored. 
Lastly, the proposed approach could naturally be extended to other types of optimal transport problems, such as the Wasserstein barycenter, a frequently employed case of the multi-marginal optimal transport problem.

A reference implementation of the ISB model can be found at \url{https://github.com/AaltoML/iterative-smoothing-bridge}.

\begin{figure*}[t!]
    \centering\footnotesize
    \begin{tikzpicture}[scale=1]
        \node[anchor=south west] (I) at (0,-0.3cm) {\includegraphics[trim=250 80 220 80, clip, width=0.4\textwidth]{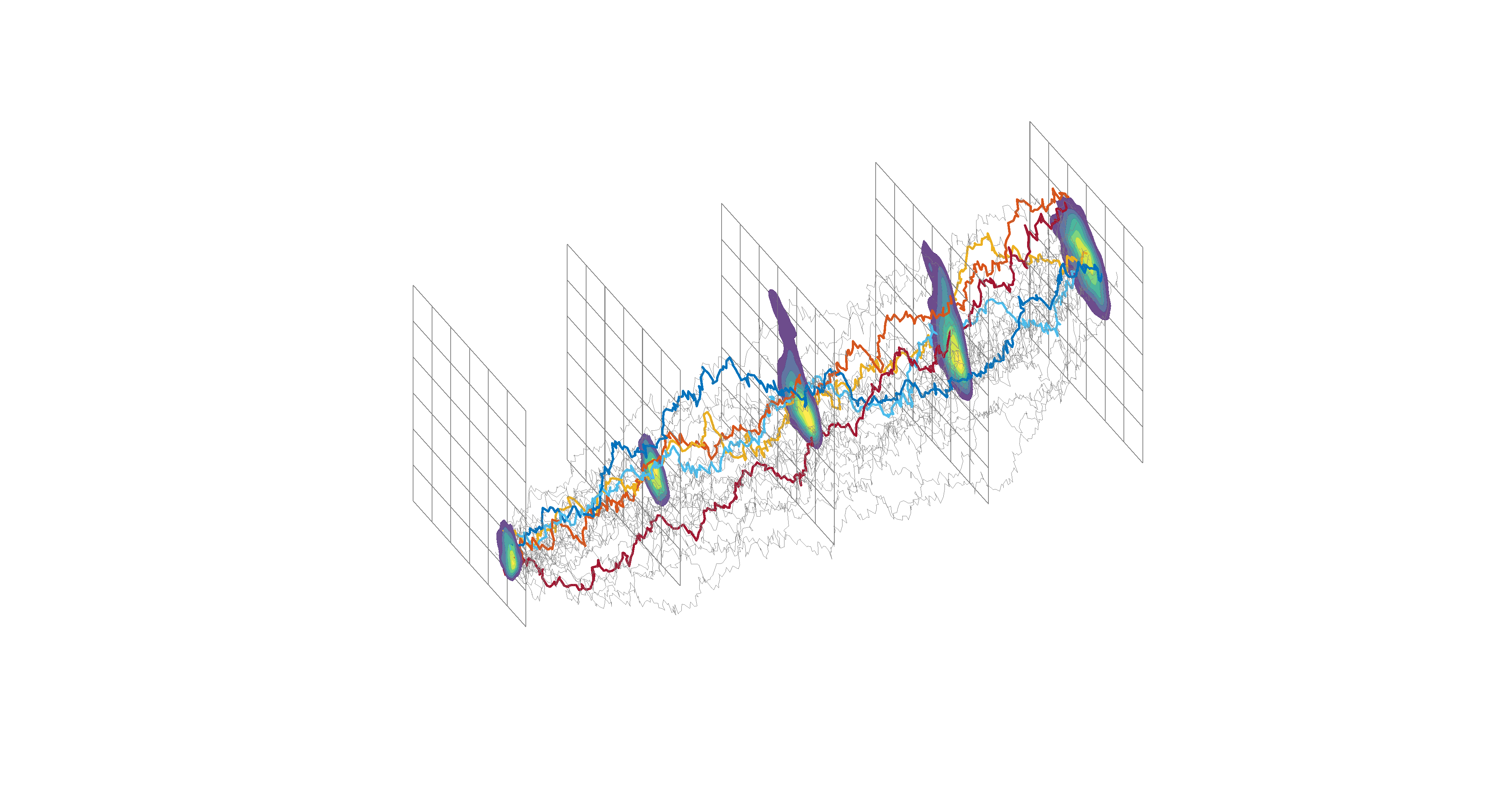}};

        \coordinate (a) at (.1,1.3);
        \draw[black,thick,-latex',->] (a) -- node[midway,rotate=90,yshift=6pt]{\tiny Principal axis \#1}++(0,2);
        \draw[black,thick,-latex',->] (a) -- node[midway,rotate=-45,yshift=-6pt]{\tiny PA \#2} ++(1.1,-1.2);

        \foreach \x [count=\i] in {0,1,2,3,T} {
          \node[rotate=-45] at (-.4+1.35*\i,2.5+\i*.36) {\tiny $t{=}\x$};          
        }

        \node at (3.5,-.25) {(a) Schrödinger bridge (via IPFP)};
        
    \end{tikzpicture}
    \hspace*{1cm}
    \begin{tikzpicture}[scale=1]
        \node[anchor=south west] (I) at (0,-0.3cm) {\includegraphics[trim=250 80 220 80, clip, width=0.4\textwidth]{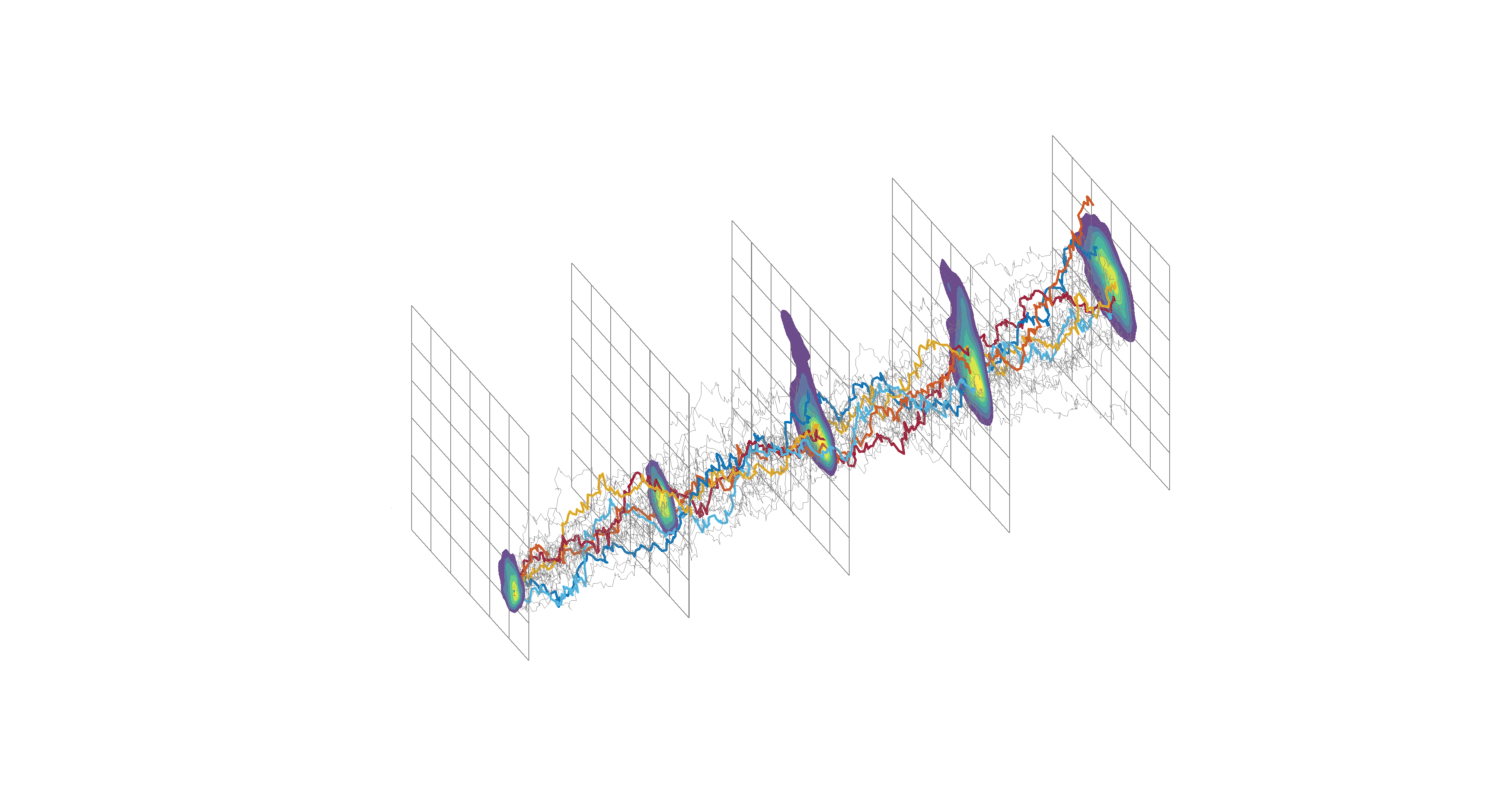}};

        \coordinate (a) at (.1,1.1);
        \draw[black,thick,-latex',->] (a) -- node[midway,rotate=90,yshift=6pt]{\tiny Principal axis \#1}++(0,2);
        \draw[black,thick,-latex',->] (a) -- node[midway,rotate=-45,yshift=-6pt]{\tiny PA \#2} ++(1.1,-1.2);

        \foreach \x [count=\i] in {0,1,2,3,T} {
          \node[rotate=-45] at (-.45+1.39*\i,2.35+\i*.365) {\tiny $t{=}\x$};          
        }

        \node at (3.5,-.25) {(b) Iterative Smoothing Bridge};
        
    \end{tikzpicture}
    \vspace*{-6pt}
    \caption{Illustration of the trajectories of the single-cell experiment for the Schrödinger bridge (a) and the ISB (b), projected onto the first two principal components. The first five trajectories are highlighted in colour, and intermediate observation densities visualized as slices. \label{fig:singlecell}}
\end{figure*}

\section*{Acknowledgements and Disclosure of Funding}
Authors acknowledge funding from the Academy of Finland (grants 339730, 324345, and 347279). We also acknowledge the computational resources provided by the Aalto Science-IT project and CSC -- IT Center for Science, Finland. We wish to thank Adrien Corenflos for sharing an implementation of differentiable resampling in PyTorch, and Prakhar Verma for  comments on the manuscript.

\bibliographystyle{tmlr}

\begin{thebibliography}{69}
\providecommand{\natexlab}[1]{#1}
\providecommand{\url}[1]{\texttt{#1}}
\expandafter\ifx\csname urlstyle\endcsname\relax
  \providecommand{\doi}[1]{doi: #1}\else
  \providecommand{\doi}{doi: \begingroup \urlstyle{rm}\Url}\fi

\bibitem[Ambrosini et~al.(2014)Ambrosini, Borgoni, Rubolini, Sicurella,
  Fiedler, Bairlein, Baillie, Robinson, Clark, Spina, and Saino]{ambrosini2014}
Roberto Ambrosini, Riccardo Borgoni, Diego Rubolini, Beatrice Sicurella,
  Wolfgang Fiedler, Franz Bairlein, Stephen~R. Baillie, Robert~A. Robinson,
  Jacquie~A. Clark, Fernando Spina, and Nicola Saino.
\newblock Modelling the progression of bird migration with conditional
  autoregressive models applied to ringing data.
\newblock \emph{PLoS ONE}, 9\penalty0 (7):\penalty0 1--10, 07 2014.

\bibitem[Asch et~al.(2016)Asch, Bocquet, and Nodet]{asch2016data}
Mark Asch, Marc Bocquet, and Ma{\"e}lle Nodet.
\newblock \emph{Data Assimilation: Methods, Algorithms, and Applications}.
\newblock SIAM, 2016.

\bibitem[Beiser et~al.(2023)Beiser, Holm, and Eidsvik]{beiser2023comparison}
Florian Beiser, H{\aa}vard~Heitlo Holm, and Jo~Eidsvik.
\newblock Comparison of ensemble-based data assimilation methods for sparse
  oceanographic data.
\newblock \emph{arXiv preprint arXiv:2302.07197}, 2023.

\bibitem[Berner et~al.(2022)Berner, Richter, and Ullrich]{berner2022an}
Julius Berner, Lorenz Richter, and Karen Ullrich.
\newblock An optimal control perspective on diffusion-based generative
  modeling.
\newblock In \emph{NeurIPS 2022 Workshop on Score-Based Methods}, 2022.

\bibitem[Bernton et~al.(2019)Bernton, Heng, Doucet, and
  Jacob]{bernton2019schrodinger}
Espen Bernton, Jeremy Heng, Arnaud Doucet, and Pierre~E. Jacob.
\newblock Schr\"odinger bridge samplers.
\newblock \emph{arXiv preprint arXiv:1912.13170}, 2019.

\bibitem[Bresler(1986)]{bresler1986twofilter}
Yoram Bresler.
\newblock Two-filter formulae for discrete-time non-linear {B}ayesian
  smoothing.
\newblock \emph{International Journal of Control}, 43\penalty0 (2):\penalty0
  629--641, 1986.

\bibitem[Briers et~al.(2009)Briers, Doucet, and Maskell]{briers2009smoothing}
Mark Briers, Arnaud Doucet, and Simon Maskell.
\newblock Smoothing algorithms for state–space models.
\newblock \emph{Annals of the Institute of Statistical Mathematics},
  62:\penalty0 61--89, 02 2009.

\bibitem[Bunne et~al.(2022)Bunne, Hsieh, Cuturi, and
  Krause]{bunne2022recovering}
Charlotte Bunne, Ya{-}Ping Hsieh, Marco Cuturi, and Andreas Krause.
\newblock Recovering stochastic dynamics via {G}aussian {S}chr{\"{o}}dinger
  bridges.
\newblock \emph{arXiv preprint arXiv:2202.05722}, 2022.

\bibitem[Caluya \& Halder(2021)Caluya and Halder]{caluya2021constraint}
Kenneth~F. Caluya and Abhishek Halder.
\newblock Reflected {S}chr{\"{o}}dinger bridge: Density control with path
  constraints.
\newblock In \emph{American Control Conference, {ACC} 2021}, pp.\  1137--1142.
  {IEEE}, 2021.

\bibitem[Caluya \& Halder(2022)Caluya and Halder]{caluya2021wasserstein}
Kenneth~F. Caluya and Abhishek Halder.
\newblock Wasserstein proximal algorithms for the {S}chr{\"{o}}dinger bridge
  problem: Density control with nonlinear drift.
\newblock \emph{{IEEE} Transactions on Automatic Control}, 67\penalty0
  (3):\penalty0 1163--1178, 2022.

\bibitem[Chen et~al.(2021)Chen, Georgiou, and Pavon]{chen2021stochastic}
Yongxin Chen, Tryphon~T. Georgiou, and Michele Pavon.
\newblock Stochastic control liaisons: {R}ichard {S}inkhorn meets {G}aspard
  {M}onge on a {S}chr{\"{o}}dinger {B}ridge.
\newblock \emph{{SIAM} Review}, 63\penalty0 (2):\penalty0 249--313, 2021.

\bibitem[Chopin \& Papaspiliopoulos(2020)Chopin and
  Papaspiliopoulos]{chopin2020introduction}
Nicolas Chopin and Omiros Papaspiliopoulos.
\newblock \emph{An Introduction to Sequential Monte Carlo}.
\newblock Springer, 2020.

\bibitem[Corenflos et~al.(2021)Corenflos, Thornton, Deligiannidis, and
  Doucet]{corenflos2021diffpf}
Adrien Corenflos, James Thornton, George Deligiannidis, and Arnaud Doucet.
\newblock Differentiable particle filtering via entropy-regularized optimal
  transport.
\newblock In \emph{Proceedings of the 38th International Conference on Machine
  Learning, {ICML} 2021}, volume 139 of \emph{Proceedings of Machine Learning
  Research}, pp.\  2100--2111. {PMLR}, 2021.

\bibitem[Cuturi(2013)]{cuturi2013sinkhorn}
Marco Cuturi.
\newblock Sinkhorn distances: Lightspeed computation of optimal transport.
\newblock In \emph{Advances in Neural Information Processing Systems 26
  (NIPS)}, pp.\  2292--2300. Curran Associates, Inc., 2013.

\bibitem[De~Bortoli et~al.(2021)De~Bortoli, Thornton, Heng, and
  Doucet]{bortoli2021diffusion}
Valentin De~Bortoli, James Thornton, Jeremy Heng, and Arnaud Doucet.
\newblock Diffusion {S}chr{\"o}dinger bridge with applications to score-based
  generative modeling.
\newblock In \emph{Advances in Neural Information Processing Systems 34
  (NeurIPS)}, pp.\  17695--17709. Curran Associates, Inc., 2021.

\bibitem[Dhariwal \& Nichol(2021)Dhariwal and Nichol]{dhariwal2021diffusion}
Prafulla Dhariwal and Alexander~Quinn Nichol.
\newblock Diffusion models beat {GAN}s on image synthesis.
\newblock In \emph{Advances in Neural Information Processing Systems 35
  (NeurIPS)}, pp.\  8780--8794. Curran Associates, Inc., 2021.

\bibitem[Doucet et~al.(2000)Doucet, Godsill, and Andrieu]{doucet2000smc}
Arnaud Doucet, Simon Godsill, and Christophe Andrieu.
\newblock On sequential {M}onte {C}arlo sampling methods for {B}ayesian
  filtering.
\newblock \emph{Statistics and Computing}, 10\penalty0 (3):\penalty0 197--208,
  2000.

\bibitem[Doucet et~al.(2001)Doucet, De~Freitas, and
  Gordon]{doucet2001sequential}
Arnaud Doucet, Nando De~Freitas, and Neil~James Gordon.
\newblock \emph{Sequential {M}onte {C}arlo methods in practice}.
\newblock Statistics for Engineering and Information Science. Springer, 2001.

\bibitem[Doucet et~al.(2022)Doucet, Grathwohl, Matthews, and
  Strathmann]{doucet2022scorebased}
Arnaud Doucet, Will~Sussman Grathwohl, Alexander G. D.~G. Matthews, and Heiko
  Strathmann.
\newblock Score-based diffusion meets annealed importance sampling.
\newblock In \emph{Advances in Neural Information Processing Systems 35}, 2022.

\bibitem[Fernandes et~al.(2021)Fernandes, Vargas, Ek, and
  Campbell]{fernandes2021shooting}
David~Lopes Fernandes, Francisco Vargas, Carl~Henrik Ek, and Neill~DF Campbell.
\newblock Shooting {S}chr{\"o}dinger’s cat.
\newblock In \emph{Proceedings of the Fourth Symposium on Advances in
  Approximate Bayesian Inference (AABI)}, 2021.

\bibitem[F{\"o}llmer(1988)]{follmer1988random}
Hans F{\"o}llmer.
\newblock Random fields and diffusion processes.
\newblock In \emph{{\'E}cole d'{\'E}t{\'e} de Probabilit{\'e}s de Saint-Flour
  XV--XVII, 1985--87}, pp.\  101--203. Springer, 1988.

\bibitem[Haussmann \& Pardoux(1986)Haussmann and Pardoux]{haussmann1986time}
Ulrich~G Haussmann and Etienne Pardoux.
\newblock Time reversal of diffusions.
\newblock \emph{The Annals of Probability}, pp.\  1188--1205, 1986.

\bibitem[Ho et~al.(2020)Ho, Jain, and Abbeel]{ho2020denoise}
Jonathan Ho, Ajay Jain, and Pieter Abbeel.
\newblock Denoising diffusion probabilistic models.
\newblock In \emph{Advances in Neural Information Processing Systems 33
  (NeurIPS)}, pp.\  6840--6851. Curran Associates, Inc., 2020.

\bibitem[Hostettler(2015)]{hostettler2015twofilter}
Roland Hostettler.
\newblock A two filter particle smoother for {W}iener state-space systems.
\newblock In \emph{2015 IEEE Conference on Control Applications (CCA)}, pp.\
  412--417, 2015.

\bibitem[Hyv{\"a}rinen \& Dayan(2005)Hyv{\"a}rinen and
  Dayan]{hyvarinen2005estimation}
Aapo Hyv{\"a}rinen and Peter Dayan.
\newblock Estimation of non-normalized statistical models by score matching.
\newblock \emph{Journal of Machine Learning Research}, 6\penalty0 (4):\penalty0
  695--709, 2005.

\bibitem[Janner et~al.(2022)Janner, Du, Tenenbaum, and
  Levine]{Janner2022planning}
Michael Janner, Yilun Du, Joshua~B. Tenenbaum, and Sergey Levine.
\newblock Planning with diffusion for flexible behavior synthesis.
\newblock In Kamalika Chaudhuri, Stefanie Jegelka, Le~Song, Csaba
  Szepesv{\'{a}}ri, Gang Niu, and Sivan Sabato (eds.), \emph{International
  Conference on Machine Learning {(ICML)}}, volume 162, pp.\  9902--9915.
  {PMLR}, 2022.

\bibitem[Kokkala et~al.(2014)Kokkala, Solin, and Särkkä]{kokkala2014em}
Juho Kokkala, Arno Solin, and Simo Särkkä.
\newblock Expectation maximization based parameter estimation by sigma-point
  and particle smoothing.
\newblock In \emph{Proceedings of the 17th International Conference on
  Information Fusion (FUSION)}, pp.\  1--8, 2014.

\bibitem[Kullback(1968)]{kullback1968probability}
Solomon Kullback.
\newblock Probability densities with given marginals.
\newblock \emph{The Annals of Mathematical Statistics}, 39\penalty0
  (4):\penalty0 1236--1243, 1968.

\bibitem[LeCun et~al.(1998)LeCun, Cortes, and Burges]{MNIST}
Yann LeCun, Corinna Cortes, and Christopher~J.C. Burges.
\newblock The {MNIST} database of handwritten digits, 1998.
\newblock URL \url{http://yann.lecun.com/exdb/mnist/}.

\bibitem[Li et~al.(2020)Li, Wong, Chen, and Duvenaud]{li2020sdeadjoint}
Xuechen Li, Ting-Kam~Leonard Wong, Ricky T.~Q. Chen, and David Duvenaud.
\newblock Scalable gradients for stochastic differential equations.
\newblock In \emph{Proceedings of the Twenty Third International Conference on
  Artificial Intelligence and Statistics (AISTATS)}, volume 108 of
  \emph{Proceedings of Machine Learning Research}, pp.\  3870--3882. PMLR,
  2020.

\bibitem[Liu et~al.(2022)Liu, Chen, So, and Theodorou]{liu2022deep}
Guan-Horng Liu, Tianrong Chen, Oswin So, and Evangelos Theodorou.
\newblock Deep generalized schr\"odinger bridge.
\newblock In \emph{Advances in Neural Information Processing Systems 35}, 2022.

\bibitem[Léonard(2014)]{leonard2014schrodi}
Christian Léonard.
\newblock A survey of the {S}chrödinger problem and some of its connections
  with optimal transport.
\newblock \emph{Discrete \& Continuous Dynamical Systems}, 34\penalty0
  (4):\penalty0 1533--1574, 2014.

\bibitem[Maoutsa(2023)]{maoutsa2023geometric}
Dimitra Maoutsa.
\newblock Geometric constraints improve inference of sparsely observed
  stochastic dynamics.
\newblock \emph{arXiv preprint arXiv:2304.00423}, 2023.

\bibitem[Maoutsa \& Opper(2022)Maoutsa and Opper]{maoutsa2021deterministic}
Dimitra Maoutsa and Manfred Opper.
\newblock Deterministic particle flows for constraining stochastic nonlinear
  systems.
\newblock \emph{Physical Review Research}, 4\penalty0 (4), 2022.

\bibitem[Maoutsa et~al.(2020)Maoutsa, Reich, and Opper]{maoutsa2020interacting}
Dimitra Maoutsa, Sebastian Reich, and Manfred Opper.
\newblock Interacting particle solutions of {F}okker–{P}lanck equations
  through gradient–log–density estimation.
\newblock \emph{Entropy}, 22:\penalty0 802, 07 2020.

\bibitem[Mendez et~al.(2020)Mendez, Hoffman, Cherry, Lemmon, and
  Weinberg]{mendez2020cell}
Mario~J Mendez, Matthew~J Hoffman, Elizabeth~M Cherry, Christopher~A Lemmon,
  and Seth~H Weinberg.
\newblock Cell fate forecasting: {A} data-assimilation approach to predict
  epithelial-mesenchymal transition.
\newblock \emph{Biophysical Journal}, 118\penalty0 (7):\penalty0 1749--1768,
  2020.

\bibitem[Mitter(1996)]{mitter1996filtering}
S.K. Mitter.
\newblock Filtering and stochastic control: a historical perspective.
\newblock \emph{IEEE Control Systems Magazine}, 16\penalty0 (3):\penalty0
  67--76, 1996.

\bibitem[Nichol \& Dhariwal(2021)Nichol and Dhariwal]{nichol2021ImprovedDD}
Alexander~Quinn Nichol and Prafulla Dhariwal.
\newblock Improved denoising diffusion probabilistic models.
\newblock In \emph{Proceedings of the 38th International Conference on Machine
  Learning (ICML)}, volume 139 of \emph{Proceedings of Machine Learning
  Research}, pp.\  8162--8171. PMLR, 2021.

\bibitem[{\O}ksendal(2003)]{Oksendal:2003}
Bernt {\O}ksendal.
\newblock \emph{Stochastic Differential Equations: {A}n Introduction with
  Applications}.
\newblock Springer, New York, NY, sixth edition, 2003.

\bibitem[Park et~al.(2022)Park, Lee, and Kwon]{Park2022sde}
Sung~Woo Park, Kyungjae Lee, and Junseok Kwon.
\newblock Neural markov controlled {SDE:} {S}tochastic optimization for
  continuous-time data.
\newblock In \emph{International Conference on Learning Representations
  {(ICLR)}}, 2022.

\bibitem[Pellegrino et~al.(2015)Pellegrino, Cucco, Follestad, and
  Boos]{pellegrino2015lack}
Irene Pellegrino, Marco Cucco, Arne Follestad, and Mathieu Boos.
\newblock Lack of genetic structure in greylag goose ({A}nser anser)
  populations along the {E}uropean {A}tlantic flyway.
\newblock \emph{PeerJ}, 3:\penalty0 e1161, 2015.

\bibitem[Peyr{\'{e}} \& Cuturi(2019)Peyr{\'{e}} and
  Cuturi]{peyre2017computational}
Gabriel Peyr{\'{e}} and Marco Cuturi.
\newblock Computational optimal transport.
\newblock \emph{Foundations and Trends in Machine Learning}, 11\penalty0
  (5-6):\penalty0 355--607, 2019.

\bibitem[Pitt \& Shephard(1999)Pitt and Shephard]{pitt1999filtering}
Michael~K. Pitt and Neil Shephard.
\newblock Filtering via simulation: Auxiliary particle filters.
\newblock \emph{Journal of the American Statistical Association}, 94\penalty0
  (446):\penalty0 590--599, 1999.

\bibitem[Rasul et~al.(2021)Rasul, Sheikh, Schuster, Bergmann, and
  Vollgraf]{Rasul2021timeseries}
Kashif Rasul, Abdul{-}Saboor Sheikh, Ingmar Schuster, Urs~M. Bergmann, and
  Roland Vollgraf.
\newblock Multivariate probabilistic time series forecasting via conditioned
  normalizing flows.
\newblock In \emph{International Conference on Learning Representations
  {(ICLR)}}, 2021.

\bibitem[Reich(2013)]{reich2013nonparametric}
Sebastian Reich.
\newblock A nonparametric ensemble transform method for {B}ayesian inference.
\newblock \emph{SIAM Journal on Scentific Computing}, 35, 2013.

\bibitem[Reich(2019)]{reich2019data}
Sebastian Reich.
\newblock Data assimilation: {T}he {S}chrödinger perspective.
\newblock \emph{Acta Numerica}, 28:\penalty0 635--711, 2019.

\bibitem[Rubanova et~al.(2019)Rubanova, Chen, and
  Duvenaud]{rubanova2019latentode}
Yulia Rubanova, Tian~Qi Chen, and David~K Duvenaud.
\newblock Latent ordinary differential equations for irregularly-sampled time
  series.
\newblock In \emph{Advances in Neural Information Processing Systems 32
  (NeurIPS)}, pp.\  5321--5331. Curran Associates, Inc., 2019.

\bibitem[Ruschendorf(1995)]{ruschendorf1995convergence}
Ludger Ruschendorf.
\newblock Convergence of the iterative proportional fitting procedure.
\newblock \emph{The Annals of Statistics}, 23\penalty0 (4):\penalty0
  1160--1174, 1995.

\bibitem[Rüschendorf \& Thomsen(1993)Rüschendorf and
  Thomsen]{ruschendorf1993note}
L.~Rüschendorf and W.~Thomsen.
\newblock Note on the {S}chr\"odinger equation and {I}-projections.
\newblock \emph{Statistics \& Probability Letters}, 17\penalty0 (5):\penalty0
  369--375, 1993.

\bibitem[S{\"a}rkk{\"a}(2013)]{sarkka2013bayesian}
Simo S{\"a}rkk{\"a}.
\newblock \emph{Bayesian Filtering and Smoothing}.
\newblock Cambridge University Press, Cambridge, UK, 2013.

\bibitem[S{\"a}rkk{\"a} \& Solin(2019)S{\"a}rkk{\"a} and
  Solin]{Sarkka+Solin:2019}
Simo S{\"a}rkk{\"a} and Arno Solin.
\newblock \emph{Applied Stochastic Differential Equations}.
\newblock Cambridge University Press, Cambridge, UK, 2019.

\bibitem[Schr\"odinger(1932)]{schrod1932surla}
E.~Schr\"odinger.
\newblock Sur la th\'eorie relativiste de l'\'electron et l'interpr\'etation de
  la m\'ecanique quantique.
\newblock \emph{Annales de l'institut Henri Poincar\'e}, 2\penalty0
  (4):\penalty0 269--310, 1932.

\bibitem[Schön et~al.(2011)Schön, Wills, and Ninness]{schon2011system}
Thomas~B. Schön, Adrian Wills, and Brett Ninness.
\newblock System identification of nonlinear state-space models.
\newblock \emph{Automatica}, 47\penalty0 (1):\penalty0 39--49, 2011.

\bibitem[Shi et~al.(2022)Shi, Bortoli, Deligiannidis, and
  Doucet]{Shi2022condSchroedinger}
Yuyang Shi, Valentin~De Bortoli, George Deligiannidis, and Arnaud Doucet.
\newblock Conditional simulation using diffusion {S}chr\"odinger bridges.
\newblock In \emph{38th Conference on Uncertainty in Artificial Intelligence}.
  UAI, 2022.

\bibitem[Shi et~al.(2023)Shi, Bortoli, Campbell, and Doucet]{shi2023diffusion}
Yuyang Shi, Valentin~De Bortoli, Andrew Campbell, and Arnaud Doucet.
\newblock Diffusion schr\"odinger bridge matching.
\newblock \emph{arXiv preprint arXiv:2303.16852}, 2023.

\bibitem[Song et~al.(2021{\natexlab{a}})Song, Durkan, Murray, and
  Ermon]{song2021maximum}
Yang Song, Conor Durkan, Iain Murray, and Stefano Ermon.
\newblock Maximum likelihood training of score-based diffusion models.
\newblock In \emph{Advances in Neural Information Processing Systems 35
  (NeurIPS)}, pp.\  1415--1428. Curran Associates, Inc., 2021{\natexlab{a}}.

\bibitem[Song et~al.(2021{\natexlab{b}})Song, Sohl-Dickstein, Kingma, Kumar,
  Ermon, and Poole]{song2020sdegen}
Yang Song, Jascha Sohl-Dickstein, Diederik~P. Kingma, Abhishek Kumar, Stefano
  Ermon, and Ben Poole.
\newblock Score-based generative modeling through stochastic differential
  equations.
\newblock In \emph{International Conference on Learning Representations
  (ICLR)}, 2021{\natexlab{b}}.

\bibitem[Song et~al.(2022)Song, Shen, Xing, and Ermon]{song2022solving}
Yang Song, Liyue Shen, Lei Xing, and Stefano Ermon.
\newblock Solving inverse problems in medical imaging with score-based
  generative models.
\newblock In \emph{International Conference on Learning Representations
  (ICLR)}, 2022.

\bibitem[Svensson \& Schön(2017)Svensson and Schön]{svensson2017flexible}
Andreas Svensson and Thomas~B. Schön.
\newblock A flexible state–space model for learning nonlinear dynamical
  systems.
\newblock \emph{Automatica}, 80:\penalty0 189--199, 2017.

\bibitem[Tashiro et~al.(2021)Tashiro, Song, Song, and Ermon]{tashiro2021csdi}
Yusuke Tashiro, Jiaming Song, Yang Song, and Stefano Ermon.
\newblock {CSDI}: {C}onditional score-based diffusion models for probabilistic
  time series imputation.
\newblock In \emph{Advances in Neural Information Processing Systems 35
  (NeurIPS)}, pp.\  24804--24816. Curran Associates, Inc., 2021.

\bibitem[Tianrong~Chen(2022)]{chen2022likelihood}
Evangelos A.~Theodorou Tianrong~Chen, Guan-Horng~Liu.
\newblock Likelihood training of schr\"odinger bridge using forward-backward
  {SDE}s theory.
\newblock In \emph{International Conference on Learning Representations
  (ICLR)}, 2022.

\bibitem[Todorov(2008)]{todorov2008general}
Emanuel Todorov.
\newblock General duality between optimal control and estimation.
\newblock In \emph{Proceedings of the 47th IEEE Conference on Decision and
  Control}, pp.\  4286--4292, 2008.

\bibitem[Tong et~al.(2020)Tong, Huang, Wolf, van Dijk, and
  Krishnaswamy]{tong2020trajectory}
Alexander Tong, Jessie Huang, Guy Wolf, David van Dijk, and Smita Krishnaswamy.
\newblock Trajectorynet: {A} dynamic optimal transport network for modeling
  cellular dynamics.
\newblock In \emph{{ICML}}, volume 119 of \emph{Proceedings of Machine Learning
  Research}, pp.\  9526--9536. {PMLR}, 2020.

\bibitem[Vargas et~al.(2021)Vargas, Thodoroff, Lamacraft, and
  Lawrence]{vargas2021solving}
Francisco Vargas, Pierre Thodoroff, Austen Lamacraft, and Neil Lawrence.
\newblock Solving {S}chr{\"o}dinger bridges via maximum likelihood.
\newblock \emph{Entropy}, 23\penalty0 (9):\penalty0 1134, 2021.

\bibitem[Vincent(2011)]{vincent2011connection}
Pascal Vincent.
\newblock A connection between score matching and denoising autoencoders.
\newblock \emph{Neural Computation}, 23\penalty0 (7):\penalty0 1661--1674,
  2011.

\bibitem[Wang et~al.(2000)Wang, Zou, and Zhu]{wang2000data}
Bin Wang, Xiaolei Zou, and Jiang Zhu.
\newblock Data assimilation and its applications.
\newblock \emph{Proceedings of the National Academy of Sciences}, 97\penalty0
  (21):\penalty0 11143--11144, 2000.

\bibitem[Wang et~al.(2021)Wang, Jiao, Xu, Wang, and Yang]{wang2021DeepGL}
Gefei Wang, Yuling Jiao, Qian Xu, Yang Wang, and Can Yang.
\newblock Deep generative learning via {S}chr{\"o}dinger bridge.
\newblock In \emph{Proceedings of the 38th International Conference on Machine
  Learning (ICML)}, volume 139 of \emph{Proceedings of Machine Learning
  Research}, pp.\  10794--10804. PMLR, 2021.

\bibitem[Whitaker et~al.(2009)Whitaker, Compo, and
  Th{\'e}paut]{whitaker2009comparison}
Jeffrey~S Whitaker, Gilbert~P Compo, and Jean-No{\"e}l Th{\'e}paut.
\newblock A comparison of variational and ensemble-based data assimilation
  systems for reanalysis of sparse observations.
\newblock \emph{Monthly Weather Review}, 137\penalty0 (6):\penalty0 1991--1999,
  2009.

\bibitem[Zhang et~al.(2022)Zhang, Zhu, and Lin]{zhang2022neural}
Jingdong Zhang, Qunxi Zhu, and Wei Lin.
\newblock Neural stochastic control.
\newblock In \emph{Advances in Neural Information Processing Systems 35}, 2022.

\end{thebibliography}

\clearpage

\onecolumn

\appendix
\section{Method Details}\label{app:maths}
We present the details of the objective function derivation in \cref{app:meanmatch} and explain the connection of the backward drift function to Hamilton--Jacobi equations in \cref{app:hamiltonjacobi}. In \cref{app:infinity}, we discuss the behaviour of our model at the limit $M \to \infty$, that is, when the observations fully represent the marginal densities of the stochastic process.
\subsection{Deriving the Mean-matching Loss at Observation Times}\label{app:meanmatch}
Recall that the forward loss is written as 
\begin{equation}\label{eq:appfullloss}
\ell(\phi) = \sum_{i=1}^N \left[ \sum_{k=1}^{K }\ell_{k, \text{obs}}^i (\phi) \mathbb{I}_{y_{t_k} \not = \emptyset} +\ell_{k, \text{nobs}}^i (\phi) \mathbb{I}_{y_{t_k} = \emptyset} \right],
\end{equation}
where the loss at observations  $\ell_{k, \text{obs}}^i (\phi)$ and loss elsewhere $\ell_{k, \text{nobs}}^i (\phi)$ are
\begin{equation}\label{eq:appbasicloss}
   \ell_{k+1,\text{nobs}}^i = \| b_{l,  \phi}(\vx_{t_{k+1}}^i, t_{k+1})\Delta_k - \vx_{t_{k+1}}^i  - f_{l-1, \theta}(\vx_{t_{k+1}}^i, t_k)\Delta_k + \vx_{t_k}^i +  f_{l-1, \theta}(\vx_{t_k}^i, t_k)\Delta_k \|^2, 
\end{equation}
\begin{multline}\label{eq:appobsloss}
  \ell_{k+1,\text{{obs}}}^i =  \|  b_{l,  \phi}(\vx_{t_{k+1}}^i, t_{k+1})\Delta_k - \vx_{t_{k+1}}^i - f_{l-1, \theta}(\vx_{t_{k+1}}^i, t_k)\Delta_k \\
  +\frac{1}{C_{\epsilon, i}}  \textstyle\sum_{n=1}^N T_{(\epsilon),i, n} \left( \vx_{t_k}^n + f_{l-1, \theta}(\vx_{t_k}^n, t_k)\Delta_k \right)\|^2, 
\end{multline}
For convenience, we state the backward loss functions which follow similarly to their forward versions. The backward loss is defined as
\begin{equation}\label{eq:appfulllossbwd}
\overleftarrow{\ell(\theta)} = \sum_{i=1}^N \left[ \sum_{k=1}^{K }\overleftarrow{\ell}_{k, \text{obs}}^i (\theta) \mathbb{I}_{y_{t_k} \not = \emptyset} +\overleftarrow{\ell}_{k, \text{nobs}}^i (\theta) \mathbb{I}_{y_{t_k} = \emptyset} \right],
\end{equation}
where the loss at observations  $\overleftarrow{\ell}_{k, \text{obs}}^i (\theta)$ and loss elsewhere $\overleftarrow{\ell}_{k, \text{nobs}}^i (\theta)$ are
\begin{equation}\label{eq:appbasiclossbwd}
   \overleftarrow{\ell}_{k+1,\text{nobs}}^i = \| f_{l,  \theta}(\vx_{t_{k+1}}^i, t_{k+1})\Delta_k - \vx_{t_{k+1}}^i     - b_{l, \theta}(\vx_{t_{k+1}}^i, t_k)\Delta_k + \vx_{t_k}^i +  b_{l, \theta}(\vx_{t_k}^i, t_k)\Delta_k \|^2,
\end{equation}
\begin{multline}\label{eq:appobslossbwd}
 \overleftarrow{\ell}_{k+1,\text{{obs}}}^i = \|  f_{l,  \theta}(\vx_{t_{k+1}}^i, t_{k+1})\Delta_k - \vx_{t_{k+1}}^i - b_{l, \phi}(\vx_{t_{k+1}}^i, t_k)\Delta_k \\
  +\frac{1}{C_{\epsilon, i}}  \textstyle\sum_{n=1}^N T_{(\epsilon),i, n} \left( \vx_{t_k}^n + b_{l, \phi}(\vx_{t_k}^n, t_k)\Delta_k \right)\|^2.
\end{multline}

\begin{prop}\label{prop:obsloss}
Define the forward SDE as
\begin{align}
  \dd \vx_t &= f_{l, \theta}(\vx_{t}, t)\dd t + g(t) \dd \vbeta_t, \quad &&\vx_0 \sim \pi_0, \label{eq:forwardSDEprop}
\end{align}
and a backward SDE drift as
\begin{equation}\label{eq:smoothreverse}
b_{l, \phi} (\vx_{t_{k+1}}, t_{k+1}) = f_{l-1, \theta}(\vx_{t_{k+1}}, t_k )- g(t_{k+1})^2 \nabla \ln p_{t_{k+1}},
\end{equation}
where $p_{t_{k+1}}$ is the particle filtering density after differential resampling at time $t_{k+1}$. Then $b_{l, \phi}(\vx_{t_{k+1}}, t_{k+1})$
minimizes the loss function
\begin{multline}\label{eq:propobsloss}
  \ell_{k+1,\text{{obs}}}^i =  \|  b_{l,  \phi}(\vx_{t_{k+1}}^i, t_{k+1})\Delta_k - \vx_{t_{k+1}}^i - f_{l-1, \theta}(\vx_{t_{k+1}}^i, t_k)\Delta_k \\
  +\frac{1}{C_{\epsilon, i}}  \textstyle\sum_{n=1}^N T_{(\epsilon),i, n} \left( \vx_{t_k}^n + f_{l-1, \theta}(\vx_{t_k}^n, t_k)\Delta_k \right)\|^2, 
\end{multline}
where we denote $\MC_{\epsilon, i} = \frac{1}{g(t_{k+1})^2 \Delta_k} \Var\left( \sum_{n=1}^N T_{(\epsilon), i, n} \tilde{\vx}_{t_{k+1}}^n\right)$, and $\{ \tilde{\vx}_{t_{k+1}}^i\}_{i=1}^N$ are the particles before resampling.
\end{prop}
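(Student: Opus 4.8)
The plan is to exploit the fact that \cref{eq:propobsloss} is a squared Euclidean norm in which the learnable quantity $b_{l,\phi}(\vx_{t_{k+1}}^i,t_{k+1})$ enters linearly, so that minimising the (population) loss over admissible drift functions is an $L^2$-projection whose optimum is characterised by the orthogonality/first-order condition, i.e.\ the conditional expectation of the matched target given $\vx_{t_{k+1}}$. The matched target here is $\vx_{t_{k+1}}^i + f_{l-1,\theta}(\vx_{t_{k+1}}^i,t_k)\Delta_k - \tfrac{1}{C_{\epsilon,i}}\sum_n T_{(\epsilon),i,n}\bigl(\vx_{t_k}^n + f_{l-1,\theta}(\vx_{t_k}^n,t_k)\Delta_k\bigr)$. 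First I would treat the no-observation loss \cref{eq:appbasicloss} as a template: there the matched target reduces, via the Euler--Maruyama update, to the forward increment, and the standard mean-matching argument of \citet{bortoli2021diffusion} identifies the minimiser with the score-based time reversal $f_{l-1,\theta} - g(t_{k+1})^2\nabla\ln p_{t_{k+1}}$ of \citet{haussmann1986time}. The task is then to show that the same reversal emerges at observation times, where the one-step transition density has been altered by the differentiable resampling map.

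The key step is to characterise the post-resampling transition density. After differentiable resampling the particle is $\vx_{t_{k+1}}^i = \sum_n T_{(\epsilon),i,n}\,\tilde{\vx}_{t_{k+1}}^n$, a transport-map-weighted sum of the pre-resampling pushes $\tilde{\vx}_{t_{k+1}}^n$, each of which is Gaussian with mean $\vx_{t_k}^n + f_{l-1,\theta}(\vx_{t_k}^n,t_k)\Delta_k$ and covariance $g(t_k)^2\Delta_k\,\MI$. Conditioning on the current particles and treating $\MT_{(\epsilon)}$ as fixed, a weighted sum of independent Gaussians is again Gaussian, so the transition density is $\N(m_i, v_i\,\MI)$ with mean $m_i = \sum_n T_{(\epsilon),i,n}\bigl(\vx_{t_k}^n + f_{l-1,\theta}(\vx_{t_k}^n,t_k)\Delta_k\bigr)$ and variance $v_i = \Var\bigl(\sum_n T_{(\epsilon),i,n}\,\tilde{\vx}_{t_{k+1}}^n\bigr) = C_{\epsilon,i}\,g(t_{k+1})^2\Delta_k$, which is precisely the defining identity for $C_{\epsilon,i}$. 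For this Gaussian the score is $\nabla\ln p_{t_{k+1}}(\vx_{t_{k+1}}^i) = -(\vx_{t_{k+1}}^i - m_i)/v_i$, which via a Tweedie-type identity links the conditional mean of the forward push to the marginal score.

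Finally I would assemble the two ingredients: substituting the Gaussian score into \cref{eq:smoothreverse} and multiplying through by $\Delta_k$ turns $g(t_{k+1})^2\Delta_k\,\nabla\ln p_{t_{k+1}}$ into $(\vx_{t_{k+1}}^i - m_i)/C_{\epsilon,i}$, so that the candidate drift of \cref{eq:smoothreverse} can be matched against the $L^2$-optimal target inside \cref{eq:propobsloss} and the first-order optimality condition checked. The main obstacle I anticipate is the careful bookkeeping around the resampling-induced coupling: justifying that the resampled transition is (conditionally) Gaussian, propagating the variance through $\MT_{(\epsilon)}$ so that the normalisation $C_{\epsilon,i}$ reconciles the transport-weighted variance with the score relation, and ensuring the Tweedie/Gaussian-score identity is applied with the correct transport-weighted variance $v_i$ rather than the bare process variance $g(t_k)^2\Delta_k$. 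As a consistency check, the no-observation limit $\MT_{(\epsilon)} = \MI$ collapses $m_i$ and $v_i$ back to the single-Gaussian transition and recovers the mean-matching objective of \citet{bortoli2021diffusion}.
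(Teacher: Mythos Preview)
Your approach is essentially the same as the paper's: both identify the post-resampling transition as Gaussian with mean $m_i=\sum_n T_{(\epsilon),i,n}\bigl(\vx_{t_k}^n+f_{l-1,\theta}(\vx_{t_k}^n,t_k)\Delta_k\bigr)$ and variance $v_i=C_{\epsilon,i}\,g(t_{k+1})^2\Delta_k$, and then connect the mean-matching loss to the marginal score via the denoising identity. The paper does this by writing $p_{t_{k+1}}$ explicitly as a particle mixture, differentiating, and applying Bayes' rule to express $\nabla\ln p_{t_{k+1}}$ as a posterior-weighted sum of transition scores before substituting into \cref{eq:smoothreverse}; you package the same computation as an $L^2$-projection/Tweedie step. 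Both routes follow the template of Proposition~3 in \citet{bortoli2021diffusion}.

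One point to tighten: the line ``For this Gaussian the score is $\nabla\ln p_{t_{k+1}}(\vx_{t_{k+1}}^i)=-(\vx_{t_{k+1}}^i-m_i)/v_i$'' conflates the \emph{transition} score with the \emph{marginal} score. The quantity $-(\vx_{t_{k+1}}^i-m_i)/v_i$ is $\nabla_{\vx_{t_{k+1}}}\ln p(\vx_{t_{k+1}}^i\mid\{\vx_{t_k}^n\})$, not $\nabla\ln p_{t_{k+1}}$; the marginal $p_{t_{k+1}}$ is a mixture over particles and is not itself Gaussian. The passage from one to the other is precisely the Tweedie/posterior-expectation step you allude to in the next clause, and it is exactly what the paper carries out explicitly via the mixture decomposition. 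Making that distinction clean (so that the $C_{\epsilon,i}$ normalisation cancels correctly against $v_i$) is the only substantive edit your sketch needs; your anticipated ``bookkeeping'' obstacle around the transport-weighted variance is indeed where the paper's derivation spends its effort.
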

\begin{proof}
Our objective is to find a backward drift function $b_{l, \phi}(\vx_{t_{k+1}}, t_{k+1})$ as in \cref{eq:smoothreverse}. 
Notice that at observation times $t_k$, this is not equivalent to finding the reverse drift of the SDE forward transition and differential resampling combined, since the drift function $f_{l-1, \theta}$ alone does not map the particles $\{ \vx_{t_{k}}^i\}_{i=1}^N$ to the particles  $\{ \vx_{t_{k+1}}^i\}_{i=1}^N$. We will derive a loss function for learning the backward drift as in \cref{eq:smoothreverse} below, leaving the discussion on why it is a meaningful choice of a backward drift to \cref{app:hamiltonjacobi}. Our derivation closely follows the proof of Proposition $3$ in \citet{bortoli2021diffusion}, but we provide the details here for the sake of completeness. 

First, we give the transition density $p_{\vx_{t_{k}} \mid \vx_{t_{k-1}}^i}(\vx_{k})$ and apply it to derive the observation time loss $\ell_{k, \text{obs}}^i$. The derivation for the loss $\ell_{k, \text{no obs}}^i$ is skipped since it is as in the proof of Proposition $3$ in \citet{bortoli2021diffusion}.
Suppose that at $t_k$, there are observations. By definition, the particles before resampling $\{ \tilde{\vx}_{t_{k+1}}^i\}_{i=1}^N$ are generated by the Gaussian transition density
\begin{equation}\label{eq:transfwd}
    p(\tilde{\vx}_{t_{k+1}} \mid \vx_{t_k}^i) = \N(\tilde{\vx}_{t_{k+1}} \mid \vx_{t_k}^i + \delta_k f_l(\vx_{t_k}^i,t_k ), g(t_{k+1})^2 \Delta_k \MI).
\end{equation}
Recall that the resampled particles are defined as a weighted average of all the particles, $\vx_{t_k}^i = \sum_{n=1}^N \tilde{\vx}_{t_k}^n \, T_{(\epsilon),i, n}$. Thus, the transition density from $\{ \vx_{t_{k}}^i\}_{i=1}^N$ to the particles  $\{ \vx_{t_{k+1}}^i\}_{i=1}^N$ is also a Gaussian,
\begin{equation}\label{eq:transobs}
 p(\vx_{t_{k+1}^i} \mid \vx_{t_{k}}^i) =  \N(\tilde{\vx}_{t_{k+1}} \mid \sum_{n=1}^N T_{(\epsilon), i, n}( \vx_{t_{k-1}}^n+ \Delta_k  f_{l-1, \theta}(\vx_{t_{k}}^n,t_{k} )), g(t_{k+1})^2 \Delta_k C_{\epsilon, i} \MI_d) .
\end{equation}
We will derive the loss function \cref{eq:obsloss} by modifying the mean matching proof in \citet{bortoli2021diffusion} by the transition mean \cref{eq:transobs} and the backward drift definition \cref{eq:smoothreverse}. Using the particle filtering approximation, the marginal density can be decomposed as $p_{t_{k+1}}(\vx_{k+1}) = \sum_{i=1}^N p_{t_k}(\vx_k^i) p_{\vx_{k+1} \mid \vx_{k}^i} (\vx_{k+1}) $. By substituting the transition density \cref{eq:transobs} it follows that
\begin{equation}\label{eq:loss_step1}
p_{t_{k+1}}(\vx_{t_{k+1}})  = \frac{1}{Z}\sum_{i=1}^N p_{t_k}(\vx_{t_k}^i) \exp \left(- \frac{\|\left(\sum_{n=1}^N T_{(\epsilon),i ,n} (\vx_{t_k}^i + f_{l-1, \theta}(\vx_{t_k}, t_k)) \right)  - \vx_{t_{k+1}} \|^2}{2g(t_{k+1})^2 C_{\epsilon, i}\Delta_k} \right),
\end{equation}
where $Z$ is the normalization constant of \cref{eq:transobs}. As in the proof of Proposition $3$ of \citet{bortoli2021diffusion}, we derive an expression for the score function. Since $\nabla \ln p_{t_{k+1}} (\vx_{t_{k+1}}) = \frac{\nabla_{\vx_{t_{k+1}}} p_{t_{k+1}}(\vx_{t_{k+1}})}{p_{t_{k+1}(\vx_{t_{k+1}})}}$, we first manipulate $\nabla_{\vx_{t_{k+1}}} p_{t_{k+1}}(\vx_{t_{k+1}})$, 
\begin{align}
\nabla_{\vx_{t_{k+1}}}p_{t_{k+1}} (\vx_{t_{k+1}})  
&= \frac{1}{Z}\sum_{i=1}^N \nabla_{\vx_{t_{k+1}}} p(\vx_{t_k}^i)
 \exp \left(- \frac{\|\left(\sum_{n=1}^N T_{(\epsilon),i ,n} (\vx_{t_k}^i + f_{l-1, \theta}(\vx_{t_k}, t_k)) \right)  - \vx_{t_{k+1}} \|^2} { 2g(t_{k+1})^2 C_{\epsilon, i}\Delta_k}\right) \nonumber \\
&= \frac{1}{Z}\bigg(\sum_{i=1}^N  p(\vx_{t_k}^i)\left(\sum_{n=1}^N \frac{1}{g(t_{k+1})^2 \Delta_k C_{\epsilon, i}}\left(T_{(\epsilon), i, n}(\vx_{t_k}^i + f_{l-1, \theta}(\vx_{t_k}, t_k))   - \vx_{t_{k+1}} \right)\right) \nonumber \\
& \qquad \exp \left(- \frac{\|\left(\sum_{n=1}^N T_{(\epsilon),i ,n} (\vx_{t_k}^i + f_{l-1, \theta}(\vx_{t_k}, t_k)) \right)  - \vx_{t_{k+1}} \|^2} { 2g(t_{k+1})^2 C_{\epsilon, i}\Delta_k}\right)\bigg).
\end{align}
Substituting $p_{t_{k}}(x_k^i) = \frac{p_{t_{k+1}}(\vx_{t_{k+1}})  p_{\vx_{k+1} \mid \vx_{k}^i} (\vx_{k+1}) }{ p_{\vx_{k}^i \mid \vx_{k+1}} (\vx_{k}^i) }$ to the equation above gives
\begin{equation}
\nabla_{\vx_{t_{k+1}}}  p_{t_{k+1}} (\vx_{t_{k+1}})  = p_{t_{k+1}}(\vx_{t_{k+1}})\sum_{i=1}^N p_{\vx_{k+1} \mid \vx_{k}^i}  (\vx_{k}^i)
\left(\sum_{n=1}^N \frac{\left(T_{(\epsilon), i, n}(\vx_{t_k}^i + f_{l-1, \theta}(\vx_{t_k}, t_k))   - \vx_{t_{k+1}} \right)}{g(t_{k+1})^2 \Delta_k C_{\epsilon, i}}\right),
\end{equation}
and dividing by $p_{t_{k+1}(\vx_{t_{k+1}})}$ yields
\begin{equation}\label{eq:loss_step2}
\begin{aligned}
\nabla \ln p_{t_{k+1}}(\vx_{t_{k+1}}) 
= \sum_{i=1}^N p_{\vx_{t_k^i} \mid \vx_{t_{k+1}}}(\vx_{t_{k}^i}) \left(\sum_{n=1}^N \frac{\left(T_{(\epsilon), i, n}(\vx_{t_k}^i + f_{l-1, \theta}(\vx_{t_k}, t_k))   - \vx_{t_{k+1}} \right)}{g(t_{k+1})^2 \Delta_k C_{\epsilon, i}} \right).
\end{aligned}
\end{equation}
Substituting \cref{eq:loss_step2} to the definition of the optimal backward drift \cref{eq:smoothreverse} gives
\begin{align}
b_{l, \phi} (\vx_{t_{k+1}}, t_{k+1})  
& = f_{l-1, \theta}(\vx_{t_{k+1}}, t_k) -  g(t_{k+1})^2 \nabla \ln p_{t_{k+1}}(\vx_{k+1}) \nonumber\\
&=  f_{l-1, \theta}(\vx_{t_{k+1}}, t_k) \nonumber \\
&\quad - g(t_{k+1})^2 \sum_{i=1}^N p_{\vx_{t_k^i} \mid \vx_{t_{k+1}}}(\vx_{t_{k+1}}) \left(\sum_{n=1}^N \frac{\left(T_{(\epsilon), i, n}(\vx_{t_k}^i + f_{l-1, \theta}(\vx_{t_k}, t_k))   - \vx_{t_{k+1}} \right)}{g(t_{k+1})^2 \Delta_k C_{\epsilon, i}} \right),
\end{align}
where taking $f_{l-1, \theta}(\vx_{t_{k+1}}, t_k)$ inside the sum yields
\begin{multline}
b_{l, \phi} (\vx_{t_{k+1}}, t_{k+1})  =  \sum_{i=1}^N p_{\vx_{t_k^i} \mid \vx_{t_{k+1}}}(\vx_{t_{k+1}}) \\
\left(\frac{1}{C_{\epsilon, i}} \left(\sum_{n=1}^N T_{(\epsilon),i ,n} (\vx_{t_k}^i + f_{l-1, \theta}(\vx_{t_k}, t_k)) \right) - \frac{\vx_{t_{k+1}}}{C_{\epsilon, i}} - \Delta_k f_{l-1, \theta}(\vx_{t_{k+1}},t_k) \right) / \Delta_k).
\end{multline}
Multiplying the equation above by $\Delta_k$ gives
\begin{equation}
\begin{aligned}
\Delta_k b_{l, \phi} (\vx_{t_{k+1}}^i, t_{k+1})  
 =  \left(\sum_{n=1}^N T_{(\epsilon),i ,n} (\vx_{t_k}^n + f_{l-1, \theta}(\vx_{t_k}^n, t_k)) \right)
- \frac{\vx_{t_{k+1}}^i}{C_{\epsilon, i}} - \Delta_k  f_{l-1, \theta}(\vx_{t_{k+1}^i},t_k) .
\end{aligned}
\end{equation}
Thus we may set the objective for finding the optimal backward drift $b_{l, \phi}$ as
\begin{multline}\label{eq:trueloss}
 \ell_{k+1,\text{no obs}}^i = \|b_{l,  \phi}(\vx_{t_{k+1}}^i, t_{k+1})\Delta_k - \frac{\vx_{t_{k+1}}^i}{C_{\epsilon, i}} - f_{l-1, \theta}(\vx_{t_{k+1}}^i, t_k)\Delta_k  \\
  +\frac{1}{C_{\epsilon, i}} \textstyle\sum_{n=1}^N T_{(\epsilon),i, n} \left( \vx_{t_k}^n + f_{l-1, \theta}(\vx_{t_k}^n, t_k)\Delta_k \right)\|^2 .
\end{multline}
\end{proof}

Notice that if the weights before resampling are uniform, then $T_{(\epsilon)} = \MI_N$, and for all $i \in 1, 2, \dots, N$ it holds that $C_{\epsilon, i} = 1$, since all but one of the terms in the sum $\frac{1}{g(t_{k+1})^2} \mathrm{Var}\left( \sum_{n=1}^N T_{(\epsilon), i, n} \tilde{\vx}_{t_{k+1}}^n\right)$ vanish. Similarly, for one-hot weights $C_{\epsilon, i} = 1$. In practice, we set the constant $C_{\epsilon, i} = 1$ as in \cref{eq:obsloss} and observe good empirical performance with the simplified loss function.
\subsection{Connection to Hamilton--Jacobi Equations}\label{app:hamiltonjacobi}
We connect the backward drift function $b_{l, \phi}(\vx_{t_{k+1}}, t_{k+1}) = f_{l-1, \theta}(\vx_{t_k+1}, t_k)- g(t_{k+1})^2 \nabla \ln p_{t_{k+1}}(\vx_{t_{k+1}})$ to the Hamilton--Jacobi equations for stochastic control through following the setting of \citet{maoutsa2021deterministic}, which applies the drift $f_{l-1, \theta}(\vx_{t}, t)- g(t)^2 \nabla \ln p_{t}(\vx_t)$ for a backwards SDE initialized at $\pi_T$.

Consider a stochastic control problem with a path constraint $U(\vx_t, t)$, optimizing the following loss function,
\begin{equation}\label{eq:funcloss}
\mathcal{J} = \frac{1}{N}\sum_{i=1}^N \int_{t=0}^T \frac{1}{2g(t)^2}\| f_{\theta}(\vx_t^i, t) - f(\vx_t^i, t)\|^2 +U(\vx_t^i, t) \dd t  - \ln \chi(\vx_T^i),
\end{equation}
with the paths, $\vx_t^i$ sampled as trajectories from the SDE
\begin{equation}\label{eq:fwdsde}
\vx_0 \sim \pi_0, \quad 
\dd\vx_t = f_{l-1, \theta}(\vx_t, t)\dd t + g(t) \dd \vbeta_t,
\end{equation}
and the loss $\ln \chi(\vx_T^i)$ measures distance from the distribution $\pi_T$. Since we set the path constraint via observational data, our method resembles setting $U(\vx_t^i, t) = 0$ when $t$ is not an observation time, and $U(\vx_t^i) = -\log \vp (\vy \mid \vx_{t}^i)$, where $\vp (\vy \mid \vx_{t}^i)$ is the observation model. 

Let $q_t(\vx)$ denote the marginal density of the controlled (drift $f_{\theta}$) SDE at time $t$.
In \citet{maoutsa2021deterministic}, the marginal density is decomposed as
\begin{equation}\label{eq:decomp}
    q_t(\vx) = \varphi_t(\vx) p_t(\vx),
\end{equation}
where $\varphi_t(\vx)$ is a solution to a backwards Fokker-Planck-Kolmogorov (FPK) partial differential equation starting from $\varphi_T(\vx) = \pi_T$, and the density evolves as in
\begin{equation}
    \frac{\dd \varphi_t(\vx)}{\dd t} = -\mathcal{L}_{f}^{\dagger}\varphi_t(\vx) + U(\vx, t)\varphi_t(\vx),
\end{equation}
where $\mathcal{L}_f^{\dagger}$ is the adjoint FPK operator to the uncontrolled system. 
The density $p_t(\vx)$ corresponds to the forward filtering problem, initialized with $\pi_0$, 
\begin{equation}\label{eq:pathconstraintpde}
    \frac{\dd p_t(\vx)}{\dd t} = \mathcal{L}_f(p_t(\vx))-U(\vx, t)p_t(\vx),
\end{equation}
where $\mathcal{L}_f$ is the FPK operator of the uncontrolled SDE (with drift $f$). The particle filtering trajectories $\{\vx_{t_k} \}^i$ generated in our method are samples from the density defined by \cref{eq:pathconstraintpde}.  In the context of our method, the path constraint matches the log-weights of particle filtering at observation times and is zero elsewhere.

In \citet{maoutsa2021deterministic}, a backward evolution for $q_t$ is applied, using the backwards time $\tilde{q}_{T- \tau}(\vx) = q_{\tau}(\vx)$, yielding a backwards SDE starting from $\tilde{q}_0(\vx) = \{ \vx_{T}^i\}_{i=1}^N$, reweighted according to $\pi_T$.
The backward samples from $\tilde{q}$ are generated following the SDE dynamics
\begin{equation}
    \dd\vx_\tau^{i} = (f(\vx_{\tau}^{i}, T-\tau) + g(t)^2 \nabla \ln p_{T- \tau}(\vx_{\tau}^{i})\dd t + g(t) \dd \beta_{\tau} .
\end{equation}
We have thus selected the backward drift $b_{l, \phi}$ to match the drift of $\tilde{q}_t(x)$, the backward controlled density. Intuitively, our choice of $b_{l, \phi}$ is a drift which generates the smoothed particles when initialized at $\{\vx_T^i \}_{i=1}^N$, the terminal state of the forward SDE. The discrepancy between $\pi_T$ and the distribution induced by $\{\vx_T^i \}_{i=1}^N$ then motivates the use of an iterative scheme after learning to simulate from $q_t(x)$.

\subsection{Observing the Full Marginal Density}\label{app:infinity}

Suppose that at time $t_k$, we let the number of observations grow unbounded. We analyse the behaviour of our model at the resampling step, at the limit $M \to \infty$ for the number of observations and $\sigma \to 0$ for the observation noise. When applying the bootstrap proposal, recall that we combined the multiple observations to compute the log-weights as
\begin{equation}
  \log w_{t_k}^i = -\frac{1}{2\sigma^2} \sum_{\vy_j \in \data^H_{i, t_k}} \| \vx_{t_k}^i - \vy_j\|^2,
\end{equation}
which works well in practice for the sparse-data settings we have considered. Below we analyse the behaviour of an alternative way to combine the weights and show that given an infinite number of observations, it creates samples from the true underlying distribution.
\begin{prop}\label{prop:density}
Let $\{\vx_{t_k}^i \}_{i=1}^{N}$ be a set of particles and $\{\vy_{j} \}_{j=1}^{M}$ the observations at time $t_k$. Assume that the observations have been sampled from a density $\rho_{t_k}$ and that for all $i$ it holds that $\vx_{t_k}^i \in \mathrm{supp}(\rho_{t_k})$. Define the particle weights as
\begin{equation}\label{eq:logweighting}
  \log w_{t_k, \sigma, M}^i =   \log  \bigg(\frac{1}{Z |\data^{H(M)}_{i, t_k}|}\sum_{\vy_j \in \data^{H(M)}_{i, t_k}}\exp(- \| \vx_{t_k}^i - \vy_j\|^2/ 2\sigma^2) \bigg),
\end{equation}
where $Z$ is the normalization constant of the observation model Gaussian $p(\vy \mid \vx_{t_k}^i)$.
Then for each particle $\vx_{t_k}^i$, its weight satisfies
\begin{equation}
\lim_{\sigma \to 0}\lim_{M \to \infty}  w_{t_k, \sigma, M}^i = \rho_{t_k}(x_{t_k}).
\end{equation}
\end{prop}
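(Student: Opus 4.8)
The plan is to recognize the weight $w_{t_k,\sigma,M}^i$ as a Gaussian kernel density estimate of $\rho_{t_k}$ evaluated at the particle $\vx_{t_k}^i$, and then dispatch the two nested limits separately: the inner $M\to\infty$ limit by a law of large numbers, and the outer $\sigma\to0$ limit by an approximate-identity (mollifier) argument.

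First I would unpack the definition. Since $Z=(2\pi\sigma^2)^{d/2}$ is the normalizer of the observation Gaussian $p(\vy\mid\vx_{t_k}^i)=\N(\vy\mid\vx_{t_k}^i,\sigma^2\MI_d)$, the summand $\tfrac{1}{Z}\exp(-\|\vx_{t_k}^i-\vy_j\|^2/2\sigma^2)$ is exactly the normalized kernel $k_\sigma(\vx_{t_k}^i-\vy_j)$ with
\[
  k_\sigma(\vu) := \frac{1}{(2\pi\sigma^2)^{d/2}}\exp\!\Big(-\frac{\|\vu\|^2}{2\sigma^2}\Big).
\]
Taking $H(M)=M$ so that $\data^{H(M)}_{i,t_k}$ exhausts all $M$ observations at time $t_k$ (the neighbour set then no longer depends on $i$, and $|\data^{H(M)}_{i,t_k}|=M$), the weight reduces to $w_{t_k,\sigma,M}^i=\frac{1}{M}\sum_{j=1}^M k_\sigma(\vx_{t_k}^i-\vy_j)$, a textbook kernel density estimate with bandwidth $\sigma$.

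For the inner limit I would invoke the strong law of large numbers: the $\vy_j$ are i.i.d. from $\rho_{t_k}$ and $k_\sigma$ is bounded for fixed $\sigma>0$, so $\frac{1}{M}\sum_{j=1}^M k_\sigma(\vx_{t_k}^i-\vy_j)\to \E_{\vy\sim\rho_{t_k}}[k_\sigma(\vx_{t_k}^i-\vy)]=(k_\sigma*\rho_{t_k})(\vx_{t_k}^i)$ almost surely. For the outer limit, $k_\sigma$ integrates to one and concentrates at the origin as $\sigma\to0$, hence forms an approximate identity; the standard mollifier convergence result then gives $(k_\sigma*\rho_{t_k})(\vx)\to\rho_{t_k}(\vx)$ at every continuity point of $\rho_{t_k}$, and the hypothesis $\vx_{t_k}^i\in\mathrm{supp}(\rho_{t_k})$ guarantees the limit is the intended value. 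Chaining the two yields $\lim_{\sigma\to0}\lim_{M\to\infty}w_{t_k,\sigma,M}^i=\rho_{t_k}(\vx_{t_k}^i)$.

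The main obstacle is conceptual rather than computational: justifying the choice $H(M)=M$ and the order of limits. With the $1/|\data^{H}_{i,t_k}|$ normalization, a sublinearly growing neighbour set ($H(M)/M\to0$) would confine all retained $\vy_j$ to a shrinking ball around $\vx_{t_k}^i$, driving the average to the kernel peak $k_\sigma(\vzero)$ rather than to $(k_\sigma*\rho_{t_k})(\vx_{t_k}^i)$; only using essentially all observations recovers the convolution. The secondary point to state carefully is the regularity needed for the mollifier step---continuity of $\rho_{t_k}$ at $\vx_{t_k}^i$ (or that it is a Lebesgue point), which I would treat as implicit in the support assumption.
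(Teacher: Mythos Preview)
Your proof is correct and arguably cleaner than the paper's, but it follows a genuinely different route. You specialize to $H(M)=M$, recognize the weight as a standard Gaussian kernel density estimate, and dispatch the two iterated limits via the strong law of large numbers followed by a mollifier (approximate-identity) argument. The paper instead works locally: it couples the bandwidth to a shrinking radius by setting $\sigma=cr$, chooses $H(M)$ so the retained nearest neighbours all fall in $B(\vx_{t_k}^i,r)$, argues that the normalized sum approximates the local average $\tfrac{1}{|B(\vx_{t_k}^i,r)|}\int_{B(\vx_{t_k}^i,r)} p(\vy\mid\vx_{t_k}^i)\,\rho_{t_k}(\vy)\,\dd\vy$, and then invokes the Lebesgue differentiation theorem as $r\to0$. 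Your approach matches the iterated-limit structure $\lim_{\sigma\to0}\lim_{M\to\infty}$ in the statement more literally and rests on textbook KDE consistency; the paper's route accommodates genuinely sublinear neighbour sets (the very regime you flag as problematic for a pure KDE reading) but does so by implicitly taking a joint limit in $(\sigma,M)$ and is sketchier about why the finite sum approximates the local integral. Both arguments ultimately deliver convergence only at Lebesgue points of $\rho_{t_k}$---the paper explicitly says ``for almost every $\vx_{t_k}^i$''---so your closing caveat that continuity or the Lebesgue-point property, rather than mere support membership, is what is really needed is accurate and applies equally to the paper's version.
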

\begin{proof}
We drop the $\sigma$ and $H(M)$ from the weight notation for simplicity of notation, but remark that the particle filtering weights are dependent on both quantities.
Consider the number of particles $N$ fixed, and denote the $d$-dimensional sphere centered at $\vx_{t_k}^i$ as $B(\vx_{t_k}^i, r)$.
Since each particle $\vx_{t_k}^i  $ lies in the support of the true underlying marginal density $\rho_{t_k}$, then for any radius $r > 0$ such that $B(\vx_{t_k}^i, r) \in \mathrm{supp}(\rho_{t_k})$, and $H > 0$, we may choose $M$ high enough so that the points $\vy_j \in \data^H_{i, t_k}$ satisfy $\vy_j \in B(\vx_{t_k}^i, r)$.
It follows from \cref{eq:logweighting} that
\begin{equation}
w_{t_k}^i = \frac{1}{Z | \data^{H(M)}_{i, t_k}|} \sum_{\vy_j \in \data^{H(M)}_{i, t_k}}\exp(- \| \vx_{t_k}^i - \vy_j\|^2/ 2\sigma^2).
\end{equation}
For any $r>0$ and with observation noise  $\sigma = cr$, we may set $c, H(M)$ so that the sum above approximates the integral
\begin{equation}
w_{r, t_k}^i \approx \frac{1}{|B(\vx_{t_k}^i, r)|} \int_{B(\vx_{t_k}^i, r)}p(\vy  \mid \vx_{t_k}^i)  \rho_t(\vy) \dd \vy.
\end{equation}
By applying the Lebesque differentiation theorem, we obtain that for almost every $\vx_{t_k}^i$, we have  $\lim_{r \to 0}w_{t_k, r}^i = \rho_{t_k}(\vx_{t_k}^i)$, since as $\sigma \to 0$, the density $p(\vy \mid \vx_{t_k}^i)$ collapses to the Dirac delta of $\vx_{t_k}^i$.
\end{proof}

\cref{prop:density} can be interpreted as the infinite limit of a kernel density estimate of the true underlying distribution. Resampling accurately reweights the particles so that the probability of resampling particle $\vx_{t_k}^i$ is proportional to the density $\rho_{t_k}$ compared to the other particles.
Notice that the result does not guarantee that the particles will cover the support of $\rho_{t_k}$, since we did not assume that the drift initialization generates a marginal density at time $t_k$  covering its support.

\section{Experimental Details}
\label{app:experiment}

\subsection{2D Toy Data Sets}
\label{app:2dtoy} 
For the constrained transport problem for two-dimensional scikit-learn, the observational data we chose to use was different for each of the three data sets presented; {\sc two moons}, {\sc two circles} and the S-shape. All three experiments had the same discretization ($t\in [0, 0.99])$, $\Delta_k = 0.01$), learning rate $0.001$, and differentiable resampling regularization parameter $\epsilon= 0.01$. The process noise $g(t)^2$ follows a linear schedule from $0.001$ to $1$, with low noise at time $t=0$ and high noise at $t=0.99$, and each iteration of the ISB method trains the forward and backward drift networks each for $5000$ iterations, with batch size $256$. When running on a Macbook Pro CPU, it took approximately $6$ minutes to complete for the {\sc two circles} experiment for instance, while the exact runtime varies based on factors such as the number of observations and the number of ISB iterations required. Other hyperparameters are explained below.

\paragraph{Two moons}
The observational data consists of $10$ points selected from the Schrödinger bridge trajectories, all observed at $t \in [0.25, 0.5, 0.75]$ with an exponential observation noise schedule $\kappa(l) = 1.25^{l-1}$. The ISB was run for $6$ epochs and initialized with a drift from the pre-trained Schrödinger bridge model from the unconstrained problem. 

\paragraph{Two circles}
The observational data consists of $10$ points which lie evenly distributed on a circle,  observed at $t=0.5$ with an exponential observational noise schedule $\kappa(l) = 0.5 \cdot 1.25^{l-1}$. The ISB was run for $6$ epochs and initialized with a drift from the pre-trained Schrödinger bridge model from the unconstrained problem.

\paragraph{S-shape}
The observational data consists of $6$ points, with pairs being observed at times $t \in [0.4, 0.5, 0.6]$. We used a bilinear observational noise schedule with a linear decay for the first half of the iterations from $\kappa(0)^2  = 4$ to $\kappa(L/2)^2 = 1$ and a linear ascend for the second half of the iterations from $\kappa(L/2)^2 = 1$ to $\kappa(L)^2 = 4$. The ISB ran for $6$ epochs, with a zero drift initialization.

\begin{figure*}[t!]
  \centering
  \setlength{\figurewidth}{0.27\textwidth}
  \setlength{\figureheight}{\figurewidth}

  \begin{subfigure}[b]{.18\textwidth}
    \centering
    % This file was created by tikzplotlib v0.9.6.
\begin{tikzpicture}

\begin{axis}[
height=\figureheight,
hide x axis,
hide y axis,
tick pos=left,
width=\figurewidth,
xmin=-12.0590478317368, xmax=11.9153978679085,
ymin=-12.1915820827153, ymax=11.78286361693
]
\addplot graphics [includegraphics cmd=\pgfimage,xmin=-24.4828913813874, xmax=23.1484576907555, ymin=-16.1608611720605, ymax=15.5933715427014] {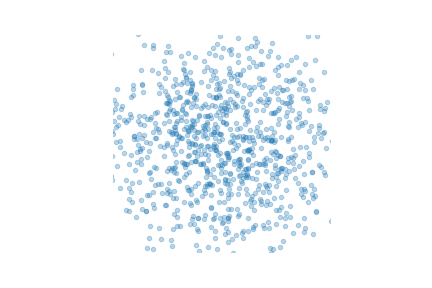};
\end{axis}

\end{tikzpicture}
  \end{subfigure}
  \hfill
  \begin{subfigure}[b]{.18\textwidth}
    \centering
    % This file was created by tikzplotlib v0.9.6.
\begin{tikzpicture}

\begin{axis}[
height=\figureheight,
hide x axis,
hide y axis,
tick pos=left,
width=\figurewidth,
xmin=-12.0590478317368, xmax=11.9153978679085,
ymin=-12.1915820827153, ymax=11.78286361693
]
\addplot graphics [includegraphics cmd=\pgfimage,xmin=-24.4828913813874, xmax=23.1484576907555, ymin=-16.1608611720605, ymax=15.5933715427014] {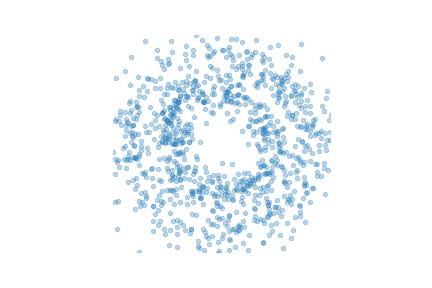};
\end{axis}

\end{tikzpicture}
  \end{subfigure}
  \begin{subfigure}[b]{.18\textwidth}
    \centering
    % This file was created by tikzplotlib v0.9.6.
\begin{tikzpicture}

\begin{axis}[
height=\figureheight,
hide x axis,
hide y axis,
tick pos=left,
width=\figurewidth,
xmin=-12.0590478317368, xmax=11.9153978679085,
ymin=-12.1915820827153, ymax=11.78286361693
]
\addplot graphics [includegraphics cmd=\pgfimage,xmin=-24.4828913813874, xmax=23.1484576907555, ymin=-16.1608611720605, ymax=15.5933715427014] {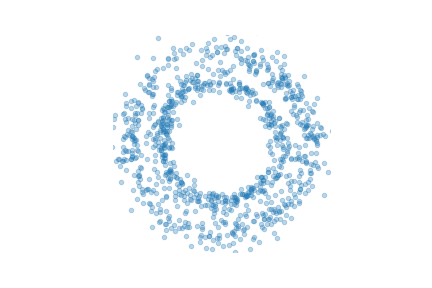};
%\addplot [only marks, mark=x, draw=red, fill=red, opacity=0.7, colormap/viridis]
\addplot [draw=red, fill=red!50, mark=*, only marks, mark options={solid}, mark size=2pt, opacity=0.8]
table{%
x                      y
1.5 3
3.26335573196411 2.42705106735229
4.35316944122314 0.92705100774765
4.35316944122314 -0.92705100774765
3.26335573196411 -2.42705106735229
1.5 -3
-0.263355761766434 -2.42705106735229
-1.35316956043243 -0.92705100774765
-1.35316956043243 0.92705100774765
-0.263355761766434 2.42705106735229
};
\end{axis}

\end{tikzpicture}
  \end{subfigure}  
  \hfill
  \begin{subfigure}[b]{.18\textwidth}
    \centering
    % This file was created by tikzplotlib v0.9.6.
\begin{tikzpicture}

\begin{axis}[
height=\figureheight,
hide x axis,
hide y axis,
tick pos=left,
width=\figurewidth,
xmin=-12.0590478317368, xmax=11.9153978679085,
ymin=-12.1915820827153, ymax=11.78286361693
]
\addplot graphics [includegraphics cmd=\pgfimage,xmin=-24.4828913813874, xmax=23.1484576907555, ymin=-16.1608611720605, ymax=15.5933715427014] {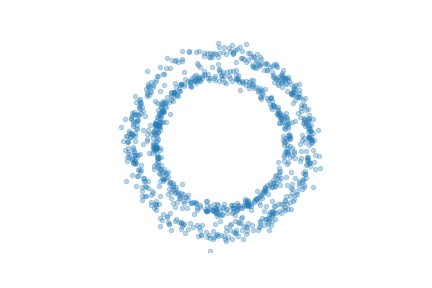};
\end{axis}

\end{tikzpicture}
  \end{subfigure}
  \hfill  
  \begin{subfigure}[b]{.18\textwidth}
    \centering
    % This file was created by tikzplotlib v0.9.6.
\begin{tikzpicture}

\begin{axis}[
height=\figureheight,
hide x axis,
hide y axis,
tick pos=left,
width=\figurewidth,
xmin=-12.0590478317368, xmax=11.9153978679085,
ymin=-12.1915820827153, ymax=11.78286361693
]
\addplot graphics [includegraphics cmd=\pgfimage,xmin=-24.4828913813874, xmax=23.1484576907555, ymin=-16.1608611720605, ymax=15.5933715427014] {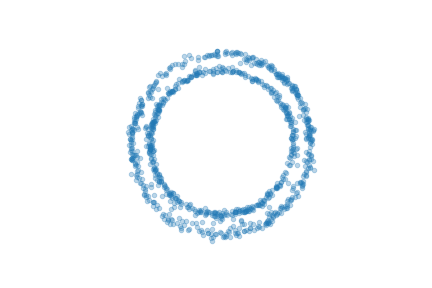};
\end{axis}

\end{tikzpicture}
  \end{subfigure}\\[-1em]
  \begin{subfigure}[b]{.18\textwidth}
    \centering
    % This file was created by tikzplotlib v0.9.6.
\begin{tikzpicture}

\begin{axis}[
height=\figureheight,
hide x axis,
hide y axis,
tick pos=left,
width=\figurewidth,
xmin=-8.94578966682228, xmax=15.8074380629519,
ymin=-10.577301428008, ymax=14.1759263017662
]
\addplot graphics [includegraphics cmd=\pgfimage,xmin=-21.7732073479801, xmax=27.4053907906441, ymin=-14.67551793956, ymax=18.1102141528562] {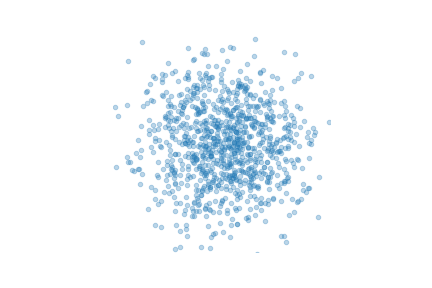};
\end{axis}

\end{tikzpicture}\\[-1em]
    \caption{$t=0.00$}
  \end{subfigure}
  \hfill
  \begin{subfigure}[b]{.18\textwidth}
    \centering
    % This file was created by tikzplotlib v0.9.6.
\begin{tikzpicture}

\begin{axis}[
height=\figureheight,
hide x axis,
hide y axis,
tick pos=left,
width=\figurewidth,
xmin=-8.94578966682228, xmax=15.8074380629519,
ymin=-10.577301428008, ymax=14.1759263017662
]
\addplot graphics [includegraphics cmd=\pgfimage,xmin=-21.7732073479801, xmax=27.4053907906441, ymin=-14.67551793956, ymax=18.1102141528562] {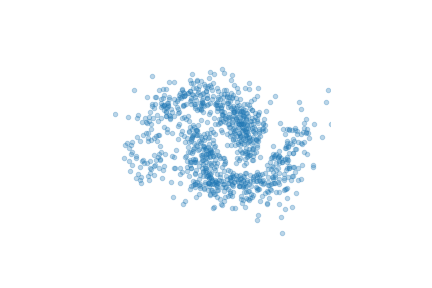};
%\addplot [only marks, mark=x, draw=red, fill=red, opacity=0.7, colormap/viridis]
\addplot [draw=red, fill=red!50, mark=*, only marks, mark options={solid}, mark size=2pt, opacity=0.8]
table{%
x                      y
-5.74272012710571 -1.2900493144989
10.282054901123 6.23231601715088
11.1886224746704 5.12994289398193
-7.97766780853271 1.20648860931396
8.09059429168701 -2.23533320426941
4.45591497421265 4.15110778808594
3.3600537776947 3.95255351066589
3.7745156288147 6.05395746231079
-0.357202261686325 -0.101248815655708
-5.85373735427856 -3.47627544403076
};
\end{axis}

\end{tikzpicture}\\[-1em]
    \caption{$t=0.25$}
  \end{subfigure}
  \hfill  
  \begin{subfigure}[b]{.18\textwidth}
    \centering
    % This file was created by tikzplotlib v0.9.6.
\begin{tikzpicture}

\begin{axis}[
height=\figureheight,
hide x axis,
hide y axis,
tick pos=left,
width=\figurewidth,
xmin=-8.94578966682228, xmax=15.8074380629519,
ymin=-10.577301428008, ymax=14.1759263017662
]
\addplot graphics [includegraphics cmd=\pgfimage,xmin=-21.7732073479801, xmax=27.4053907906441, ymin=-14.67551793956, ymax=18.1102141528562] {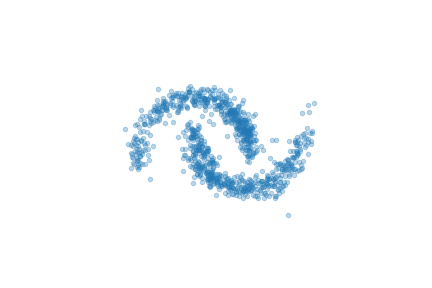};
%\addplot [only marks, mark=x, draw=red, fill=red, opacity=0.7, colormap/viridis]
\addplot [draw=red, fill=red!50, mark=*, only marks, mark options={solid}, mark size=2pt, opacity=0.8]
table{%
x                      y
-5.85671424865723 -0.975227236747742
11.3966102600098 4.97208642959595
11.7787132263184 3.69037508964539
-7.75934648513794 1.58934283256531
8.92796421051025 -2.40417170524597
5.42121362686157 4.35129737854004
3.58317613601685 4.98470258712769
3.04773569107056 5.66898202896118
0.561857163906097 -0.249238520860672
-6.35246849060059 -1.97278606891632
};
\end{axis}

\end{tikzpicture}\\[-1em]
    \caption{$t=0.50$}
  \end{subfigure}  
  \hfill
  \begin{subfigure}[b]{.18\textwidth}
    \centering
    % This file was created by tikzplotlib v0.9.6.
\begin{tikzpicture}

\begin{axis}[
height=\figureheight,
hide x axis,
hide y axis,
tick pos=left,
width=\figurewidth,
xmin=-8.94578966682228, xmax=15.8074380629519,
ymin=-10.577301428008, ymax=14.1759263017662
]
\addplot graphics [includegraphics cmd=\pgfimage,xmin=-21.7732073479801, xmax=27.4053907906441, ymin=-14.67551793956, ymax=18.1102141528562] {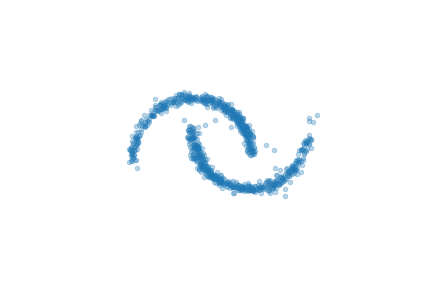};
%\addplot [only marks, mark=x, draw=red, fill=red, opacity=0.7, colormap/viridis]
\addplot [draw=red, fill=red!50, mark=*, only marks, mark options={solid}, mark size=2pt, opacity=0.8]
table{%
x                      y
12.9552555084229 4.29890060424805
13.3716583251953 3.17004346847534
-7.17733383178711 2.52120113372803
9.50479507446289 -2.82684803009033
5.19372701644897 4.41709136962891
4.38437175750732 5.42100954055786
3.66518402099609 5.84365081787109
0.788959980010986 0.159530133008957
-6.95842123031616 -0.97951078414917
};
\end{axis}

\end{tikzpicture}\\[-1em]
    \caption{$t=0.75$}
  \end{subfigure}
  \begin{subfigure}[b]{.18\textwidth}
    \centering
    % This file was created by tikzplotlib v0.9.6.
\begin{tikzpicture}

\begin{axis}[
height=\figureheight,
hide x axis,
hide y axis,
tick pos=left,
width=\figurewidth,
xmin=-8.94578966682228, xmax=15.8074380629519,
ymin=-10.577301428008, ymax=14.1759263017662
]
\addplot graphics [includegraphics cmd=\pgfimage,xmin=-21.7732073479801, xmax=27.4053907906441, ymin=-14.67551793956, ymax=18.1102141528562] {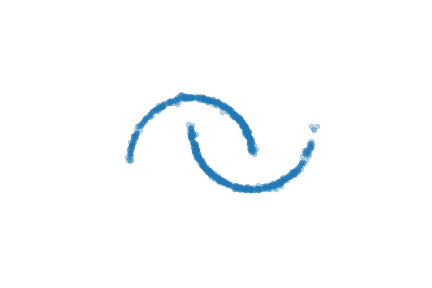};
\end{axis}

\end{tikzpicture}\\[-1em]
    \caption{$t=0.99$}
  \end{subfigure}
  \caption{The unconditioned Schrödinger bridges for the scikit-learn 2D experiment, corresponding to the  \cref{fig:2dtoy}. The observations (red markers) were not used in any way during training but are included in the figure for reference, to show that the unconstrained dynamics greatly differ from the learned ISB model dynamics. \looseness-1}
  \label{fig:2dtoyipfp}
\end{figure*}

\begin{figure*}[b!]
  \centering
  \scriptsize
  \pgfplotsset{scale only axis,y tick label style={rotate=90},xtick={0,4,8,12}}
  \def\datapath{./fig/benes}
  \setlength{\figurewidth}{0.5\textwidth}
  \setlength{\figureheight}{0.65\figurewidth}  
  \begin{subfigure}[b]{.5\textwidth}
    \centering
    \input{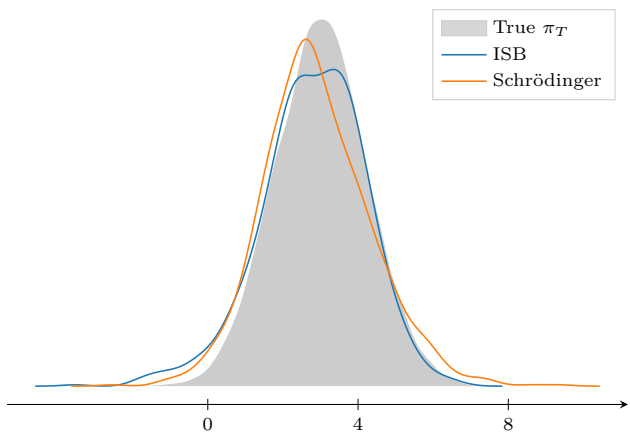}
  \end{subfigure}
  \caption{A kernel density estimate of the Bene\v{s} SDE terminal state.  We compare $\pi_T$ to the Schrödinger bridge and ISB terminal states. Both unconstrained Schrödinger bridge and ISB terminal states succeed in representing $\pi_T$ well, with the Schrödinger bridge terminal state more closely matching $\pi_T$ near its mean.}
\label{fig:benesterminal}
\end{figure*}
\subsection{The Bene\v{s} SDE}
\label{app:benes}

In the \Benes SDE experiment, we obtain the sparse observational data from sampled \Benes SDE trajectories while the terminal state is a shifted and scaled ($3+5x_T$) version of a \Benes marginal density. As the \Benes trajectories were first generated by simulating the SDE until $t=6$ and then in reverse from $t=6$ to $t=0$, we set $T=11.97$. We apply the analytical expression for the \Benes marginal density for computing $\log p_{t}(\vx)$,
\begin{equation}
   p_t(\vx) = \frac{1}{\sqrt{2\pi t}} \frac{\cosh(\vx)}{\cosh(\vx_0)} \exp\bigg(-\frac{1}{2} t\bigg) \, \exp\bigg(-\frac{1}{2 t}(\vx-\vx_0)^2\bigg).
\end{equation}
See the \Benes SDE trajectories in \cref{fig:benes-a}. As expected, the transport model with no observations performs well in the generative task, but its trajectories cover also some low-likelihood space around $t=6$ (in the middle part in \cref{fig:benes-b}). The observations for the ISB model were sampled from the generated trajectories, $10$ observations at $10$ random time-instances (see \cref{fig:benes-c})

Both the unconstrained Schrödinger bridge model and the ISB model were run for $3$ iterations, using a learning rate of $0.001$ for the neural networks. Likely due to the fact that the problem was only one-dimensional, the convergence of the Schrödinger bridge to a process which matches the desired terminal state was fast, and we chose not to run the model for a higher number of ISB iterations, see \cref{fig:benesterminal} for a comparison of the trained model marginal densities and the true terminal distribution $\pi_T$. We set the observation noise schedule to the constant $0.7$,  and at each iteration of the ISB or the unconstrained Schrödinger bridge the drift neural networks were trained for $5000$ iterations each with the batch size $256$, and the trajectories were refreshed every $500$ iterations with a cache size of $1000$ particles. The number of nearest neighbours to compare to was $H=10$.

\begin{figure*}[t]
\label{fig:benes}
  \centering
  \scriptsize
  \pgfplotsset{scale only axis,y tick label style={rotate=90},xtick={0,4,8,12}}
  \def\datapath{./fig/benes}
  \setlength{\figurewidth}{0.25\textwidth}
  \setlength{\figureheight}{\figurewidth}  
  \begin{subfigure}[b]{.32\textwidth}
    \centering
    % This file was created by tikzplotlib v0.9.6.
\begin{tikzpicture}

\begin{axis}[
height=\figureheight,
tick align=outside,
tick pos=left,
width=\figurewidth,
x grid style={white!69.0196078431373!black},
xlabel={Time, \(\displaystyle t\)},
xmin=0, xmax=12,
xtick style={color=black},
y grid style={white!69.0196078431373!black},
ymin=-15, ymax=15,
ytick style={color=black},
axis x line=bottom,
axis y line=left,
]
\addplot graphics [includegraphics cmd=\pgfimage,xmin=-1.93548387096774, xmax=13.5483870967742, ymin=-19.9668874172185, ymax=19.7682119205298] {\datapath/benes_traj-000.png};
\end{axis}

\end{tikzpicture}
    \caption{Trajectories of the \Benes SDE}
    \label{fig:benes-a}
  \end{subfigure}
  \hfill
   \begin{subfigure}[b]{.32\textwidth}
    \centering
    % This file was created by tikzplotlib v0.9.6.
\begin{tikzpicture}

\begin{axis}[
height=\figureheight,
tick align=outside,
tick pos=left,
width=\figurewidth,
x grid style={white!69.0196078431373!black},
xlabel={Time, \(\displaystyle t\)},
xmin=0, xmax=12,
xtick style={color=black},
y grid style={white!69.0196078431373!black},
ymin=-15, ymax=15,
ytick style={color=black},
axis x line=bottom,
axis y line=left,
]
\addplot graphics [includegraphics cmd=\pgfimage,xmin=-1.93548387096774, xmax=13.5483870967742, ymin=-19.9668874172185, ymax=19.7682119205298] {\datapath/benes_ot-000.png};
\end{axis}

\end{tikzpicture}
    \caption{SBP trajectories}
    \label{fig:benes-b}    
  \end{subfigure}
  \hfill
  \begin{subfigure}[b]{.32\textwidth}
    \centering
    % This file was created by tikzplotlib v0.9.6.
\begin{tikzpicture}

\begin{axis}[
height=\figureheight,
tick align=outside,
tick pos=left,
width=\figurewidth,
x grid style={white!69.0196078431373!black},
xlabel={Time, \(\displaystyle t\)},
xmin=0, xmax=12,
xtick style={color=black},
y grid style={white!69.0196078431373!black},
ymin=-15, ymax=15,
ytick style={color=black},
axis x line=bottom,
axis y line=left,
]
\addplot graphics [includegraphics cmd=\pgfimage,xmin=-1.93548387096774, xmax=13.5483870967742, ymin=-19.9668874172185, ymax=19.7682119205298] {\datapath/benes_isb-000.png};
\addplot [draw=red, fill=red!50, mark=*, only marks, mark options={solid}, mark size=1.75pt, opacity=0.8]
table{%
x                      y
8.39999961853027 2.77584075927734
7.91999959945679 2.3698103427887
6.59999942779541 3.33613801002502
5.76000022888184 -6.42137813568115
4.01999998092651 -2.9710066318512
3.83999991416931 -3.16476106643677
3.53999996185303 -2.18802905082703
2.16000008583069 -1.80536270141602
0.540000021457672 -1.00247526168823
0.480000019073486 -1.16861605644226
8.39999961853027 -0.802420735359192
7.91999959945679 -1.11832368373871
6.59999942779541 -3.12408900260925
5.76000022888184 -0.653072893619537
4.01999998092651 0.273781359195709
3.83999991416931 0.161338806152344
3.53999996185303 0.412588655948639
2.16000008583069 1.45810091495514
0.540000021457672 -0.0687554702162743
0.480000019073486 0.185456022620201
8.39999961853027 -4.91025257110596
7.91999959945679 -5.75618028640747
6.59999942779541 -7.48824882507324
5.76000022888184 7.28471231460571
4.01999998092651 6.64821624755859
3.83999991416931 6.46508312225342
3.53999996185303 4.95643615722656
2.16000008583069 1.43728137016296
0.540000021457672 1.10985696315765
0.480000019073486 1.01150941848755
8.39999961853027 -3.33730220794678
7.91999959945679 -4.67712688446045
6.59999942779541 -6.75832843780518
5.76000022888184 3.14960074424744
4.01999998092651 4.22397756576538
3.83999991416931 3.88610696792603
3.53999996185303 4.18695068359375
2.16000008583069 0.178776055574417
0.540000021457672 0.520784735679626
0.480000019073486 0.276561379432678
8.39999961853027 4.13615131378174
7.91999959945679 4.53694677352905
6.59999942779541 5.33790159225464
5.76000022888184 -4.48580932617188
4.01999998092651 -4.4669771194458
3.83999991416931 -3.93499422073364
3.53999996185303 -2.65869331359863
2.16000008583069 -1.08206582069397
0.540000021457672 -0.901625871658325
0.480000019073486 -0.794871509075165
8.39999961853027 3.20768308639526
7.91999959945679 3.07015800476074
6.59999942779541 4.07322692871094
5.76000022888184 6.95915699005127
4.01999998092651 6.34803867340088
3.83999991416931 5.80090093612671
3.53999996185303 5.41589164733887
2.16000008583069 0.779441773891449
0.540000021457672 0.404618471860886
0.480000019073486 0.484403014183044
8.39999961853027 0.846628487110138
7.91999959945679 0.489238113164902
6.59999942779541 0.126059964299202
5.76000022888184 -2.62779068946838
4.01999998092651 -0.498392909765244
3.83999991416931 -0.519110560417175
3.53999996185303 0.00727069936692715
2.16000008583069 -0.886314988136292
0.540000021457672 -0.104776151478291
0.480000019073486 0.0212062019854784
8.39999961853027 4.75660800933838
7.91999959945679 4.14523506164551
6.59999942779541 3.83526015281677
5.76000022888184 4.30058336257935
4.01999998092651 2.78824234008789
3.83999991416931 3.01853013038635
3.53999996185303 2.80643248558044
2.16000008583069 1.90970432758331
0.540000021457672 0.0677784904837608
0.480000019073486 -0.112006448209286
8.39999961853027 -6.33619832992554
7.91999959945679 -7.04404640197754
6.59999942779541 -9.8674898147583
5.76000022888184 4.69409942626953
4.01999998092651 2.80999493598938
3.83999991416931 2.62865543365479
3.53999996185303 2.17022442817688
2.16000008583069 1.34645068645477
0.540000021457672 0.456274181604385
0.480000019073486 -0.0132818724960089
8.39999961853027 -2.58253145217896
7.91999959945679 -3.01222944259644
6.59999942779541 -5.07422161102295
5.76000022888184 -7.57088613510132
4.01999998092651 -4.51119375228882
3.83999991416931 -4.87745380401611
3.53999996185303 -4.72102212905884
2.16000008583069 -2.4377613067627
0.540000021457672 -0.599174797534943
0.480000019073486 -0.60036039352417
};
\end{axis}

\end{tikzpicture}
    \caption{ISB trajectories (ours)}
    \label{fig:benes-c}    
  \end{subfigure}
   \caption{Comparison of the solution for the SBP (with \Benes SDE reference drift) and the ISB (with zero initial drift) on the \Benes SDE under sparse observations (\protect\tikz[baseline=-.5ex]\protect\node[circle,draw=red,fill=red!50,inner sep=1.5pt,]{};). The target distribution $\pi_T$ is slightly shifted and scaled from the \Benes SDE. Even if the SBP has the true model as reference drift, its trajectories degenerate into a unimodal distribution, while the ISB manages to cover both modes even if only sparse observations are available.}
  \label{fig:benes}
\end{figure*}

\subsection{The Bird Migration Data Set}
\label{app:birds}
The ISB model learned bird migration trajectories which transport the particles from the Northern Europe summer habitats to the southern winter habitats, see \cref{fig:birds-app} for a comparison of a Schrödinger bridge and ISB. Since the problem lies on a sphere,  Schrödinger bridge methods adjusted for learning on Riemannian manifolds could have been applied here. For simplicity, we mapped the problem to a two-dimensional plane using a Mercator projection and solved the problem on a $[0,5]\times [0, 5]$ square. The SDE had the discretization $t \in [0, 0.99]$, $\Delta_k = 0.01$ and a constant process noise $g(t)^2 = 0.05$. The model was trained for $12$ iterations, and initialized with a zero drift, while the observational data was chosen by the authors to promote learning trajectories clearly different from the unconstrained transport trajectories. The observation noise schedule was piecewise linear (starting at $2$, going to $0.1$ at iteration $6$, then rising linearly to reach $2$ at iteration $12$). At each ISB iteration, the neural networks were trained for $5000$ iterations each, and the trajectories were refreshed every $1000$ iterations. We used a batch size of $256$ and a learning rate of $0.001$.

\subsection{The MNIST Generation Task}
\label{app:mnist}
Applying state-space model approaches such as particle filtering and smoothing to generative diffusion models directly in the observation space (that is, not in a lower-dimensional latent space) has to our knowledge not been explored before. Some experimental design choices had a great impact on the training objectives sensibility, as the observational data is completely artificial and its timing during the process modifies the filtering distribution significantly. As the MNIST conditional generative model was trained to display the scalability of our method beyond low-dimensional toy examples, we did not further explore optimizing the hyperparameters or the observation model.  To avoid the background noise in MNIST images in the middle of the generative process impacting the particle filtering weights excessively, the observation model is a Gaussian with masked inputs equal to zero in pixels where the observation image is black, see \cref{fig:mnist} for sampled trajectories. The figure shows the progression of seven samples, where the lower half of the eight resemble the observation target. 
\par 
The SDE was run for time $t \in [0, 0.5]$, with the digit eight observed at $t=0.38$. The ISB method was applied for $10$ iterations, with a discretization $t \in [0, 0.495]$, $\Delta_k = 0.005$, and the process noise $g(t)^2$ followed a linear schedule from $0.0001$ to $1$. At each iteration of the method, the forward and backward drift neural networks were trained for $5000$ iterations with a batch size of $256$, and the trajectory cache regenerated every $1000$ iterations. The observational data consisted of a single sample of a lower half of the digit eight, observed at time $t=0.38$. The observation noise schedule was a constant $\kappa (l) = 0.3$.

\begin{figure*}[!t]
  \centering \includegraphics[width=\textwidth]{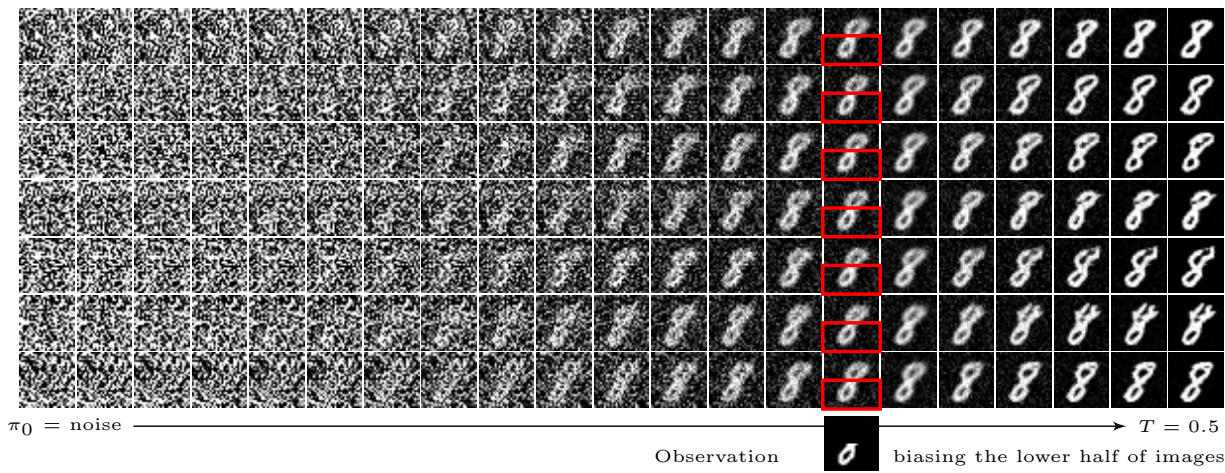}
  \caption{Model trajectories for MNIST digit `$8$' conditioned on a lower-loop of a single `8' at $t=0.38$ to bias the lower half of the digits to look alike, with the effect still visible at terminal time $T$.\looseness-1}
  \label{fig:mnist}
  \vspace*{-1em}
\end{figure*}

\subsection{MNIST for Multi-modal Data Generation}\label{app:mnist-multi}
In addition to the MNIST experiment as explained in \cref{app:mnist}, we evaluated the performance of our model on a multi-modal generation task. The reference model was a Schrödinger bridge trained via IPFP from normal distribution noise to MNIST digits eight and nine, and the observation was a single upper loop of a figure eight. Our goal was to generate trajectories that match the observation but still generate both digits eight and nine by the end of the trajectory. Based on the results in \cref{fig:mnist-multi}, ISB succeeds in the task. Most hyperparameters were kept the same as explained in \cref{app:mnist}, but the cache size if IPFP training was increased to $5000$ and the number of trajectories in the particle filtering step of ISB was increased to $1000$, to encourage a sufficiently versatile sample of both modes.

\begin{figure*}[!t]
  \centering \includegraphics[width=\textwidth]{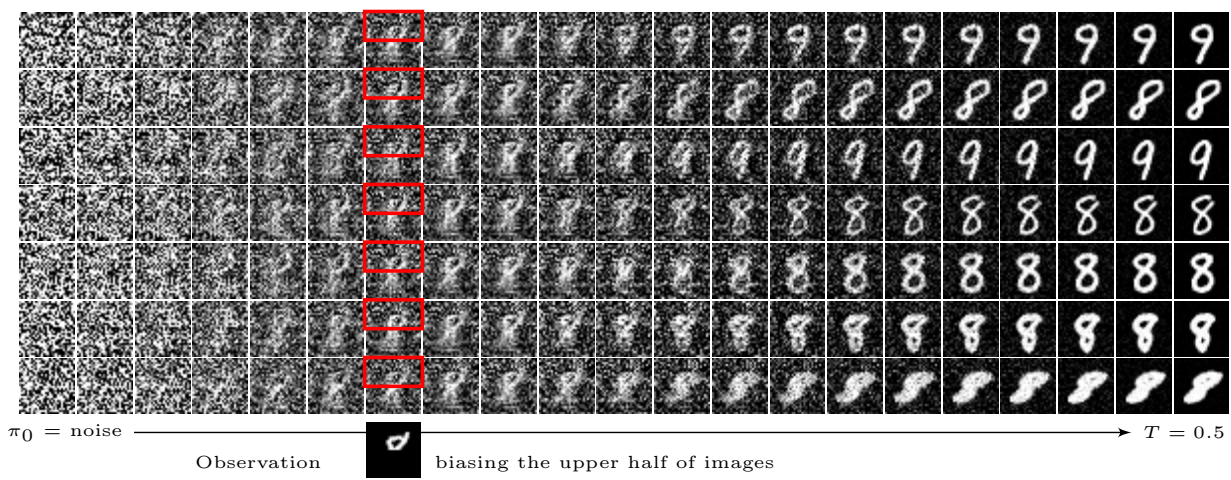}
  \caption{Model trajectories for MNIST digits `$8$' and '$9$' conditioned on a upper-loop of a single `8' at $t=0.38$ to bias the upper half of the digits to look alike, with the effect still visible at terminal time $T$, while observing both modes at the end of the trajectory.}
  \label{fig:mnist-multi}
  \vspace*{-1em}
\end{figure*}
\subsection{Single-Cell Data Set}
\label{app:singlecell}
We directly use the preprocessed data from the TrajectoryNet \citep{tong2020trajectory} repository.
A major difference between our implementation and \citet{vargas2021solving} is the reference drift. We set the reference drift to zero, which means that we utilize the intermediate data only as observations in the state-space model. On the contrary, \citet{vargas2021solving} fits a mixture model of $15$ Gaussians on the combined data set (across all measurement times) and sets the reference drift to the gradient of the log-likelihood of the mixture model. Effectively, such a reference drift aids in keeping the SDE trajectories within the support of the combined data set. We remark that if the intermediate observed marginals had clearly disjoint support, combining all the data would cause the mixture model to have `gaps' and could cause an unstable reference model drift. Thus we consider our approach of setting the reference drift to zero as more generally applicable. 

As in \citet{vargas2021solving}, we set the process noise to $g(t) = 1$ and model the SDE between time $t \in [0, 4]$. The learning rate is set to $0.001$ with a batch size of $256$ and the number of neural network training iterations equal to $5000$. We apply the ISB for $6$ iterations. We perform filtering using $1000$ points from the intermediate data sets, but compute the Earth mover's distance by comparing it to all available data.  As the observational data at $T=1, 2, 3$ consists of a high number of data points, the parameters $H$ (number of nearest neighbours) and $\sigma$ (observation noise) need to be carefully set.
We set $H=10$ to only include the close neighbourhood of each particle and set the observation noise schedule as constant $0.7$.

\section{Computational Considerations }\label{app:ablation}
In \cref{sec:compmethods}, we raised a number of important computational considerations for the constrained transport problem. Below we discuss them in detail, analyzing the limit $L \to \infty$  from the perspective of setting the observation noise schedule in \cref{app:discussobs}, and presenting ablation results on modifying the initial drift in the bird migration experiment in \cref{app:initdrift}. Finally, we study the impact of the observation noise schedule on effective sample size in the filtering step in \cref{app:ess}.

\subsection{Discussion on Observation Noise}\label{app:discussobs}
We briefly mentioned in \cref{sec:compmethods} that when letting $L \to \infty$, the choice of observation noise should be carefully planned in order for the ISB procedure to have a stationary point. Here we explain why an unbounded observation noise schedule $\kappa(l)$ implies convergence to the IPF method for uncontrolled Schrödinger bridges \citep{bortoli2021diffusion}, when using a nearest neighbour bootstrap filter as the proposal density.
\par 
\begin{prop}\label{prop:obsnoise}
Let $\Omega \in \mathbb{R}^d$ be a bounded domain where both the observations and SDE trajectories lie, and let the particle filtering weights $\{w_{l, t_k}^i\}_{i=1}^N$ at ISB iteration $l$ be 
\begin{equation}\label{eq:obsnoiseproof}
  \log w_{l, t_k}^i = -\frac{1}{2\kappa (l)^2} \sum_{\vy_j \in \data^H_{t_k}} \| \vx_{t_k}^i - \vy_j\|^2.
\end{equation}
If the schedule $\kappa(l)$ is unbounded with respect to $l$, then for any $\delta$ there exists $l'$ such that for the normalized weights it holds
\begin{equation}\label{eq:obsnoiseprop}
| \hat{w}_{l', t_k}^i - \frac{1}{N}| \leq \delta.
\end{equation}
\end{prop}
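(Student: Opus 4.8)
The plan is to exploit the boundedness of $\Omega$ to obtain a uniform bound on the summed squared distances, and then to argue that as $\kappa(l)$ grows the unnormalized weights are squeezed into an interval that collapses onto $\{1\}$, which forces the normalized weights toward $\frac{1}{N}$.

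First I would bound the quadratic terms. Since both the particles $\vx_{t_k}^i$ and the observations $\vy_j$ lie in the bounded domain $\Omega$, each summand satisfies $\|\vx_{t_k}^i - \vy_j\|^2 \le \diam(\Omega)^2$. As $|\data^H_{t_k}| \le H$, writing $S_i := \sum_{\vy_j \in \data^H_{t_k}} \|\vx_{t_k}^i - \vy_j\|^2$ gives the uniform bound $0 \le S_i \le B := H\,\diam(\Omega)^2$ for every $i$. Consequently the unnormalized weight $w_{l,t_k}^i = \exp(-S_i/(2\kappa(l)^2))$ lies in the interval $[m_l, 1]$, where $m_l := \exp(-B/(2\kappa(l)^2))$.

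Next I would run a squeeze argument on the normalized weights. Because every unnormalized weight lies in $[m_l, 1]$, the normalizing sum satisfies $N m_l \le \sum_{n=1}^N w_{l,t_k}^n \le N$, so each normalized weight obeys $\frac{m_l}{N} \le \hat{w}_{l,t_k}^i \le \frac{1}{N m_l}$. Subtracting $\frac{1}{N}$ and using $m_l \le 1$ yields the two-sided estimate $|\hat{w}_{l,t_k}^i - \frac{1}{N}| \le \frac{1-m_l}{N m_l}$, uniformly in $i$.

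Finally I would invoke the unboundedness of the schedule. Since $\kappa(l)$ is unbounded in $l$, for any threshold there exists an index $l'$ with $\kappa(l')$ exceeding it; as $\kappa(l') \to \infty$ one has $m_{l'} = \exp(-B/(2\kappa(l')^2)) \to 1$, whence the bound $\frac{1-m_{l'}}{N m_{l'}} \to 0$. Thus, given $\delta > 0$, choosing $l'$ so that $m_{l'}$ is close enough to $1$ (equivalently $\kappa(l')$ large enough) yields $|\hat{w}_{l',t_k}^i - \frac{1}{N}| \le \delta$, as claimed. The argument is essentially elementary; the only point requiring care is the uniform-in-$i$ bound $S_i \le B$, which is precisely where the boundedness of $\Omega$ and the cap $H$ on the number of neighbours enter—without them the weights need not flatten and the conclusion can fail.
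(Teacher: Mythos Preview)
Your proof is correct and follows essentially the same approach as the paper: bound the summed squared distances uniformly by $|\data^H_{t_k}|\,\diam(\Omega)^2$, so the unnormalized weights lie in an interval collapsing onto $\{1\}$ as $\kappa(l)\to\infty$, and then squeeze the normalized weights to $1/N$. The only minor difference is that the paper obtains the upper bound on $\hat w^i$ via the normalization constraint (if every weight is at least $\tfrac{1}{N}-\epsilon$, the largest is at most $\tfrac{1}{N}+(N-1)\epsilon$), whereas you use the direct bound $\hat w^i \le 1/(N m_l)$; your route is a bit cleaner and gives the slightly tighter estimate $\tfrac{1-m_l}{N m_l}$.
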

\begin{proof}
Since $\kappa (l)$ is unbounded, for any $S > 0$ $\exists$ $l'$ such that $\kappa (l') \geq S$. We choose the value of $S$ so that the following derivation yields \cref{eq:obsnoiseprop}.

Let $S = \sqrt{0.5R^{-1}  |\data^H_{t_k}| \diam(\Omega)^2}$, and apply the property that  $\|\vx_{t_k}^i - \vy_j\|^2 \leq \diam(\Omega)^2$ to \cref{eq:obsnoiseproof},
\begin{equation}\label{eq:obsnoiseupper}
\begin{aligned}
 &\log w_{l', t_k}^i \geq -\frac{1}{2S^2} \sum_{\vy_j \in \data^H_{t_k}} \| \vx_{t_k}^i - \vy_j\|^2  \\
& \geq -\frac{\sum_{\vy_j \in \data^H_{t_k}} \| \vx_{t_k}^i - \vy_j\|^2 }{R^{-1}  |\data^H_{t_k}| \diam(\Omega)^2}  \geq  -\frac{\sum_{\vy_j \in \data^H_{t_k}} \diam(\Omega)^2}{R^{-1}  |\data^H_{t_k}| \diam(\Omega)^2}  \geq- R.
 \end{aligned}
\end{equation}
The bound above is for the unnormalized weights, and the normalized log-weights are defined as
\begin{equation}\label{eq:weightnormalize}
 \log \hat{w}_{l', t_k}^i  =  \log w_{l', t_k}^i  -\log \bigg( \sum_{j=1}^N  \exp(\log w_{l', t_k}^j)\bigg),
\end{equation}
where for the normalizing constant it holds that
\begin{equation}\label{eq:obsnoiselower}
\log \bigg( \sum_{j=1}^N  \exp(\log w_{l', t_k}^j)\bigg) \leq \log\bigg( \sum_{j=1}^N  1\bigg) = \log(N),
\end{equation}
since $w_{l', t_k}^j$ is the value of a probability density and thus always $w_{l', t_k}^j \leq 1$. Combining \cref{eq:weightnormalize}, \cref{eq:obsnoiseupper} and \cref{eq:obsnoiselower}, it follows that
\begin{equation}
 \log \hat{w}_{l', t_k}^i - (-\log(N)\geq - R,
\end{equation}
where taking exponentials on both sides gives
\begin{equation}
  \hat{w}_{l', t_k}^i - \frac{1}{N} \geq - (1 - \exp(-R))\frac{1}{N} .
\end{equation}
Since the weights are normalized, even the largest particle weight $\hat{w}_{l', t_k}^j $ can differ from $\frac{1}{N}$ as much as every smaller weight in total lies under $\frac{1}{N}$,
\begin{equation}
 \hat{w}_{l', t_k}^j \leq \frac{1}{N}+ (N-1)\bigg((1 - \exp(-R))\frac{1}{N}\bigg),
\end{equation}
implying that for any weight $ \hat{w}_{l', t_k}^j$, it holds that
\begin{equation}\label{eq:obsabs}
| \hat{w}_{l', t_k}^j  -  \frac{1}{N} | \leq  (N-1)\bigg((1 - \exp(-R))\frac{1}{N}\bigg) \leq 1 - \exp(-R),
\end{equation}
and selecting $R = -\log (1- \delta)$ is sufficient for $\delta < 1$. 
\end{proof}

\begin{figure}[t]
  \centering\scriptsize
  \begin{tikzpicture}

      \setlength{\figurewidth}{.196\textwidth}
      \newcommand{\birds}[1]{\includegraphics[width=.95\figurewidth]{./fig/birds#1}}
      \newcommand{\dove}{\includegraphics[width=1em]{./fig/dove}}

      \tikzset{cross/.style={cross out, draw=black, minimum size=2*(#1-\pgflinewidth), inner sep=0pt, outer sep=0pt, line width=1pt}, cross/.default={3pt}}

      \node (a) at (0\figurewidth,0) {\birds{-july}};
      \node (b) at (1\figurewidth,0) {\birds{}};
      \node (c) at (2\figurewidth,0) {\birds{}};
      \node (d) at (3\figurewidth,0) {\birds{}};
      \node (e) at (4\figurewidth,0) {\birds{-january}};

      \tikzstyle{label} = [anchor=south west,inner sep=2pt,outer sep=2pt,draw=black,rounded corners=2pt,fill=white,minimum width=1cm,align=center]
      \node[label] (la) at (a.north west) {Summer};
      \node[label,anchor=south east] (lc) at (e.north east) {Winter};
      \node[label,anchor=south] (lb) at (c.north) {Intermediate observations during migration};
      \draw[black,-latex',line width=1.2pt] (la) -- node[fill=white]{\tiny \dove} (lb) -- node[fill=white]{\tiny \dove} (lc);

      \node[rotate=90] at (-.52\figurewidth,0) {Bird observations};
      \node[rotate=90] at (-.52\figurewidth,-\figurewidth) {Unconditional};
      \node[rotate=90] at (-.52\figurewidth,-2\figurewidth) {Conditional (zero drift)};
      \node[rotate=90] at (-.52\figurewidth,-3\figurewidth) {Conditional (transport drift)};

      \foreach \x/\y [count=\i] in {.65/.5,.55/.45} { %
        \node[cross] at ({.5\figurewidth+.95*\x*\figurewidth},{-.5\figurewidth+\y\figurewidth}) {\tiny};%
.       }

 \foreach \x/\y [count=\i] in {.5/.35} { %
        \node[cross] at ({1.5\figurewidth+.95*\x*\figurewidth},{-.5\figurewidth+\y\figurewidth}) {\tiny};%
.       }

 \foreach \x/\y [count=\i] in {.35/.3,.3/.2} { %
        \node[cross] at ({2.5\figurewidth+.95*\x*\figurewidth},{-.5\figurewidth+\y\figurewidth}) {\tiny};%
.       }

      \newcommand{\result}[1]{\includegraphics[width=.95\figurewidth]{./fig/birds/ot_model_europe_#1}}      
      \foreach \x [count=\i from 0] in {start,25,50,75,end}
        \node at (\i\figurewidth,-\figurewidth) {\result{\x}};

      \renewcommand{\result}[1]{\includegraphics[width=.95\figurewidth]{./fig/birds/fit_model_europe_#1}}      
      \foreach \x [count=\i from 0] in {start,25,50,75,end}
        \node at (\i\figurewidth,-2\figurewidth) {\result{\x}};
        
        \renewcommand{\result}[1]{\includegraphics[width=.95\figurewidth]{./fig/birds/alt-init_europe_#1}}      
      \foreach \x [count=\i from 0] in {start,25,50,75,end}
        \node at (\i\figurewidth,-3\figurewidth) {\result{\x}};
      
  \end{tikzpicture} 
  \caption{Top row: The first map image on the left describes the initial position of the birds, and the final one on the right depicts their position after migration. The observational data in the middle are bird observations during migration, at given timestamps. Second row: Marginal densities of a Schrödinger bridge model from the initial to terminal distribution, without using the observations. Third row: Marginal densities of our model, using both initial and terminal distributions and observational data and a zero drift initialization. Bottom row: Same as the third row, but with the second-row dynamics as initialization. }
  \label{fig:birds-app}
\end{figure}

Effectively, the above derivation implies that for an unbounded observation noise schedule $\kappa (l)$, the particle weights will converge to uniform weights. Since performing differentiable resampling on uniform weights implies that $\MT_{(\epsilon)} = \MI_{N}$, the ISB method trajectory generation step and the objective in training the backward drift converge to those of the IPF method for solving unconstrained Schrödinger bridges. Intuitively, this means that at the limit $L \to \infty$, our method will focus on reversing the trajectories and matching the terminal distribution while not further utilizing information from the observations.

\subsection{Ablation on Initial Drift}\label{app:initdrift}
We conducted an ablation study on drift initialization for the bird migration problem. As the distributions $\pi_0$ and $\pi_T$ (as pictured in \cref{fig:birds-app}) are complex, we consider the problem setting to be interesting for setting $f_0$ as the unconstrained transport problem drift. To this end, we trained a Schrödinger bridge model for $10$ epochs, and trained an ISB model with the same hyperparameter selections as explained in \cref{app:birds}, using the Schrödinger bridge as the initialization. Compare the two bottom rows of \cref{fig:birds-app} to see a selection of marginal densities of the two processes. Based on a visual analysis of the densities, it seems that the zero drift and pre-trained diffusion model initializations produce similar results around the observations, although the Schrödinger bridge initialization gave slightly sharper results at the terminal time.

\subsection{Effective Sample Size}\label{app:ess}
We studied the Effective Sample Size (ESS) of the particles generated during the filtering steps \circled{1} and \circled{3}. Ideally, our method would steer the particle trajectories near the observations quickly over the iterations, to allow for efficient training. Furthermore, as we additionally constrain the system to match the initial and terminal distributions, it is desired that the ESS will eventually rise to a high number, indicating that the particle filtering step no longer greatly adjusts the trajectories. In order to assess these properties, we computed the ESS in the scikit-learn {\sc two circles} experiment, see \cref{app:2dtoy} for other experimental details.  In \cref{tbl:ess}, we have reported three scenarios: {\em (i)}~an increasing noise schedule without learning of the drift function, {\em (ii)}~an ISB model with constant observation noise, {\em (iii)}~the proposed ISB model with the increasing noise schedule. Note that the first scenario only serves as a control to better isolate the impact of increasing noise in ISB. \cref{tbl:ess} indicates that our approach succeeds in directing the particle trajectories and consistently obtains higher ESS even when comparing the constant noise level ISB to an increasing noise level but no learning. 

\begin{table}[ht!]
  \centering \footnotesize
  \caption{Average and standard deviation of effective sample size (ESS) results over five seeds for the scikit-learn {\sc two circles} experiment, developing over six ISB iterations with $1000$ particles. We observe a rapid increase in ESS when the trajectories are steered towards the observations in the optimization step of the ISB compared to only increasing the observing noise, and see a diminishing impact of the filtering step as ESS nears $1000$. \label{tbl:ess}}
  \renewcommand{\tabcolsep}{4pt}
  \begin{tabularx}{.85\linewidth}{l c c c c c c}
    \toprule
    & \multicolumn{5}{c}{\sc ESS / Iteration}\\
    \sc Noise schedule & $l=1$ & $l=2$ & $l=3$ & $l=4$ & $l=5$  & $l=6$\\ 
    \midrule
 
    No learning and noise schedule $0.5 \times 1.25^{l-1}$ & $6 \pm 2$ & $9 \pm 2$ & $14 \pm 3$ & $27 \pm 5$ & $57 \pm 8$ & $121 \pm 11$\\
    ISB with a constant noise schedule $0.5$  & $6 \pm 2 $ & $66 \pm 21$ & $165 \pm 9$  & $300 \pm 24$ & $456 \pm 37$ & $564 \pm 46$\\
    ISB with a noise schedule $0.5 \times 1.25^{l-1}$ & $6 \pm 2$ & $ 92 \pm 21$ & $250 \pm 14$ & $442 \pm 39$ & $593 \pm 64 $ & $700 \pm 39$\\
    
    \bottomrule
  \end{tabularx}  
\end{table}

\section{Differentiable Resampling}\label{app:diff}
In the ISB model steps \circled{1} and \circled{3} presented in \cref{sec:steps}, we applied differentiable resampling \citep[see][]{corenflos2021diffpf}. Resampling itself is a basic block of particle filtering. A differentiable resampling step transports the particles and weights $(\tilde{\vx}_{t_k}^i, w_{t_k}^i)$  to a uniform distribution over a set of particles through applying the \emph{differentiable} ensemble transport map $\MT_{(\epsilon)}$, that is
\begin{equation}
   (\tilde{\vx}_{t_k}^i, w_{t_k}^i) \to (\tilde{\MX}^{\T}_{t_k} \, \MT_{(\epsilon),i}, \nicefrac{1}{N}) = (\vx_{t_k}^i, \nicefrac{1}{N}),
\end{equation}
where $\tilde{\MX}_{t_k} \in \R^{N \times d}$ denotes the stacked particles $\{\tilde{\vx}_{t_k}^i \}_{i=1}^N$ at time $t_k$ before resampling and $\vx_{t_k}^i$ denotes the particles post resampling. 
Here we give the definition of the map $\MT_{(\epsilon)}$ and review the regularized optimal transport problem which has to be solved to compute it. We partly follow the presentation in Sections 2 and 3 of  \citet{corenflos2021diffpf}, but directly apply the notation we use for particles and weights and focus on explaining the transport problem rather than the algorithm used to solve it.

The standard particle filtering resampling step consists of sampling $N$ particles from the categorical distribution defined by the weights $\{ w_{t_k}^i\}_{i=1}^N$, resulting in the particles with large weights being most likely to be repeated multiple times. A result from \citet{reich2013nonparametric} gives the property that the random resampling step can be approximated by a deterministic ensemble transform $\MT$.  In heuristic terms, the ensemble transform map will be selected so that the particles $\{ \vx_{t_k}^i\}_{i=1}^N$ will be transported with minimal cost, while allowing all the weights to be uniform.

Let $\mu$ and $\nu$ be atomic measures,  $\mu = \sum_{i=1}^N w_{t_k}^i \delta_{\tilde{\vx}_{t_{k}}^i}$ and $\nu = \sum_{i=1}^N N^{-1}\delta_{\tilde{\vx}_{t_k}^i}$, where $\delta_x$ is the Dirac delta at $x$. Then $\mu$ is the particle filtering distribution before resampling.
Define the elements of a cost matrix $\MC \in \mathbb{R}^{N \times N}$ as $C_{i, j} = \|\tilde{\vx}_{t_k}^i - \tilde{\vx}_{t_k}^j \|^2$, and the 2-Wasserstein distance between two atomic measures as
\begin{equation}
  \mathcal{W}_{2}^2(\mu, \nu) = \min_{P \in S(\mu, \nu)}  \sum_{i=1}^N \sum_{j=1}^N C_{i, j} P_{i, j}.
\end{equation}
Above the optimal matrix $\MP$ is to be found within $S(\mu, \nu)$, which is a space consisting of mixtures of $N$ particles to $N$ particles
such that the marginals coincide with the weights of $\mu$ and $\nu$, formally
\begin{equation}
  S( \mu, \nu ) = \left\{\MP \in [0, 1]^{N \times N} \mid \sum_{i=1}^N P_{i, j} = w_{t_k}^i,  \sum_{j=1}^N P_{i, j} =  \frac{1}{N} \right\}.
\end{equation} 

The entropy-regularized Wasserstein distance with regularization parameter $\epsilon$ is then
\begin{equation}
\mathcal{W}^2_{2, \epsilon} = \min_{\MP \in S(\mu, \nu)}\sum_{i=1}^N \sum_{j=1}^N P_{i, j}\left( C_{i, j}+ \epsilon \log \frac{P_{i, j}}{w_{t_k}^i \cdot \frac{1}{N}} \right).
\end{equation}
The unique minimizing transport map of the above Wasserstein distance is denoted by $\MP_{\epsilon}^\text{OPT}$, and the ensemble transport map is then set as $ \MT_{(\epsilon)} = N \MP_{\epsilon}^\text{OPT}$. This means that we can find the matrix $\MT_{(\epsilon)}$ via minimizing the regularized Wasserstein distance, which is done by applying the iterative Sinkhorn algorithm for entropy-regularized optimal transport \citep{cuturi2013sinkhorn}.

\end{document}